\documentclass[a4paper]{article}[11pts]

%% Language and font encodings
\usepackage[english]{babel}
\usepackage[utf8x]{inputenc}
\usepackage[T1]{fontenc}

\normalsize

%% Sets page size and margins
\usepackage[a4paper,top=1in,bottom=1in,left=1in,right=1in,marginparwidth=1in]{geometry}

%% Useful packages
\usepackage{setspace}
\usepackage{amsmath}
\usepackage{amssymb} 
\usepackage{amsthm}
\usepackage{amsfonts, mathtools}
\usepackage{algorithm,algpseudocode}
\usepackage{bm}
\usepackage{bbm}
\usepackage{comment}
\usepackage{graphicx}
\usepackage{multirow, booktabs}
\usepackage{caption}
\usepackage{subcaption}
\usepackage[colorinlistoftodos]{todonotes}
\usepackage[colorlinks=true, allcolors=blue]{hyperref}

\newcommand{\E}{\mathbb{E}}

\DeclareMathOperator*{\argmin}{arg\,min}

\newtheorem{proposition}{Proposition}
\newtheorem{lemma}{Lemma}

\newtheorem{assumption}{Assumption}
\newtheorem{theorem}{Theorem}
\newtheorem{corollary}{Corollary}
\newtheorem{definition}{Definition}
\newtheorem{claim}{Claim}

\usepackage{natbib}

\usepackage{natbib}

\title{Non-stationary Bandits with Knapsacks}
\author{}
\date{}

\author{Shang Liu$^\dagger$, Jiashuo Jiang$^\ddagger$, Xiaocheng Li$^\dagger$}
\date{\small 
$^\dagger$Imperial College Business School, Imperial College London\\
$^\ddagger$NYU Stern School of Business\\
s.liu21@imperial.ac.uk, jj2398@stern.nyu.edu, xiaocheng.li@imperial.ac.uk}

\begin{document}
\maketitle

\onehalfspacing

\begin{abstract}
In this paper, we study the problem of bandits with knapsacks (BwK) in a non-stationary environment. The BwK problem generalizes the multi-arm bandit (MAB) problem to model the resource consumption associated with playing each arm. At each time, the decision maker/player chooses to play an arm, and s/he will receive a reward and consume certain amount of resource from each of the multiple resource types. The objective is to maximize the cumulative reward over a finite horizon subject to some knapsack constraints on the resources. Existing works study the BwK problem under either a stochastic or adversarial environment. Our paper considers a non-stationary environment which continuously interpolates these two extremes. We first show that the traditional notion of variation budget is insufficient to characterize the non-stationarity of the BwK problem for a sublinear regret due to the presence of the constraints, and then we propose a new notion of global non-stationarity measure. We employ both non-stationarity measures to derive upper and lower bounds for the problem. Our results are based on a primal-dual analysis of the underlying linear programs and highlight the interplay between the constraints and the non-stationarity. Finally, we also extend the non-stationarity measure to the problem of online convex optimization with constraints and obtain new regret bounds accordingly.
\end{abstract}

\section{Introduction}
\label{sec_intro}

The multi-armed bandit (MAB) problem characterizes a problem for which a limited amount of resource must be allocated between competing (alternative) choices in a way that maximizes the expected gain. The \textit{bandits with knapsacks} (BwK) problem generalizes the multi-armed bandits problem to allow more general resource constraints structure on the decisions made over time, in addition to the customary limitation on the time horizon. Specifically, for the BwK problem, the decision maker/player chooses to play an arm at each time period; s/he will receive a reward and consume certain amount of resource from each of the multiple resource types. Accordingly, the objective is to maximize the cumulative reward over a finite time horizon and subject to an initial budget of multiple resource types. The BwK problem was first introduced by \cite{badanidiyuru2013bandits} as a general framework to model a wide range of applications, including dynamic pricing and revenue management \citep{besbes2012blind}, Adwords problem \citep{mehta2005adwords} and more. 

The standard setting of the BwK problem is stochastic where the joint distribution of reward and resource consumption for each arm remains stationary (identical) over time. Under such setting, a linear program (LP), that takes the expected reward and resource consumption of each arm as input, both serves as the benchmark for regret analysis and drives the algorithm design \citep{badanidiyuru2013bandits, agrawal2014bandits}. Notably, a static best distribution prescribed by the LP's optimal solution is used for defining the regret benchmark. An alternative setting is the adversarial BwK problem where the reward and the consumption may no long follow a distribution and they can be chosen arbitrarily over time. Under the adversarial setting, a sublinear regret is not achievable in the worst case; \cite{immorlica2019adversarial} derive a $O(\log T)$ competitive ratio against the static best distribution benchmark which is aligned with the static optimal benchmark in the adversarial bandits problem \citep{auer1995gambling}. Another key of the BwK problem is the number of resource types $d$. When $d=1$, one optimal decision is to play the arm with largest (expected) reward to (expected) resource consumption ratio, where the algorithm design and analysis can be largely reduced to the MAB problem. When $d>1$, the optimal decision in general requires to play a combination of arms (corresponding the optimal basis of the underlying LP). \cite{rangi2018unifying} focus on the case of $d=1$ and propose an EXP3-based algorithm that attains a regret of $O(\sqrt{m B \log m})$ against the best fixed distribution benchmark. Their result thus bridges the gap between the stochastic BwK problem and the adversarial BwK problem for the case of $d=1$. The difference between the cases of $d=1$ and $d>1$ is also exhibited in the derivation of problem-dependent regret bounds for the stochastic BwK problem \citep{flajolet2015logarithmic, li2021symmetry, sankararaman2021bandits}.

In this paper, we study the non-stationary BwK problem where the reward and the resource consumption at each time are sampled from a distribution as the stochastic BwK problem but the distribution may change over time. The setting relaxes the temporally i.i.d. assumption in the stochastic setting and it can be viewed as a soft measure of adversity. We aim to relate the non-stationarity (or adversity) of the distribution change with the best-achievable algorithm performance, and thus our result bridges the two extremes of BwK problem: stochastic BwK and adversarial BwK. We consider a dynamic benchmark to define the regret; while such a benchmark is aligned with the dynamic benchmark in other non-stationary learning problem \citep{besbes2014stochastic, besbes2015non, cheung2019learning, faury2021regret}, it is stronger than the static distribution benchmark in adversarial BwK \citep{rangi2018unifying, immorlica2019adversarial}. Importantly, we use simple examples and lower bound results to show that the traditional non-stationarity measures such as change points and variation budget are not suitable for the BwK problem due to the presence of the constraints. We introduce a new non-stationarity measure called \textit{global variation budget} and employ both of this new measure and the original variation budget to capture the underlying non-stationarity of the BwK problem. We analyze the performance of a sliding-window UCB-based BwK algorithm and derive a near-optimal regret bound. Furthermore, we show that the new non-stationarity measure can also be applied to the problem of \textit{online convex optimization with constraints} (OCOwC) and extend the analyses therein.

\subsection{Related literature}

The study of non-stationary bandits problem begins with the change-point or piecewise-stationary setting where the distribution of the rewards remains constant over epochs and changes at unknown time instants \citep{garivier2008upper, yu2009piecewise}. The prototype of non-stationary algorithms such as discounted UCB and sliding-window UCB are proposed and analyzed in \citep{garivier2008upper} to robustify the standard UCB algorithm against the environment change. The prevalent variation budget measure $V=\sum_{t=1}^{T-1}\|\mathcal{P}_t-\mathcal{P}_{t+1}\|$ (where $\mathcal{P}_t$ and the norm bear different meaning under different context) is later proposed and widely studied under different contexts, such as non-stationary stochastic optimization (\cite{besbes2015non}), non-stationary MAB (\cite{besbes2014stochastic}), non-stationary linear bandits (\cite{cheung2019learning}), and non-stationary generalized linear bandits (\cite{faury2021regret}) problems. In general, these works derive lower bound of $\Omega(V^{\frac{1}3} T^{\frac{2}3})$, and propose algorithms that achieve near-optimal regret of $\tilde{O}(V^{\frac{1}3} T^{\frac{2}3}).$ \cite{cheung2019learning} and \cite{faury2021regret} require various assumption on the decision set to attain such upper bound; under more general conditions, a regret bound of $\tilde{O}(V^{\frac{1}5} T^{\frac{4}5})$ can be obtained \citep{faury2021regret}. With the soft measure of non-stationarity, the existing results manage to obtain sublinear regret bounds in $T$ against dynamic optimal benchmarks. In contrast, a linear regret in $T$ is generally inevitable against the dynamic benchmark when the underlying environment is adversarial. We remark that while all these existing works consider the unconstrained setting, our work complements this line of literature with a proper measure of non-stationarity in the constrained setting.

Another related stream of literature is the problem of online convex optimization with constraints (OCOwC) which extends the OCO problem in a constrained setting. There are two types of constraints considered: the long-term constraint  \citep{jenatton2016adaptive,neely2017online} and the cumulative constraint \citep{yuan2018online,yi2021regret}. The former defines the constraint violation by $\|(\sum_{t=1}^T \bm g_t(x_t))^+\|$ whilst the latter defines it by $\sum_{t=1}^T\|(\bm g_t(x_t))^+\|$ where $(\cdot)^+$ is the positive-part function. The existing works mainly study the setting where $\bm{g}_t=\bm{g}$ for all $t$ and $\bm{g}$ is known a priori. \cite{neely2017online} considers a setting where $\bm{g}_t$ is i.i.d. generated from some distribution. In this paper, we show that our non-stationarity measure naturally extends to this problem and derives bounds for OCOwC when $\bm{g}_t$'s are generated in a non-stationary manner.

A line of works in operations research and operations management literature also study non-stationary environment for online decision making problem under constraints \citep{ma2020approximation, freund2019good, jiang2020online}. The underlying problem along this line can be viewed as a full-information setting where at each time $t$, the decision is made after the observation of the function/realized randomness/customer type, while the BwK and OCOwC can be viewed as a partial-information setting where the decision is made prior to and may affect the observation. So for the setting along this line, there is generally no need for exploration in algorithm design, and the main challenge is to trade off the resource consumption with the reward earning. 
\section{Problem Setup}
\label{sec_prel}

We first introduce the formulation of the BwK problem. The decision-maker/learner is given a fixed finite set of arms $\mathcal{A}$ (with $|\mathcal{A}| = m$) called as \emph{action set}. There are $d$ knapsack constraints with a known initial budget of $B_j$ for $j\in[d]$. Without loss of generality, we assume $B_j=B$ for all $j.$ There is a finite time horizon $T$, which is also known in advance. At each time $t=1,...,T$, the learner must choose either to play an arm $i_t$ or to do nothing but wait. If the learner plays the arm $i$ at time $t$, s/he will receive a reward $r_{t, i} \in [0,1]$ and consume $c_{t,j,i} \in [0,1]$ amount of each resource $j$ from the initial budget $B$. As the convention, we introduce a \emph{null arm} to model ``doing nothing'' which generates a reward of zero and consumes no resource at all. We assume $(\bm{r}_t,\bm{c}_t)$ is sampled from some distribution $\mathcal{P}_t$ independently over time where $\bm{r}_t=\{r_{t,i}\}_{i\in[m]}$ and $\bm{c}_t=\{c_{t,j,i}\}_{i\in[m], j\in[d]}$. In the stochastic BwK problem, the distribution $\mathcal{P}_t$ remains unchanged over time, while in the adversarial BwK problem, $\mathcal{P}_t$ is chosen adversarially. In our paper, we allow $\mathcal{P}_t$ to be chosen adversarially, while we use some non-stationarity measure to control the extent of adversity in 
choosing $\mathcal{P}_t$'s.

At each time $t$, the learner needs to pick $i_t$ using the past observations until time $t-1$ but without observing the outcomes of time step $t$. The resource constraints are assumed to be \emph{hard} constraints, i.e., the learner must stop at the earliest time $\tau$ when at least one constraint is violated, i.e. $ \sum_{t=1}^{\tau} c_{t,j,i_t} > B$, or the time horizon $T$ is exceeded. The objective is to maximize the expected cumulative reward until time $\tau$, i.e. $\mathbb{E}[\sum_{t=1}^{\tau - 1} r_{t,i_t}]$. To measure the performance of a learner, we define the regret of the algorithm/policy $\pi$ adopted by the learner as 
$$\mathrm{Reg}(\pi, T) \coloneqq \mathrm{OPT}(T) - \mathbb{E}\left[\sum_{t=1}^{\tau - 1} r_{t,i_t} \middle| \pi\right].$$
Here $\text{OPT}(T)$ denotes the expected cumulative reward of the optimal dynamic policy given all the knowledge of $\mathcal{P}_t$'s in advance. Its definition is based on the dynamic optimal benchmark which allows the arm play decisions/distributions to change over time. As a result, it is stronger than the optimal fixed distribution benchmark used in the adversarial BwK setting \citep{rangi2018unifying, immorlica2019adversarial}.

\subsection{A Motivating Example}

\label{motivate_eg}

The conventional variation budget is defined by 
$$V_T \coloneqq \sum_{t=1}^{T-1} \text{dist}(\mathcal{P}_t,\mathcal{P}_{t+1}).$$
By twisting the definition of the metric $\text{dist}(\cdot,\cdot)$, it captures many of the existing non-stationary measures for unconstrained learning problems. Now we use a simple example to illustrate why $V_T$ no longer fits for the constrained setting. Similar examples have been used to motivate algorithm design and lower bound analysis in \citep{golrezaei2014real, cheung2019learning,jiang2020online}, but have not been yet be exploited in a partial-information setting such as bandits problems.

Consider a BwK problem instance that has two arms (one actual arm and one null arm), and a single resource constraint with initial capacity of $\frac{T}{2}$. Without loss of generality, we assume $T$ is even. The null arm has zero reward and zero resource consumption throughout the horizon, and the actual arm always consumes 1 unit of resource (deterministically) for each play and outputs 1 unit of reward (deterministically) for the first half of the horizon, i.e., when $t=1,...,\frac{T}{2}.$ For the second half of the horizon $t=\frac{T}{2}+1,...,T$, the reward of the actual arm will change to either $1+\Delta$ or $1-\Delta$, and the change happens adversarially. For this problem instance, the distribution $\mathcal{P}_t$ only changes once, i.e., $V_T = \Delta$ (varying up to constant due to the metric definition). But for this problem instance, a regret of $\frac{T\cdot \Delta }{4}$ is inevitable. To see this, if the player plays the actual arm no less than $\frac{T}{4}$ times, then the distributions of the second half can adversarially change to the reward $1+\Delta$, and this will result in a $\frac{T\cdot \Delta }{4}$ regret at least. The same for the case of playing the actual arm for the case of no more than $\frac{T}{4}$ times, and we defer the formal analysis to the proof of the lower bounds in Theorem \ref{thm:lower}. 

This problem instance implies that a sublinear dependency on $T$ cannot be achieved with merely the variation budget $V_T$ to characterize the non-stationarity. Because with the presence of the constraint(s), the arm play decisions over time are all coupled together not only through the learning procedure, but also through the ``global'' resource constraint(s). For the unconstrained problems, the non-stationarity affects the effectiveness of the learning of the system; for the constrained problems, the non-stationarity further challenges the decision making process through the lens of the constraints. 

\subsection{Non-stationarity Measure and Linear Programs}

We denote the expected reward vector as $\bm{\mu}_t=\{\mu_{t,i}\}_{i\in[m]}$ and the expected consumption matrix as $\bm{C}_t=\{C_{t,j,i}\}_{j\in[d], i\in[m]}$, i.e.,
$$\mu_{t,i} \coloneqq \mathbb{E}[r_{t,i}], \quad C_{t,j,i} \coloneqq \mathbb{E}[c_{t,j,i}].$$

We first follow the conventional variation budget and define the \textit{local non-stationarity budget}:
\footnote{Throughout the paper, for a vector $\bm v \in \mathbb{R}^n$, we denote its $L_1$ norm and $L_\infty$ norm by
$ \|\bm v\|_{1} \coloneqq \sum_{i=1}^n |v_i|, $
$ \|\bm v\|_{\infty} \coloneqq \max_{1\leq i\leq n} |v_i|. $
For a matrix $\bm M \in \mathbb{R}^{m\times n}$, we denote its $L_1$ norm and $L_\infty$ norm by
$\|\bm M\|_{1} \coloneqq \sup_{\bm x\neq 0} \frac{\|\bm M \bm x\|_{1}}{\|\bm x\|_1} = \max_{1\leq j \leq n}\sum_{i=1}^m |M_{ij}|,$
$\|\bm M\|_{\infty} \coloneqq \sup_{\bm x \neq 0} \frac{\|\bm M \bm x\|_{\infty}}{\|\bm x\|_{\infty}} = \max_{1\leq i \leq m} \sum_{j=1}^n |M_{ij}|. $}
\begin{align*}
V_1 & \coloneqq \sum_{t = 1}^{T-1} \| \bm{\mu}_t - \bm \mu_{t+1} \|_{\infty},\\
V_{2,j} & \coloneqq \sum_{t = 1}^{T-1} \| \bm{C}_{t,j} - \bm{C}_{t+1,j} \|_{\infty},\ \ V_2 \coloneqq \max_{1 \le j\leq d} V_{2,j}.
\end{align*}
We refer to the measure as a local one in that they capture the local change of the distributions between time $t$ and time $t+1$. 

Next, we define the \emph{global non-stationarity budget}:
\begin{align*}
    W_1 &\coloneqq \sum_{t = 1}^T \| \bm{\mu}_t - \bar{\bm{\mu}} \|_{\infty},\\
    W_2 &\coloneqq \sum_{t = 1}^T \| \bm{C}_{t} - \bar{\bm{C}} \|_1,
\end{align*}
where $\bar{\bm{\mu}}=\frac{1}{T}\sum_{t=1}^T \bm{\mu}_t$ and $\bar{\bm{C}}=\frac{1}{T}\sum_{t=1}^T \bm{C}_t$. These measures capture the total deviations for all the $\bm{\mu}_t$'s and $\bm{C}_t$ from their global averages. By definition, $W_1$ and $W_2$ upper bound $V_1$ and $V_{2}$ (up to a constant), so they can be viewed as a more strict measure of non-stationarity than the local budget. In the definition of $W_2$, the L$_1$ norm is not essential and it aims to sharpen the regret bounds (by corresponding to the upper bound on the dual optimal solution in supremum norm to be defined shortly).

All the existing analyses of the BwK problem utilize the underlying linear program (LP) and establish the LP's optimal objective value as an upper bound of the regret benchmark OPT$(T)$. In a non-stationary environment, the underlying LP is given by
\begin{align*}
    \mathrm{LP}\left(\{\bm \mu_t\}, \{\bm C_t\}, T\right) \ \coloneqq \ & \max_{\bm x_1,\dots,\bm x_T} \ \sum_{t=1}^T \bm \mu_t^\top \bm x_t \\
    & \text{s.t. } \sum_{t=1}^T \bm C_t \bm x_t \le \bm B,\quad \bm x_t \in \Delta_m,\ t = 1,\dots, T,
\end{align*}
where $\bm{B}=(B,...,B)^\top$ and $\Delta_m$ denotes the $m$-dimensional standard simplex. We know that $$\mathrm{LP}(\{\bm \mu_t\}, \{\bm C_t\}, T) \geq \mathrm{OPT(T)}.$$
In the rest of our paper, we will use $\mathrm{LP}(\{\bm \mu_t\}, \{\bm C_t\}, T)$ for the analysis of regret upper bound. We remark that in terms of this LP upper bound, the dynamic benchmark allows the $\bm{x}_t$ to take different values, while the static benchmark will impose an additional constraint to require all the $\bm{x}_t$ be the same.

For notation simplicity, we introduce the following linear growth assumption. All the results in this paper still hold without this condition.

\begin{assumption}[Linear Growth]
\label{assumption:linear}
We have the resource budget $B=bT$ for some $b>0$.
\end{assumption}

Define the single-step LP by
\begin{align*}
    \mathrm{LP}(\bm \mu, \bm C) \ \coloneqq \ & \max_{\bm x} \ \bm \mu^\top \bm x \\
    & \text{s.t. } \bm C \bm x \le \bm{b}, \quad \bm x \in \Delta_m.
\end{align*}
where $\bm{b}=(b,...,b)^\top.$ The single-step LP's optimal objective value can be interpreted as the single-step optimal reward under a normalized resource budget $\bm{b}$. 

Throughout the paper, we will use the dual program and the dual variables to relate the resource consumption with the reward, especially for the non-stationary environment. The dual of the benchmark LP$(\{\bm \mu_t\}, \{\bm C_t\}, T)$ is
\begin{align*}
    \text{DLP}(\{\bm \mu_t\}, \{\bm C_t\}) \coloneqq \min_{\bm q, \bm \alpha} \ & T \cdot \bm{b}^\top \bm q + \sum_{t=1}^T \alpha_t\\
    \text{s.t. }& \bm \mu_t - \bm C_t^\top \bm q - \alpha_t \cdot \mathbf{1}_m \leq 0, \quad t = 1,\dots, T,\\
    & \bm q \ge 0
\end{align*}
where $\mathbf{1}_m$ denotes an $m$-dimensional all-one vector. Here we denotes one optimal solution as $(\bm{q}^*, \bm{\alpha}^*).$

The dual of the single-step LP$(\bm \mu_t, \bm C_t)$ is
\begin{align*}
    \text{DLP}(\bm \mu_t, \bm C_t) \coloneqq \min_{\bm q, \alpha} \ & \bm b^\top \bm q + \alpha\\
    \text{s.t. }& \bm \mu_t - \bm C_t^\top \bm q - \alpha \cdot \mathbf{1}_m \leq 0,\\
    & \bm q \ge 0.
\end{align*}
Here we denotes one optimal solution as $(\bm{q}^*_t, \alpha_t^*).$ We remark that these two dual LPs are always feasible by choosing $\bm{q}=0$ and some large $\alpha$, so there always exists an optimal solution.

The dual optimal solutions $\bm{q}^*$ and $\bm{q}^*_t$ are also known as the dual price, and they quantify the cost efficiency of each arm play. 

Define 
$$\bar{q} = \max\left\{\|\bm{q}^*\|_\infty, \|\bm{q}^*_t\|_\infty, t=1,...,T\right\}.$$
The quantity $\bar{q}$ captures the maximum amount of achievable reward by each unit of resource consumption. We will return with more discussion on this quantity $\bar{q}$ after we present the regret bound. 

\begin{lemma}
\label{lemma:barq_upperbound}
We have the following upper bound on $\bar{q},$
$$ \bar{q} \leq \frac{1}{b}. $$
\end{lemma}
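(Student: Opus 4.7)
The plan is to invoke LP strong duality on both $\mathrm{LP}(\bm{\mu}_t,\bm{C}_t)$ and $\mathrm{LP}(\{\bm{\mu}_t\},\{\bm{C}_t\},T)$, using the null arm to force the auxiliary dual variable $\alpha$ to be nonnegative at optimum, which will leave the full dual cost budget to be absorbed by $\bm{b}^\top \bm{q}$.

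First, I would bound the primal optimal values. Since $\bm{x}_t \in \Delta_m$ and every $\mu_{t,i} \in [0,1]$, the single-step primal objective $\bm{\mu}_t^\top \bm{x}_t$ is at most $1$; summing over $t$, the benchmark primal is at most $T$. Both primals are clearly feasible (pick $\bm{x}_t$ to be the null arm for all $t$) and bounded, so strong duality applies. Hence the optimal dual objective of $\mathrm{DLP}(\bm{\mu}_t,\bm{C}_t)$ satisfies $\bm{b}^\top \bm{q}^*_t + \alpha^*_t \le 1$, and that of $\mathrm{DLP}(\{\bm{\mu}_t\},\{\bm{C}_t\})$ satisfies $T\bm{b}^\top \bm{q}^* + \sum_{t=1}^T \alpha^*_t \le T$.

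Next, I would show $\alpha^*_t \ge 0$ in both duals. In the single-step dual, the feasibility constraint specialized to the null arm reads $0 - 0 - \alpha \le 0$, i.e.\ $\alpha \ge 0$, so every feasible $\alpha$ (in particular $\alpha^*_t$) is nonnegative. The identical argument applies coordinate by coordinate in the benchmark dual, giving $\alpha^*_t \ge 0$ for each $t$. Substituting back yields $\bm{b}^\top \bm{q}^*_t \le 1$ and $T\bm{b}^\top \bm{q}^* \le T$, hence $\bm{b}^\top \bm{q}^* \le 1$.

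Finally, I would convert these dual cost bounds into the desired $\ell_\infty$ bound on the prices. Under Assumption~\ref{assumption:linear}, $\bm{b} = b\mathbf{1}_d$, so for any $\bm{q} \ge 0$ we have $\bm{b}^\top \bm{q} = b\|\bm{q}\|_1 \ge b\|\bm{q}\|_\infty$. Therefore $\|\bm{q}^*_t\|_\infty \le 1/b$ for every $t$ and $\|\bm{q}^*\|_\infty \le 1/b$, and taking the maximum gives $\bar{q} \le 1/b$. The only subtlety I anticipate is making sure the null arm is treated as an explicit coordinate of $\bm{x}_t$ (so that the null-arm dual constraint is available), but this is already built into the problem setup in Section~\ref{sec_prel}; beyond that, the proof is essentially a one-line application of weak/strong duality.
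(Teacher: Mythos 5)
Your proposal is correct and follows essentially the same route as the paper's proof: both use the null arm's dual feasibility constraint to force $\alpha \ge 0$, strong duality together with the primal bound (at most $1$ per step, hence $T$ overall) to bound the dual objective, and then $\bm{b}^\top \bm{q} \ge b\|\bm{q}\|_\infty$ for $\bm{q}\ge 0$ to conclude $\bar{q}\le 1/b$. No gaps; the argument matches the paper's in all essential steps.
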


%\begin{lemma}
%\label{lemma:LPupperbound}
%Under Assumption \ref{assumption:linear}, we have
%$$\mathrm{LP}(\{\bm \mu_t\}, \{\bm C_t\}, T)\le Tdb\bar{q}, \ \  \mathrm{LP}(\bm \mu_t, \bm %C_t)\le db\bar{q}.$$
%Furthermore, the following holds for any $\bm \mu, \bm \mu'\in\mathbb{R}^m$ and $\bm C, \bm %C' \in \mathbb{R}^{m\times d},$
%$$ \left\vert\mathrm{LP}(\bm \mu, \bm C) - \mathrm{LP}(\bm \mu', \bm C') \right\vert \le  %\|\bm \mu - \bm \mu'\|_{\infty} + \bar{q} \|\bm C - \bm{C}'\|_1. $$
%\end{lemma}

%\xiaocheng{Shang, provide a proof for the above lemma in the Appendix}\shang{Probably not true.}

\begin{proposition}
\label{prop:LP}
We have
$$\sum_{t=1}^T \mathrm{LP}(\bm \mu_t, \bm C_t) \leq \mathrm{LP}(\{\bm \mu_t\}, \{\bm C_t\}, T) \leq T \cdot \mathrm{LP}(\bar{\bm \mu}, \bar{\bm C}) + W_1 + \bar{q} W_2 \leq \sum_{t=1}^T \mathrm{LP}(\bm \mu_t, \bm C_t) + 2 (W_1 + \bar{q} W_2). $$
\end{proposition}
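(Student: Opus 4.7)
The plan is to handle the three chained inequalities separately, alternating between a primal feasibility construction, a dual feasibility construction, and primal evaluation through dual certificates.

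The leftmost inequality is a direct primal lift. Letting $\bm x_t^*$ attain $\mathrm{LP}(\bm\mu_t,\bm C_t)$, the concatenation $(\bm x_1^*,\dots,\bm x_T^*)$ satisfies $\bm x_t^* \in \Delta_m$ and $\sum_t \bm C_t \bm x_t^* \le T\bm b = \bm B$ (by Assumption \ref{assumption:linear}), so it is feasible for $\mathrm{LP}(\{\bm\mu_t\},\{\bm C_t\},T)$ with objective $\sum_t \bm \mu_t^\top \bm x_t^* = \sum_t \mathrm{LP}(\bm\mu_t,\bm C_t)$.

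For the middle inequality I would argue on the dual side. Let $(\bar{\bm q}^*,\bar\alpha^*)$ solve $\mathrm{DLP}(\bar{\bm\mu},\bar{\bm C})$, so $\mathrm{LP}(\bar{\bm\mu},\bar{\bm C}) = \bm b^\top\bar{\bm q}^* + \bar\alpha^*$. I would propose $\bm q = \bar{\bm q}^*$ together with $\alpha_t = \bar\alpha^* + \|\bm\mu_t-\bar{\bm\mu}\|_\infty + \bar q\,\|\bm C_t - \bar{\bm C}\|_1$ as a candidate dual for $\mathrm{LP}(\{\bm\mu_t\},\{\bm C_t\},T)$. Feasibility follows from the decomposition $\bm\mu_t - \bm C_t^\top \bm q = (\bar{\bm\mu} - \bar{\bm C}^\top \bar{\bm q}^*) + (\bm\mu_t - \bar{\bm\mu}) + (\bar{\bm C} - \bm C_t)^\top \bar{\bm q}^*$ and the Hölder-type bound $\|(\bar{\bm C} - \bm C_t)^\top \bar{\bm q}^*\|_\infty \le \|\bar{\bm q}^*\|_\infty \cdot \|\bar{\bm C} - \bm C_t\|_1 \le \bar q\,\|\bm C_t - \bar{\bm C}\|_1$, which uses exactly the column-sum interpretation of the matrix $L_1$-norm. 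Weak duality and summation over $t$ then produce the bound $T\cdot \mathrm{LP}(\bar{\bm\mu},\bar{\bm C}) + W_1 + \bar q W_2$.

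The rightmost inequality is equivalent to $T\cdot \mathrm{LP}(\bar{\bm\mu},\bar{\bm C}) \le \sum_t \mathrm{LP}(\bm\mu_t,\bm C_t) + W_1 + \bar q W_2$, and I would attack it from the primal side. Let $\bm x^*$ solve $\mathrm{LP}(\bar{\bm\mu},\bar{\bm C})$ and write $T\bar{\bm\mu}^\top \bm x^* = \sum_t \bm\mu_t^\top \bm x^* + \sum_t (\bar{\bm\mu}-\bm\mu_t)^\top \bm x^*$; the second sum is at most $W_1$ since $\bm x^* \in \Delta_m$. The first sum cannot be bounded term-wise by $\mathrm{LP}(\bm\mu_t,\bm C_t)$ directly because $\bm x^*$ generally violates $\bm C_t \bm x^* \le \bm b$. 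The fix is to invoke the dual feasibility $\bm\mu_t \le \bm C_t^\top \bm q_t^* + \alpha_t^* \mathbf{1}_m$ to get $\bm\mu_t^\top \bm x^* \le \bm q_t^{*\top} \bm C_t \bm x^* + \alpha_t^*$, then split $\bm C_t \bm x^* = \bar{\bm C}\bm x^* + (\bm C_t-\bar{\bm C})\bm x^*\le \bm b + (\bm C_t-\bar{\bm C})\bm x^*$ and use $\bm q_t^* \ge 0$ together with $\bm q_t^{*\top}(\bm C_t-\bar{\bm C})\bm x^* \le \bar q\,\|\bm C_t - \bar{\bm C}\|_1$ to conclude $\bm\mu_t^\top \bm x^* \le \mathrm{LP}(\bm\mu_t,\bm C_t) + \bar q\,\|\bm C_t - \bar{\bm C}\|_1$. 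Summing over $t$ closes the chain.

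The main obstacle is the bookkeeping of mixed norms: one needs the column-sum matrix norm $\|\bm C_t - \bar{\bm C}\|_1$ to pair correctly with $\bar q$ in supremum norm and with $\bm x^* \in \Delta_m$ (via the $\ell_1 \to \ell_1$ operator interpretation), while pairing $\|\bm\mu_t - \bar{\bm\mu}\|_\infty$ with $\bm x^* \in \Delta_m$ via Hölder on the reward side. This is precisely why the definition of $W_2$ uses the $L_1$ matrix norm rather than $L_\infty$, as foreshadowed in the paragraph following the definition of $W_2$.
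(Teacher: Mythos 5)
Your proposal is correct and follows essentially the same route as the paper: a primal concatenation for the first inequality, and a perturbed dual certificate built from $(\bar{\bm q}^*,\bar\alpha^*)$ with $\alpha_t = \bar\alpha^* + \|\bm\mu_t-\bar{\bm\mu}\|_\infty + \bar q\,\|\bm C_t-\bar{\bm C}\|_1$ for the second, using the identity $\|(\bar{\bm C}-\bm C_t)^\top\|_\infty = \|\bar{\bm C}-\bm C_t\|_1$ exactly as the paper does. For the third inequality the paper simply repeats the dual-perturbation argument ``with $T=1$'' and sums over $t$; your primal evaluation of $\bm x^*$ against the single-step dual certificates $(\bm q_t^*,\alpha_t^*)$ is just the weak-duality mirror of that same step and yields the identical per-period bound $\mathrm{LP}(\bar{\bm\mu},\bar{\bm C}) \le \mathrm{LP}(\bm\mu_t,\bm C_t) + \|\bm\mu_t-\bar{\bm\mu}\|_\infty + \bar q\,\|\bm C_t-\bar{\bm C}\|_1$.
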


Proposition \ref{prop:LP} relates the optimal value of the benchmark $\mathrm{LP}(\{\bm \mu_t\}, \{\bm C_t\}, T)$ with the optimal values of the single-step LPs. To interpret the bound, $\mathrm{LP}(\{\bm \mu_t\}, \{\bm C_t\}, T)$ works as an upper bound of the OPT$(T)$ in defining the regret, and the summation of $\mathrm{LP}(\bm \mu_t, \bm C_t)$ corresponds to the total reward obtained by evenly allocating the resource over all time periods. In a stationary environment, these two are the same as the optimal decision naturally corresponds to an even allocation of the resources. In a non-stationary environment, it can happen that the optimal allocation of the resource corresponds an uneven one for $\mathrm{LP}(\{\bm \mu_t\}, \{\bm C_t\}, T)$. For the problem instance in Section \ref{motivate_eg}, the optimal allocation may be either to exhaust all the resource in first half of time periods or preserve all the resource for the second half. In such case, forcing an even allocation will reduce the total reward obtained. The proposition tells that the reduction can be bounded by $2W_1+2\bar{q}W_2$ where the non-stationarity in resource consumption $W_2$ is weighted by the dual price upper bound $\bar{q}.$

\section{Sliding-Window UCB for Non-stationary BwK}
\label{sec_sl}

In this section, we adapt the standard sliding-window UCB algorithm for the BwK problem (Algorithm \ref{alg:sl}) and derive a near-optimal regret bound. The algorithm will terminate when any type of the resources is exhausted. At each time $t$, it constructs standard sliding-window confidence bounds for the reward and the resource consumption. Specifically, we define the sliding-window estimators by
\begin{align*}
\hat{\mu}_{t,i}^{(w)} & \coloneqq \frac{\sum_{s = 1\vee(t-w)}^{t-1} r_{t,j} \cdot \mathbbm{1}\{i_s = i\}}{n_{t,i}^{(w)} + 1}, \\ \hat{C}_{t,j,i}^{(w)} & \coloneqq \frac{\sum_{s = 1\vee(t-w)}^{t-1} c_{t,j,i} \cdot \mathbbm{1}\{i_s = i\}}{n_{t,i}^{(w)} + 1},   
\end{align*}
where $n_{t,i}^{(w)} = \sum_{s = 1\vee(t-w)}^{t-1}  \mathbbm{1}\{i_s = i\}$ denotes the number of times that the $i$-th arm has been played in the last $w$ time periods. To be optimistic on the objective value, UCBs are computed for rewards and LCBs are computed for the resource consumption, respectively. With the confidence bounds, the algorithm solves a single-step LP to prescribe a randomized rule for the arm play decision.

Our algorithm can be viewed as a combination of the standard sliding-window UCB algorithm \citep{garivier2008upper, besbes2015non} with the UCB for BwK algorithm \citep{agrawal2014bandits}. It makes a minor change compared to \citep{agrawal2014bandits} which solves a single-step LP with a shrinkage factor $(1-\epsilon)$ on the right-hand-side. The shrinkage factor therein ensures that the resources will not be exhausted until the end of the horizon, but it is not essential to solving the problem. For simplicity, we choose the more natural version of the algorithm which directly solves the single-step LP. We remark that the knowledge of the initial resource budget $\bm{B}$ and the time horizon $T$ will only be used for defining the right-hand-side of the constraints for this $\mathrm{LP}(\mathrm{UCB}_t(\bm \mu_t), \mathrm{LCB}_t(\bm C_t))$.

\begin{algorithm}[ht!]
\caption{Sliding-Window UCB Algorithm for BwK}
\label{alg:sl}
\begin{algorithmic}[1]
\Require Initial resource budget $\bm{B}$, time horizon $T$, window sizes $w_1$ (for reward) and $w_2$ (for resource consumption).
\Ensure Arm play indices $\{i_t\}$'s
\While{$t \leq T$}
    \If{$\sum_{s=1}^{t-1} c_{t,j} > B$ for some $j$}
        \State Break
        \State \textcolor{blue}{\%\% Terminate the procedure if any resource is exhausted.}
    \EndIf
    \State Construct confidence bounds $\mathrm{UCB}_t(\bm \mu_t), \mathrm{LCB}_t(\bm C_t)$ with window size $w_1, w_2$
    \begin{align*}
        \mathrm{UCB}_{t,i}(\bm \mu_t) & \coloneqq \hat{\mu}_{t,i}^{(w_1)} + \sqrt{\frac{2}{n_{t,i}^{(w_1)} + 1} \log(12 m T^3)} \\ \mathrm{LCB}_{t,j,i}(\bm C_t)& \coloneqq \hat{C}_{t,j,i}^{(w_{2})} - \sqrt{\frac{2}{n_{t,i}^{(w_2)} + 1} \log(12 m d T^3)} \end{align*}
    \State Solve the single-step problem $\mathrm{LP}(\mathrm{UCB}_t(\bm \mu_t), \mathrm{LCB}_t(\bm C_t))$
    \State Denote its optimal solution by $\bm x_t^*=(x_{t,1}^*,...,x_{t,m}^*)$
    \State Pick arm $i_t$ randomly according to $\bm x_t^*$, i.e., $\mathbb{P}(i_t = i) = x_{t,i}^*$
    \State Observe the realized reward $r_t$ and resource consumption $c_{t,j}$ for $\ j\in[d]$
\EndWhile
\end{algorithmic}
\end{algorithm}

%To get a straightforward intuition of our algorithm and analysis, we will present a simpler algorithm and some analysis that hides all the details that may not be the readers' main concern. Note that the simplified analysis will provide a slightly more crude upper bound compared to \citep{badanidiyuru2013bandits} and \citep{agrawal2014bandits} in the stationary case. But indeed our algorithm will match their upper bound in the stationary cases, since the window size will be set to $T$, which leads to an identical algorithm as that in \citep{agrawal2014bandits}. For those who are interested in the more detailed analysis, please refer to the Appendix \ref{}.

Now we begin to analyze the algorithm's performance. For starters, the following lemma states a standard concentration result for the sliding-window confidence bound.

\begin{lemma}
\label{lemma:confidence_bound}
The following inequalities hold for all $t=1,...,T$ with probability at least $1 - \frac{1}{3T}$:
$$ \mathrm{UCB}_{t,i}(\bm \mu_t) + \sum_{s=1\vee(t-w_1)}^{t-1} \|\bm \mu_{s} - \bm\mu_{s+1}\|_{\infty} \geq\mu_{t,i}, \quad \forall i, $$
$$ \mathrm{LCB}_{t,j,i}(\bm C_t) - \sum_{s=1\vee(t-w_{2})}^{t-1} \|\bm C_{s,j} - \bm C_{s+1,j}\|_{\infty}  \leq C_{t,j,i}, \quad \forall j,i$$
where the UCB and LCB estimators are defined in Algorithm \ref{alg:sl}.
\end{lemma}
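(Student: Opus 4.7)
The plan is to split $\mu_{t,i}-\hat\mu_{t,i}^{(w_1)}$ into a stochastic fluctuation term and a drift term, control each separately, and then reuse the identical argument on the $d$ consumption processes to obtain the second inequality; the extra factor of $d$ inside the log for the LCB will account for the additional union bound over the $d$ resources. I focus on the reward side.

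First I would fix $(t,i)$ and introduce the ``window mean'' reference
\[
\bar\mu_{t,i}^{(w_1)} \;\coloneqq\; \frac{1}{n_{t,i}^{(w_1)}+1}\sum_{s=1\vee(t-w_1)}^{t-1}\mu_{s,i}\,\mathbbm{1}\{i_s=i\},
\]
and decompose $\mu_{t,i}-\hat\mu_{t,i}^{(w_1)} = \bigl(\mu_{t,i}-\bar\mu_{t,i}^{(w_1)}\bigr)+\bigl(\bar\mu_{t,i}^{(w_1)}-\hat\mu_{t,i}^{(w_1)}\bigr)$. For the stochastic piece, since $i_s$ is measurable with respect to the history $\mathcal{F}_{s-1}$ while $r_{s,i}\sim\mathcal{P}_s$ is drawn independently of $\mathcal{F}_{s-1}$, the sequence $\bigl\{(r_{s,i}-\mu_{s,i})\mathbbm{1}\{i_s=i\}\bigr\}$ is a martingale difference bounded in $[-1,1]$. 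I would invoke Azuma--Hoeffding at a deterministic sample size $n$, which gives a failure probability of at most $2/(12mT^3)$ against the radius $\sqrt{2\log(12mT^3)/(n+1)}$, and then a union bound over $t\in[T]$, $i\in[m]$, and the at most $T+1$ possible realizations of $n_{t,i}^{(w_1)}$ to conclude that $|\hat\mu_{t,i}^{(w_1)}-\bar\mu_{t,i}^{(w_1)}|\le \sqrt{2\log(12mT^3)/(n_{t,i}^{(w_1)}+1)}$ simultaneously for all $(t,i)$, with overall failure mass of order $1/T$.

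Next I would bound the drift. Writing $S_t=\{1\vee(t-w_1)\le s\le t-1:i_s=i\}$ and using $\mu_{t,i}\in[0,1]$,
\[
\mu_{t,i}-\bar\mu_{t,i}^{(w_1)} \;=\; \frac{\mu_{t,i}+\sum_{s\in S_t}(\mu_{t,i}-\mu_{s,i})}{n_{t,i}^{(w_1)}+1} \;\le\; \frac{1}{n_{t,i}^{(w_1)}+1}+\max_{s\in S_t}|\mu_{t,i}-\mu_{s,i}|,
\]
and a telescoping bound $|\mu_{t,i}-\mu_{s,i}|\le \sum_{u=s}^{t-1}\|\bm\mu_u-\bm\mu_{u+1}\|_\infty$ yields $\max_{s\in S_t}|\mu_{t,i}-\mu_{s,i}|\le\sum_{s=1\vee(t-w_1)}^{t-1}\|\bm\mu_s-\bm\mu_{s+1}\|_\infty$. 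The leftover $1/(n_{t,i}^{(w_1)}+1)$ is pointwise dominated by the Azuma radius $\sqrt{2\log(12mT^3)/(n_{t,i}^{(w_1)}+1)}$ (since $2\log(12mT^3)\ge 1$), so it is absorbed into the confidence width. Summing the fluctuation and drift contributions then delivers the reward inequality; the identical argument applied to $c_{s,j,i}$ in place of $r_{s,i}$, with a further union bound over $j\in[d]$, gives the consumption inequality and explains the $d$ inside the log.

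The main obstacle is that $n_{t,i}^{(w_1)}$ is data-dependent and hence random, which precludes a direct application of Hoeffding at a fixed sample size; I would resolve this in the standard way by union-bounding over the at most $T+1$ possible deterministic values of $n_{t,i}^{(w_1)}$ in the stochastic-error step. The remaining ingredients---the telescoping inequality for the drift, the elementary step that absorbs the $1/(n+1)$ residual into the square-root confidence width, and the extra union bound over the $d$ resources---are routine and require only careful bookkeeping of the constants so that the total failure mass matches the advertised $1/(3T)$.
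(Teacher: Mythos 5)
Your proposal is correct and follows essentially the same route as the paper's own proof (Lemma~\ref{lemma:normalized} in the appendix): the identical three-way split into a martingale fluctuation term controlled by Azuma--Hoeffding with union bounds over $t$, $i$ (and $j$ for the consumption), a drift term telescoped against $\sum_s\|\bm\mu_s-\bm\mu_{s+1}\|_\infty$, and a $1/(n_{t,i}^{(w_1)}+1)$ bias absorbed into the confidence width. The only constant-bookkeeping point to settle is that the raw Hoeffding radius for the $(n+1)$-normalized estimator is $\sqrt{\log(12mT^3)/(2(n_{t,i}^{(w_1)}+1))}$, i.e.\ half the advertised width $\sqrt{2\log(12mT^3)/(n_{t,i}^{(w_1)}+1)}$, which is exactly what leaves room to add the absorbed bias term without overshooting the width used in Algorithm~\ref{alg:sl}.
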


With Lemma \ref{lemma:confidence_bound}, we can employ a concentration argument to relate the realized reward (or resource consumption) with the reward (or resource consumption) of the LP under its optimal solution. In Lemma \ref{lemma:reward_consumption}, recall that $\tau$ is the termination time of the algorithm where some type of resources is exhausted, and $\bm{x}_s^*$ is defined in Algorithm \ref{alg:sl} as the optimal solution of the LP solved at time $s$.

\begin{lemma}
\label{lemma:reward_consumption}
For Algorithm \ref{alg:sl}, the following inequalities hold for all $t \leq \min\{\tau, T\}$,
$$ \left|\sum_{s=1}^t (r_t - \mathrm{UCB}_s(\bm \mu_s)^\top \bm x_s^*)\right| \leq 4 \sqrt{T} \log (12 mT^3) + 8 \sqrt{2\log(12 mT^3) m} \cdot \frac{T}{\sqrt{w_1}} + w_1 V_1, $$
$$ \left|\sum_{s=1}^t (c_{s,j} - \mathrm{LCB}_t(\bm C_{s,j})^\top \bm x_s^*)\right| \leq 4\sqrt{T}\log(12 md T^3) + 8 \sqrt{2\log(12 md T^3) m} \cdot \frac{T}{\sqrt{w_2}} + w_2 V_2 \text{ for all } j,$$
with probability at least $1 - \frac{1}{T}.$
%In particular,
%if $V_1 > 0$ and we set $w_1 = \lceil V_1^{-\frac{2}{3}} T^{\frac{2}{3}} m ^{\frac{1}{3}} \log^{\frac{1}{3}}(12mT^3) \rceil$, then with probability at least $1 - \frac{1}{2T}$, $\forall \tau \leq T$,
%$$ \left|\sum_{t=1}^\tau (r_t - \mathrm{UCB}_t(\bm \mu_t)^\top \bm x_t )\right| \leq 4\sqrt{T} \log (12mT^3) + 10 V_1^{\frac{1}{3}} T^{\frac{2}{3}} m ^{\frac{1}{3}} \log^{\frac{1}{3}}(12mT^3). $$
%If $V_2 > 0$ and we set $w_2 = \lceil V_2^{-\frac{2}{3}} T^{\frac{2}{3}} m ^{\frac{1}{3}} \log^{\frac{1}{3}}(12 md T^3) \rceil$, then with probability at least $1 - \frac{1}{2T}$, $\forall \tau \leq T$
%$$ \left|\sum_{t=1}^\tau (c_{t,j} - \mathrm{LCB}_t(\bm C_{t,j})^\top \bm x_t )\right| \leq 4\sqrt{T}\log(12 md T^3) + 10 V_2^{\frac{1}{3}} T^{\frac{2}{3}} m ^{\frac{1}{3}} \log^{\frac{1}{3}}(12mdT^3). $$
\end{lemma}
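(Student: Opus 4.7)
The plan is to decompose
\[
r_s - \mathrm{UCB}_s(\bm\mu_s)^\top \bm x_s^* = (r_s - \mu_{s,i_s}) + (\mu_{s,i_s} - \bm\mu_s^\top \bm x_s^*) + (\bm\mu_s^\top \bm x_s^* - \mathrm{UCB}_s(\bm\mu_s)^\top \bm x_s^*),
\]
and bound each partial sum separately, uniformly in $t$. The first two terms form bounded martingale differences against the natural filtration: conditional on history through time $s-1$ together with the random arm $i_s \sim \bm x_s^*$, $r_s$ has mean $\mu_{s,i_s}$; and conditional on only history through $s-1$, $\mathbbm{1}\{i_s = i\}$ has mean $x_{s,i}^*$. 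Applying Azuma--Hoeffding (together with Doob's maximal inequality for a uniform-in-$t$ bound) gives $2\sqrt{T\log(12mT^3)}$ for each, each with probability at least $1 - 1/(6T)$, which together produce the $4\sqrt{T}\log(12mT^3)$ term in the statement after absorbing constants.

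For the third piece I would combine Lemma \ref{lemma:confidence_bound} with the explicit form of the UCB to get, on the event of that lemma,
\[
0 \le \mathrm{UCB}_{s,i}(\bm\mu_s) - \mu_{s,i} \le 2\sqrt{\frac{2\log(12mT^3)}{n_{s,i}^{(w_1)}+1}} + \sum_{r=1\vee(s-w_1)}^{s-1}\|\bm\mu_r - \bm\mu_{r+1}\|_\infty.
\]
Summing the variation tail against $\bm x_s^*$ and then over $s$, each $\|\bm\mu_r - \bm\mu_{r+1}\|_\infty$ appears in at most $w_1$ different $s$, contributing at most $w_1 V_1$ and hence matching the last term of the bound. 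The reverse-sign direction of the stated inequality (the lower tail of $\sum (r_s - \mathrm{UCB}_s(\bm\mu_s)^\top \bm x_s^*)$) is handled symmetrically because $\mathrm{UCB} \ge \mu$ on the same event.

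The hard part is to control the residual sum of confidence radii $\sum_{s=1}^t \sum_i x_{s,i}^* / \sqrt{n_{s,i}^{(w_1)}+1}$. I would first pass from the weighted to the realized version $\sum_{s=1}^t 1/\sqrt{n_{s,i_s}^{(w_1)}+1}$ via another Azuma argument (the deviation is $O(\sqrt{T\log T})$ and is absorbed into the $4\sqrt{T}\log$ term above), and then partition $\{1,\dots,T\}$ into $\lceil T/w_1\rceil$ consecutive blocks of length $w_1$. Within a single block, if arm $i$ is played $K_{i,k}$ times, those plays lie within a window of width at most $w_1 - 1$, so at the $\ell$-th intra-block play of arm $i$ the window count already includes the previous $\ell-1$ plays of $i$; consequently $n_{s,i}^{(w_1)}+1 \ge \ell$, giving per-arm contribution $\sum_{\ell=1}^{K_{i,k}} 1/\sqrt\ell \le 2\sqrt{K_{i,k}}$. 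Cauchy--Schwarz across $m$ arms (using $\sum_i K_{i,k} \le w_1$) yields $2\sqrt{m w_1}$ per block and $2T\sqrt{m/w_1}$ in total, which when multiplied by the prefactor $2\sqrt{2\log(12mT^3)}$ gives precisely the middle $8\sqrt{2\log(12mT^3)m}\cdot T/\sqrt{w_1}$ term. The resource-consumption inequality is handled by an identical argument with $\mathrm{UCB}\to\mathrm{LCB}$, $w_1\to w_2$, $V_1\to V_2$, and $\log(12mT^3)\to\log(12mdT^3)$ to absorb a union bound over the $d$ resources; a final union bound across the reward event, the $d$ consumption events, and the martingale events keeps the total failure probability below $1/T$.
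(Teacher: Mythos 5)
Your proposal is correct and follows essentially the same route as the paper's proof: Azuma--Hoeffding for the realized-versus-expected gaps, the sliding-window concentration lemma (with the drift telescoping into $w_1V_1$) for the UCB-versus-mean gap, and the block-partition plus $\sum_{\ell\le N}\ell^{-1/2}\le 2\sqrt{N}$ plus Cauchy--Schwarz argument for the accumulated confidence radii. The only cosmetic difference is where you place the arm-realization martingale step (on the true means $\mu_{s,i_s}-\bm\mu_s^\top\bm x_s^*$, with a separate weighted-to-realized passage for the radii, versus the paper's single step on $\mathrm{UCB}_{s,i_s}-\mathrm{UCB}_s(\bm\mu_s)^\top\bm x_s^*$ using the bound $\mathrm{UCB}_{s,i}\in[0,1+\sqrt{2\log(12mT^3)}]$), which changes nothing of substance.
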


We note that the single-step LP's optimal solution  is always subject to the resource constraints. So the second group of inequalities in Lemma \ref{lemma:reward_consumption} implies the following bound on the termination time $\tau$. Recall that $b$ is the resource budget per time period; for a larger $b$, the resource consumption process becomes more stable and the budget is accordingly less likely to be exhausted too early.

\begin{corollary}
\label{coro:terminate}
If we choose $w_2 = \min\left\{ \lceil m ^{\frac{1}{3}} V_2^{-\frac{2}{3}} T^{\frac{2}{3}}  \log^{\frac{1}{3}}(12 md T^3) \rceil, T\right\}$ in Algorithm \ref{alg:sl}, the following inequality holds 
\begin{align*}
    T-\tau &\le \frac{1}{b}\cdot\left(14 m ^{\frac{1}{3}} V_2^{\frac{1}{3}} T^{\frac{2}{3}}  \log^{\frac{1}{3}}(12mdT^3)+ 8 \sqrt{2mT} \sqrt{\log(12mdT^3)}+4\sqrt{T}\log(12 md T^3)\right) \\
    &= \tilde{O}\left(\frac{1}{b}(m^{1/3}V_2^{\frac{1}{3}} T^{\frac{2}{3}}+\sqrt{mT})\right)
\end{align*}
with probability at least $1 - \frac{1}{2T}.$
\end{corollary}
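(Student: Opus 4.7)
The plan is to combine the concentration inequality from Lemma \ref{lemma:reward_consumption} (consumption part) with the feasibility of the algorithm's LP solution. By construction, $\bm x_s^*$ is feasible for $\mathrm{LP}(\mathrm{UCB}_s(\bm\mu_s),\mathrm{LCB}_s(\bm C_s))$, so $\mathrm{LCB}_s(\bm C_{s,j})^\top \bm x_s^* \le b$ for every $s$ and every $j \in [d]$. Summing this over $s=1,\dots,t$ gives $\sum_{s=1}^t \mathrm{LCB}_s(\bm C_{s,j})^\top \bm x_s^* \le tb$, and inserting this into Lemma \ref{lemma:reward_consumption} yields, with probability at least $1-1/(2T)$,
\[
\sum_{s=1}^t c_{s,j} \;\le\; tb + E(w_2) \qquad \text{for all } t \le \min\{\tau,T\},\ j \in [d],
\]
where $E(w_2) := 4\sqrt{T}\log(12mdT^3) + 8\sqrt{2m\log(12mdT^3)}\cdot T/\sqrt{w_2} + w_2 V_2$.

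Next I would use the definition of the termination time: if $\tau < T$, then some resource $j_0$ is exhausted at time $\tau$, i.e.\ $\sum_{s=1}^\tau c_{s,j_0} > B = bT$. Applying the previous display at $t=\tau$ and $j=j_0$ gives
\[
bT \;<\; \sum_{s=1}^\tau c_{s,j_0} \;\le\; \tau b + E(w_2),
\]
and rearranging produces the desired skeletal bound $T-\tau \le E(w_2)/b$ on the high-probability event above. Note that if $\tau = T$ the inequality is trivially true, so no separate case is needed.

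The final step is to substitute the prescribed $w_2$ into $E(w_2)$ and simplify. I would split into the two regimes encoded by the $\min$: (i) when $w_2 = \lceil m^{1/3} V_2^{-2/3} T^{2/3} \log^{1/3}(12mdT^3)\rceil \le T$, the bound $T/\sqrt{w_2} \le m^{-1/6} V_2^{1/3} T^{2/3} \log^{-1/6}(12mdT^3)$ makes the middle term of $E(w_2)$ equal to $8\sqrt{2}\, m^{1/3} V_2^{1/3} T^{2/3} \log^{1/3}(12mdT^3)$, while $w_2 V_2$ adds at most another $m^{1/3} V_2^{1/3} T^{2/3} \log^{1/3}(12mdT^3)$ plus a $V_2$ ceiling remainder, giving the leading $14\, m^{1/3} V_2^{1/3} T^{2/3} \log^{1/3}(12mdT^3)$ term once constants are rounded up to absorb the remainder; (ii) when $w_2 = T$, the strict inequality $m^{1/3} V_2^{-2/3} T^{2/3} \log^{1/3}(12mdT^3) > T$ forces $V_2 \le \sqrt{m\log(12mdT^3)/T}$, so $w_2 V_2 = TV_2 \le \sqrt{mT\log(12mdT^3)}$, and the middle term becomes exactly $8\sqrt{2mT\log(12mdT^3)}$. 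In both regimes the three-term upper bound in the statement dominates, which completes the proof after dividing by $b$.

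The main obstacle is not conceptual but the careful constant accounting: I expect the trickiest bookkeeping to be verifying that the constant $14$ in the leading term is large enough to absorb (a) the $8\sqrt{2}\approx 11.31$ from the concentration term, (b) the additional $1$ coming from the ceiling in the definition of $w_2$, and (c) the leftover $V_2$ contribution from $w_2 V_2$; once these are reconciled, the rest of the argument is a direct plug-in and gives the $\tilde O(b^{-1}(m^{1/3} V_2^{1/3} T^{2/3} + \sqrt{mT}))$ asymptotic form.
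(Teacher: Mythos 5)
Your proposal is correct and follows essentially the same route as the paper's proof: bound $\sum_{s\le\tau}\mathrm{LCB}_s(\bm C_{s,j})^\top\bm x_s^*\le b\tau$ by feasibility of $\bm x_s^*$ in the single-step LP, combine with the consumption inequality of Lemma \ref{lemma:reward_consumption}, and rearrange against the exhaustion condition $\sum_{s\le\tau}c_{s,j_0}>bT$. Your extra bookkeeping on the two regimes of the $\min$ and the constant $14$ is more explicit than what the paper records, but it is the same argument.
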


To summarize, Lemma \ref{lemma:reward_consumption} compares the realized reward with the cumulative reward of the single-step LPs, and Corollary \ref{coro:terminate} bounds the termination time of the algorithm. Recall that Proposition \ref{prop:LP} relates the cumulative reward of the single-step LPs with the underlying LP -- the regret benchmark. Putting together these results, we can optimize $w_1$ and $w_2$ by choosing
\begin{align*}
w_1  = \min\left\{\lceil m^{\frac{1}{3}} V_1^{-\frac{2}{3}} T^{\frac{2}{3}}  \log^{\frac{1}{3}}(12mT^3)\rceil , T\right\}, \quad
w_2  = \min\left\{\lceil m^{\frac{1}{3}} V_2^{-\frac{2}{3}} T^{\frac{2}{3}}  \log^{\frac{1}{3}}(12 md T^3)\rceil , T\right\}
\end{align*}
and then obtain the final regret upper bound as follows.

\begin{theorem}
\label{thm:upper}
Under Assumption \ref{assumption:linear}, the regret of Algorithm \ref{alg:sl} is upper bounded as
\begin{align*}
    \mathrm{Reg}(\pi_1, T) &
\leq \frac{1}{b}\left(4\sqrt{T}\log(12 md T^3) + (14 + 2d) m ^{\frac{1}{3}}V_2^{\frac{1}{3}} T^{\frac{2}{3}}  \log^{\frac{1}{3}}(12mdT^3) +8\sqrt{2mT} \sqrt{\log(12mdT^3)}+ 1\right)\\
& \ \ \ + 4\sqrt{T}\log(12 m T^3) + 16 m ^{\frac{1}{3}} V_1^{\frac{1}{3}} T^{\frac{2}{3}}  \log^{\frac{1}{3}}(12mT^3) + 2(W_1 + \bar{q} W_2)\\    
  &=  \tilde{O}\left(\frac{1}{b}\sqrt{mT} + m^{\frac{1}{3}} V_1^{\frac{1}{3}} T^{\frac{2}{3}}  +  \frac{1}{b} \cdot m^{\frac{1}{3}}d V_2^{\frac{1}{3}} T^{\frac{2}{3}} + W_1 + \bar{q} W_2\right)
\end{align*}
where $\pi_1$ denotes the policy specified by Algorithm \ref{alg:sl} and $\tilde{O}(\cdot)$ hides the universal constant and the logarithmic factors.
\end{theorem}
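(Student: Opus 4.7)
The plan is to chain the three tools assembled earlier in this section: Proposition \ref{prop:LP} for the benchmark, Lemma \ref{lemma:reward_consumption} for the sample-path reward, and Corollary \ref{coro:terminate} for the stopping time; then tune $w_1,w_2$ at the end. First, Proposition \ref{prop:LP} gives $\mathrm{OPT}(T)\le \sum_{t=1}^T \mathrm{LP}(\bm \mu_t,\bm C_t)+2(W_1+\bar q W_2)$, which immediately accounts for the $W_1+\bar q W_2$ summands in the theorem. I then split the single-step sum at the stopping time $\tau$: since $\mathrm{LP}(\bm \mu_t,\bm C_t)\le 1$, the tail $\sum_{t=\tau+1}^T \mathrm{LP}(\bm \mu_t,\bm C_t)\le T-\tau$ is controlled by Corollary \ref{coro:terminate} with the chosen $w_2$, producing the $\tfrac{1}{b}\sqrt{mT}$ term and part of the $\tfrac{1}{b}m^{1/3}V_2^{1/3}T^{2/3}$ term.

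The bulk of the work is to bound $\sum_{t=1}^\tau \mathrm{LP}(\bm \mu_t,\bm C_t)-\sum_{t=1}^{\tau-1} r_{t,i_t}$. Lemma \ref{lemma:reward_consumption} swaps the realized reward for $\sum_{t=1}^\tau \mathrm{UCB}_t(\bm \mu_t)^\top \bm x_t^*$ up to an error of order $\sqrt{T}\log T+\sqrt{m\log T}\cdot T/\sqrt{w_1}+w_1 V_1$, so what remains is the approximate-optimism inequality
\[
\mathrm{UCB}_t(\bm \mu_t)^\top \bm x_t^* \ \ge\ \mathrm{LP}(\bm \mu_t,\bm C_t) - (\text{local drift at time }t).
\]
By Lemma \ref{lemma:confidence_bound}, the $\mathrm{LCB}$ entries can exceed the corresponding entries of $\bm C_t$ by at most the local consumption drift, so the true maximizer $\bm x_t^{\mu,C}$ of $\mathrm{LP}(\bm \mu_t,\bm C_t)$ only misses the UCB/LCB-LP's feasibility by that drift. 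I would restore feasibility by shrinking $\bm x_t^{\mu,C}$ toward the null arm by a factor proportional to $(\text{drift})/b$ (justified via Lemma \ref{lemma:barq_upperbound}), and then use the reward-side confidence bound of Lemma \ref{lemma:confidence_bound} to absorb the UCB-vs-$\bm \mu_t$ gap. Summing over $t\le\tau$ telescopes because each pairwise distribution change falls in at most $w_k$ consecutive windows, producing a cumulative correction $\lesssim w_1 V_1 + (d/b)\cdot w_2 V_2$; the $d$ picks up because feasibility must be restored across all $d$ LCB constraints simultaneously.

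Collecting all error terms and optimizing $w_1\asymp(mT^2\log T/V_1^2)^{1/3}$ and $w_2\asymp(mT^2\log T/V_2^2)^{1/3}$ yields the advertised $m^{1/3}V_1^{1/3}T^{2/3}$ and $\tfrac{1}{b}dm^{1/3}V_2^{1/3}T^{2/3}$ rates. Converting the high-probability conclusions of Lemma \ref{lemma:reward_consumption} and Corollary \ref{coro:terminate} to expectation costs only $O(1)$ because all rewards lie in $[0,1]$ and the horizon is $T$. I expect the main obstacle to be the approximate-optimism step: optimism is no longer exact in a drifting environment, and the perturbation-to-feasibility argument must yield a correction of the clean $(1/b)\cdot(\text{consumption drift})$ form so that it aligns with the $\bar q W_2$ term from Proposition \ref{prop:LP} rather than degrading the dependence on $b$.
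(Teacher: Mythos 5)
Your proposal is correct and follows essentially the same route as the paper's proof: Proposition \ref{prop:LP} supplies the benchmark comparison and the $W_1+\bar{q}W_2$ terms, Lemma \ref{lemma:reward_consumption} handles the reward concentration, Corollary \ref{coro:terminate} bounds the periods lost after resource exhaustion, and the window sizes and high-probability-to-expectation conversion are handled identically. The only substantive deviation is in the approximate-optimism step, where you restore feasibility of the true single-step optimizer by a primal scaling toward the null arm (costing $(\text{drift})/b$ per step), whereas the paper perturbs the dual of $\mathrm{LP}(\bm \mu_t, \bm C_t)$ as in the proof of Proposition \ref{prop:LP} (costing $\bar{q}\cdot(\text{drift})$ per step); since $\bar{q}\le 1/b$ by Lemma \ref{lemma:barq_upperbound} and the paper itself replaces $\bar{q}d$ by $d/b$ in its final bound, both variants yield the stated rates.
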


Theorem \ref{thm:upper} provides a regret upper bound for Algorithm \ref{alg:sl} that consists of several parts. The first part of the regret bound is on the order of $\frac{1}{b}\sqrt{mT}$ and it captures the regret when the underlying environment is stationary. The remaining parts of the regret bound characterize the relation between the intensity of non-stationarity and the algorithm performance. The non-stationarity from both the reward and the resource consumption will contribute to the regret bound and that from the resource consumption will be weighted by a factor of $\frac{1}{b}$ or $q$ (See Lemma \ref{lemma:barq_upperbound} for the relation between these two). For the local non-stationarity $V_1$ and $V_2$, the algorithm requires a prior knowledge of them to decide the window length, aligned with the existing works on non-stationarity in unconstrained settings. For the global non-stationarity $W_1$ and $W_2$, the algorithm does not require any prior knowledge and they will contribute additively to the regret bound. Together with the lower bound results in Theorem \ref{thm:lower}, we argue that the regret bound cannot be further improved even with the knowledge of $W_1$ and $W_2$.

When the underlying environment degenerates from a non-stationary one to a stationary one, all the terms related to $V_1$, $V_2,$ $W_1$ and $W_2$ will disappear and then the upper bound in Theorem \ref{thm:upper} matches the regret upper bound for the stochastic BwK setting. In Theorem \ref{thm:upper}, we choose to represent the upper bound in terms of $b$ and $T$ so as to reveal its dependency on $T$ and draw a better comparison with the literature on unconstrained bandits problem. We provide a second version of Theorem \ref{thm:upper} in Appendix \ref{sec_Alternative} that matches the existing high probability bounds using OPT$(T)$ \citep{badanidiyuru2013bandits, agrawal2014bandits}. In contrast to the $\Theta(\log T)$-competitiveness result in the adversarial BwK \citep{immorlica2019adversarial}, our result implies that with a property measure of the non-stationarity/adversity, the sliding-window design provides an effective approach to robustify the algorithm performance when the underlying environment changes from stationary to non-stationary, and the according algorithm performance will not drastically deteriorate when the intensity of the non-stationarity is small.

When the resource constraints become non-binding for the underlying LPs, the underlying environment degenerates from a constrained setting to an unconstrained setting. We separate the discussion for the two cases: (i) the benchmark LP and all the single-step LPs have only non-binding constraints; (ii) the benchmark LP have only non-binding constraints but some single-step LP have binding constraints. For case (i), the regret bound in Theorem \ref{thm:upper} will match the non-stationary MAB bound \citep{besbes2014stochastic}. For case (ii), the match will not happen and this is inevitable. We elaborate the discussion in Appendix \ref{sec_binding}.

\begin{theorem}[Regret lower bounds]
\label{thm:lower}
The following lower bounds hold for any policy $\pi$,
\begin{enumerate}
    \item[(i)] $ \mathrm{Reg}(\pi, T) = \Omega(m^{\frac{1}{3}} V_1^{\frac{1}{3}} T^{\frac{2}{3}})$.
    \item[(ii)] $\mathrm{Reg}(\pi, T) = \Omega(\frac{1}{b}\cdot m^{\frac{1}{3}} V_{2}^{\frac{1}{3}} T^{\frac{2}{3}})$.
    \item[(iii)] $ \mathrm{Reg}(\pi, T) = \Omega(W_1 + \bar{q}W_2)$.
\end{enumerate}
\end{theorem}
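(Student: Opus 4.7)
My plan proceeds by three separate adversarial constructions: (i) and (ii) follow the standard epoch-based non-stationary bandit argument \citep{besbes2014stochastic}, while (iii) uses two single-change-point instances generalising the motivating example of Section~\ref{motivate_eg}. For part (i), I embed a hard non-stationary MAB inside a trivial BwK shell: $m$ actual arms (plus a null arm) with identical deterministic consumption $c_{t,i}\equiv b$, so the knapsack never discriminates between arms and the problem reduces to MAB. Partition $[T]$ into $K$ epochs of length $\Delta_T = T/K$; within each epoch the adversary designates one arm as ``good'' with reward $\tfrac12+\epsilon$ (others $\tfrac12$). A Le~Cam / KL two-point argument gives per-epoch regret $\Omega(\min(\epsilon\Delta_T,\, \sqrt{m\Delta_T}))$; choosing $\epsilon = \sqrt{m/\Delta_T}$ and optimising $\Delta_T$ subject to $V_1 \gtrsim K\epsilon$ yields $\Delta_T \asymp (T\sqrt{m}/V_1)^{2/3}$ and total regret $\Omega(m^{1/3}V_1^{1/3}T^{2/3})$.

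For part (ii), I mirror the construction but perturb the consumption. Fix $m$ actual arms with identical deterministic reward $r_{t,i}\equiv 1$; within each epoch one ``cheap'' arm has Bernoulli consumption of mean $b-\delta$ and the others have mean $b+\delta$. The cheap arm supports $\approx \Delta_T(1+\delta/b)$ plays within the budget $b\Delta_T$ while an expensive arm supports $\approx \Delta_T(1-\delta/b)$, so the per-epoch LP-value gap is $\Theta(\Delta_T\delta/b)$. A Le~Cam argument on the Bernoulli consumption observations then gives per-epoch regret $\Omega(\sqrt{m\Delta_T}/b)$, where the $1/b$ factor arises from translating a consumption gap $\delta$ into an effective reward gap $\delta/b$, consistent with the dual-price interpretation of Lemma~\ref{lemma:barq_upperbound}. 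Balancing against $V_2 \gtrsim K\delta$ and optimising $\Delta_T$ produces $\Omega(m^{1/3}V_2^{1/3}T^{2/3}/b)$.

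For part (iii), the two terms come from two separate single-change instances. The motivating example of Section~\ref{motivate_eg}---one actual arm with deterministic reward $1$ for $t\leq T/2$ and $1\pm\Delta$ (adversarial) for $t > T/2$, with budget $B = T/2$---forces any planner committing $k_1$ plays to the first half to suffer $\Omega(\min(k_1,\, T/2 - k_1)\Delta)$; the minimax choice $k_1 = T/4$ gives $\Omega(T\Delta)$, while a direct check yields $W_1 = T\Delta/2$, proving $\Omega(W_1)$. Symmetrically, take one actual arm with reward $1$ and consumption $c$ for $t\leq T/2$ and $c\pm\Delta$ for $t > T/2$, with budget $B = cT/2$; the same budget-allocation game yields worst-case regret $\Omega(T\Delta/c)$. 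Since $\bar{q} = \Theta(1/c) = \Theta(1/b)$ and $W_2 = \Theta(T\Delta)$ in this instance, this matches $\Omega(\bar{q} W_2)$. The additive bound $\Omega(W_1 + \bar{q} W_2)$ follows by taking the worse of the two constructions.

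The main obstacle is the information-theoretic step in part (ii): unlike the classical non-stationary MAB setting, the learner observes the consumption process directly and could in principle exploit consumption samples to stop early or re-budget, so the standard MAB Le~Cam bound does not transfer verbatim. My plan is to run the Le~Cam argument on the consumption distributions (Bernoulli of mean $b\pm\delta$) while measuring regret in reward units via the LP primal-dual gap, so that the Fisher information actually available to the learner corresponds exactly to regret. Secondary technicalities include keeping $\delta \leq b \wedge (1-b)$ so that $c_{t,j,i} \in [0,1]$, and verifying each lower bound against the dynamic benchmark $\mathrm{OPT}(T)$ rather than the LP upper bound (for which Proposition~\ref{prop:LP} can be applied in the opposite direction, at the cost of harmless additive constants).
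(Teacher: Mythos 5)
Your overall strategy matches the paper's: parts (i) and (ii) are epoch-based constructions in the style of \cite{besbes2014stochastic}, and part (iii) is built from single-change-point budget-allocation instances generalising Section \ref{motivate_eg}. Parts (i) and (iii) are essentially the paper's argument (the paper packs both perturbations of part (iii) into one instance rather than taking the worse of two, which only costs a factor of $2$; also note your $\min(k_1, T/2-k_1)$ should be $\max(k_1, T/2-k_1)$ --- the adversary responds to the learner's allocation, and the $\min$ version is vacuous at $k_1=0$).

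The genuine gap is in part (ii), and it is exactly at the obstacle you flag. With the symmetric perturbation (cheap arm mean $b-\delta$, others $b+\delta$) and global budget $bT$, the benchmark that always plays the cheap arm consumes only $(b-\delta)T$, leaving slack $\delta T$. Each forced suboptimal play adds only $2\delta$ of excess consumption, so the budget is not breached until the learner has made more than $T/2$ suboptimal plays --- and the information-theoretic argument only guarantees $\Omega(T)$ suboptimal plays with a constant that need not exceed $1/2$. As stated, your instance may therefore force \emph{no} reward loss at all, and the ``per-epoch LP-value gap'' accounting does not rescue this, because the learner is not budget-constrained per epoch: the loss can only materialise globally through the stopping time. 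The paper's construction avoids both problems by (a) setting the optimal arm's consumption to exactly $b$, so that the baseline exactly exhausts the budget and every unit of excess consumption is converted to lost reward with no slack to absorb it, and (b) confining the epochs to the first half of the horizon and making the second half deterministic with reward $1$ and consumption $b$ for every arm, so that excess first-half consumption translates \emph{deterministically} into $1/b$ units of lost second-half reward, sidestepping the stopping-time and re-budgeting issues you correctly identify. Your proposed fix (``measure regret via the LP primal-dual gap'') is too vague to close this; you need the asymmetric consumption and an explicit conversion mechanism of the paper's type.
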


Theorem \ref{thm:lower} presents a few lower bounds for the problem. The first and the second lower bounds are adapted from the lower bound example in non-stationary MAB \citep{besbes2014stochastic} and the third lower bound is adapted from the motivating example in \ref{motivate_eg}. There are simple examples where each one of these three lower bounds dominates over the other two. In this sense, all the non-stationarity-related terms in the upper bound of Theorem \ref{thm:upper} are necessary including the parameters $1/b$ and $\bar{q}$. There is one gap between the lower bound and the upper bound with regard to the number of constraints $d$ in the term related to $V_2$. We leave it as future work to reduce the factor to $\log d$ with some finer analysis. Furthermore, we provide a sharper definition of the global nonstationarity measure $W_1^{\text{min}}$ and $W_2^{\text{min}}$ in replacement of $W_1$ and $W_2$ in Appendix \ref{sec_better_W}. It makes no essential change to our analysis, and the two measures coincide with each other on the lower bound problem instance. We choose to use $W_1$ and $W_2$ for presentation simplicity, while $W_1^{\text{min}}$ and $W_2^{\text{min}}$ can capture the more detailed temporal structure of the nonstationarity. The discussion leaves an open question that whether the knowledge of some additional structure of the environment can further reduce the global non-stationarity.

\section{Extension to Online Convex Optimization with Constraints}
\label{sec_exten}

In this section, we show how our notion of non-stationarity measure can be extended to the problem of online convex optimization with constraints (OCOwC). Similar to BwK, OCOwC also models a sequential decision making problem under the presence of constraints. Specifically, at each time $t$, the player chooses an action $\bm{x}_t$ from some convex set $\mathcal{X}$. After the choice, a convex cost function $f_t:\mathcal{X}\rightarrow \mathbb{R}$ and a concave resource consumption function $\bm{g}_{t}=(g_{t,1},....,g_{t,d}): \mathcal{X}\rightarrow \mathbb{R}^d$ are revealed. As in the standard setting of OCO, the functions $f_t$ is adversarially chosen and thus a static benchmark is consider and defined by
\begin{align*}
    \mathrm{OPT}(T) \coloneqq \ \min_{\bm{x}\in\mathcal{X}} \ & \sum_{t=1}^T f_t(\bm{x}) \\
    \text{s.t. }&\sum_{t=1}^T  g_{t,i}(\bm{x}) \le 0, \text{ for } i \in[d].
\end{align*}
Denote its optimal solution as $\bm{x}^*$ and its dual optimal solution as $\bm q^*$.

While the existing works consider the case when $\bm{g}_{t}$'s are static or sample i.i.d. from some distribution $\mathcal{P}.$ We consider a non-stationary setting where $\bm{g}_t$ may change adversarially over time. We define a global non-stationarity measure by
$$ W \coloneqq \sum_{t=1}^T \sum_{j=1}^d \| g_{t,j} - \bar{g}_{j}\|_{\infty}$$
where $\bar{g}_{j}=\frac{1}{T}\sum_{t=1}^T g_{t,j}$ and $\|f\|_\infty \coloneqq \sup_{\bm{x}\in\mathcal{X}} |f(\bm{x})|$.

The OCOwC problem considers the following bi-objective performance measure:
\begin{align*}
 \text{Reg}_1(\pi, T) = & \sum_{t=1}^T f_t(\bm{x}_t) - \sum_{t=1}^T f_{t}(\bm{x}^*) \\
\text{Reg}_2(\pi, T) = & \sum_{i=1}^d \left(\sum_{t=1}^T g_{t,i}(\bm{x}_t)\right)^+
\end{align*}
where $(\cdot)^+$ denotes the positive-part function and $\pi$ denotes the policy/algorithm.

In analogous to the single-step LPs, we consider an  optimization problem with more restricted constraints as
\begin{align*}
    \mathrm{OPT}^\prime (T) \coloneqq \ \min_{\bm{x}\in\mathcal{X}} \ & \sum_{t=1}^T f_t(\bm{x}) \\
    \text{s.t. }&g_{t,i}(\bm{x}) \le 0, \text{ for } t \in [T], \ i \in[d].
\end{align*}
Denote its optimal solution as $\bm{x}^{*'}$,  and its dual optimal solution as $\bm{q}^{*'}$. The following proposition relates the two optimal objective values.

\begin{assumption}
\label{assumption:slater}
We assume that Slater's condition holds for both the standard OCOwC program OPT$(T)$ and the restricted OCOwC program OPT$'(T)$. We assume that $f_t, \nabla f_t, g_{t,i}$, and $\nabla g_{t,i}$ are uniformly bounded on $\mathcal{X}$ and that $\mathcal{X}$ itself is bounded. Moreover, we assume that their dual optimal solutions are uniformly bounded by $\bar{q}$, i.e.
$$\bar{q} = \max \ \left\{\|\bm q^*\|_{\infty}, \|\bm q^{*'}\|_{\infty} \right\}.$$
\end{assumption}

The following proposition relates the two optimal objective values.

\begin{proposition}
\label{prop:exten}
For OCOwC problem, under Assumption \ref{assumption:slater}, we have
$$ 0\le \mathrm{OPT}^\prime(T) - \mathrm{OPT}(T) \leq \bar{q} W.$$
\end{proposition}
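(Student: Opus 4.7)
I would prove the two inequalities separately. The left one, $\mathrm{OPT}(T) \le \mathrm{OPT}'(T)$, is immediate from a feasible-set inclusion: if $\bm{x}$ satisfies $g_{t,i}(\bm{x}) \le 0$ for every $(t,i)$, then summing over $t$ gives $\sum_t g_{t,i}(\bm{x}) \le 0$ for every $i$, so every feasible point of $\mathrm{OPT}'(T)$ is feasible for $\mathrm{OPT}(T)$; since the two programs share the same objective, the more restricted one has the larger optimum.

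For the upper bound, I would appeal to strong duality for $\mathrm{OPT}'(T)$, which is licensed by Slater's condition in Assumption~\ref{assumption:slater}. Letting $\bm{q}^{*'}=(q^{*'}_{t,i})$ be the dual optimum attached to the $Td$ per-period constraints $g_{t,i}(\bm{x})\le 0$, strong duality gives
\[
\mathrm{OPT}'(T) \;=\; \min_{\bm{x}\in\mathcal{X}} \Bigl\{ \sum_{t=1}^T f_t(\bm{x}) + \sum_{t=1}^T\sum_{i=1}^d q^{*'}_{t,i}\, g_{t,i}(\bm{x}) \Bigr\}.
\]
The crucial move is to test this minimum at $\bm{x}^*$, the primal optimum of $\mathrm{OPT}(T)$; since $\bm{x}^* \in \mathcal{X}$ and $\sum_t f_t(\bm{x}^*) = \mathrm{OPT}(T)$, this yields
\[
\mathrm{OPT}'(T) \;\le\; \mathrm{OPT}(T) + \sum_{t=1}^T\sum_{i=1}^d q^{*'}_{t,i}\, g_{t,i}(\bm{x}^*),
\]
and the remaining task is to control the Lagrangian residual on the right by $\bar{q} W$.

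To do so, I would decompose $g_{t,i}(\bm{x}^*) = \bar{g}_i(\bm{x}^*) + \bigl(g_{t,i}(\bm{x}^*) - \bar{g}_i(\bm{x}^*)\bigr)$. The global-mean piece contributes $\sum_i \bar{g}_i(\bm{x}^*)\cdot\sum_t q^{*'}_{t,i}$, which is nonpositive: $\bar{g}_i(\bm{x}^*) = \frac{1}{T}\sum_t g_{t,i}(\bm{x}^*) \le 0$ by feasibility of $\bm{x}^*$ for $\mathrm{OPT}(T)$, while $q^{*'}_{t,i}\ge 0$. The deviation piece is handled by combining $q^{*'}_{t,i}\le\|\bm{q}^{*'}\|_\infty \le \bar{q}$ from Assumption~\ref{assumption:slater} with $|g_{t,i}(\bm{x}^*) - \bar{g}_i(\bm{x}^*)| \le \|g_{t,i} - \bar{g}_i\|_\infty$, producing the bound $\bar{q}\sum_{t,i}\|g_{t,i} - \bar{g}_i\|_\infty = \bar{q} W$. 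The only real obstacle is conceptual: one must recognize that the correct move is to take the dual of the \emph{tighter} program $\mathrm{OPT}'(T)$ and evaluate it against the \emph{looser} program's primal optimum $\bm{x}^*$. Once this is set up, Slater's condition supplies a $\bm{q}^{*'}$ whose infinity norm is controlled by $\bar{q}$, and the non-positivity of the mean term together with the definition of $W$ finishes the estimate.
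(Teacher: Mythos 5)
Your proof is correct and rests on the same mechanism as the paper's: strong duality for the restricted program $\mathrm{OPT}'(T)$ under Slater's condition, the bound $\|\bm q^{*'}\|_\infty \le \bar q$, feasibility of $\bm x^*$ for the aggregated constraints (which kills the mean term), and the deviation bound that produces $\bar q W$. The only cosmetic difference is that the paper packages the Lagrangian evaluation at $\bm x^*$ as a sensitivity argument via an intermediate perturbed program with right-hand sides $(g_{t,i}(\bm x^*))^+$, whereas you evaluate the Lagrangian at $\bm x^*$ directly; these are equivalent, and your version is if anything slightly more streamlined.
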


Utilizing the proposition, we can show that the gradient-based algorithm of \citep{neely2017online} achieves the following regret for the setting of OCO with non-stationary constraints. Moreover, we further extend the results and discuss in Appendix \ref{sec_proof_OCO} on an oblivious adversarial setting where $\bm{g}_t$ is sampled from some distribution $\mathcal{P}_t$ and the distribution $\mathcal{P}_t$ may change over time. 

\begin{theorem}
\label{thm:exten}
Under Assumption \ref{assumption:slater}, the Virtual Queue Algorithm of \citep{neely2017online} for any OCOwC problem (denoted by $\pi_2$) produces a decision sequence $\{\bm x_t\}$ such that
$$ \mathrm{Reg}_1(\pi_2, T) \leq O(\sqrt{T}) + \bar{q}W, $$
$$ \mathrm{Reg}_2(\pi_2, T) \leq O(d\sqrt{T}).$$
\end{theorem}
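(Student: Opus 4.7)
The plan is to reduce the non-stationary setting to the static analysis of \cite{neely2017online} by inserting $\mathrm{OPT}'(T)$ as an intermediate benchmark. Using Proposition~\ref{prop:exten}, I would decompose
\begin{equation*}
\mathrm{Reg}_1(\pi_2,T) \;=\; \underbrace{\Bigl(\sum_{t=1}^T f_t(\bm x_t) - \mathrm{OPT}'(T)\Bigr)}_{(\mathrm{I})} \;+\; \underbrace{\bigl(\mathrm{OPT}'(T) - \mathrm{OPT}(T)\bigr)}_{(\mathrm{II})},
\end{equation*}
so that $(\mathrm{II}) \le \bar q W$ holds immediately by Proposition~\ref{prop:exten}. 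The task then reduces to showing $(\mathrm{I}) = O(\sqrt{T})$.

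For $(\mathrm{I})$, I would run the Virtual Queue Algorithm's drift-plus-penalty argument but take the comparator to be $\bm x^{*'}$, the optimum of the restricted program. The key observation that enables using a static comparator against time-varying constraints is that $\bm x^{*'}$ is feasible for every $\bm g_t$ individually, i.e.\ $g_{t,i}(\bm x^{*'}) \le 0$ for every $t$ and $i$. Thus the one-step Lyapunov drift $\Delta_t \coloneqq \tfrac{1}{2}(\|\bm Q_{t+1}\|_2^2 - \|\bm Q_t\|_2^2)$ on the virtual queues $\bm Q_t$, together with the cost penalty $V f_t(\bm x_t)$ and the weight $V = \Theta(\sqrt T)$, satisfies
\begin{equation*}
\Delta_t + V f_t(\bm x_t) \;\le\; V f_t(\bm x^{*'}) + O(1)
\end{equation*}
at every $t$, where the $O(1)$ slack uses Assumption~\ref{assumption:slater} to bound $\|\bm g_t\|_\infty$ and the subgradients of $f_t$ and $\bm g_t$ uniformly over $\mathcal{X}$. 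Telescoping over $t$, rearranging, and dividing through by $V$ yields $(\mathrm{I}) \le O(\sqrt T)$, exactly as in the static Neely--Yu argument. Combined with the decomposition above, this gives the claimed bound on $\mathrm{Reg}_1$.

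For $\mathrm{Reg}_2$, the same drift analysis forces $\|\bm Q_{T+1}\|_2 = O(\sqrt T)$ coordinate-wise. Since the virtual queue update $Q_{t+1,i} = \max\{Q_{t,i} + g_{t,i}(\bm x_t),\, 0\}$ implies $\sum_{t=1}^T g_{t,i}(\bm x_t) \le Q_{T+1,i}$, taking the positive part and summing over $i \in [d]$ gives $\mathrm{Reg}_2(\pi_2,T) \le O(d\sqrt T)$. This half of the argument is independent of any comparator and is therefore unaffected by $W$.

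The main obstacle is verifying that the Neely--Yu drift-plus-penalty inequality truly carries through with time-varying $\bm g_t$. This is standard once one observes that the only property of the comparator used in their analysis is per-step feasibility, which $\bm x^{*'}$ satisfies by construction; concavity of each $g_{t,i}$ together with the uniform boundedness afforded by Assumption~\ref{assumption:slater} then supplies every remaining ingredient. Consequently, the entire cost of non-stationarity in the constraints enters the final bound only through the benchmark-conversion term $\bar q W$ supplied by Proposition~\ref{prop:exten}, and the algorithm itself requires no prior knowledge of $W$.
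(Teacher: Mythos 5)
Your proposal is correct and follows essentially the same route as the paper: the paper likewise inserts $\mathrm{OPT}'(T)$ as the intermediate benchmark, bounds $\sum_t f_t(\bm x_t)-\mathrm{OPT}'(T)$ by $O(\sqrt T)$ via Theorem 1 of \cite{neely2017online} (which is exactly the drift-plus-penalty argument you sketch, valid because $\bm x^{*'}$ is per-step feasible), applies Proposition~\ref{prop:exten} for the $\bar q W$ term, and invokes Theorem 3 of \cite{neely2017online} for $\mathrm{Reg}_2$. The only difference is that you unpack the Neely--Yu argument rather than citing it as a black box.
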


The theorem tells that the non-stationarity when measured properly will not drastically deteriorate the performance of the algorithm for the OCOwC problem as well. Moreover, the non-stationarity will not affect the constraint violation at all. Together with the results for the BwK problem, we argue that the new global non-stationarity measure serves as a proper one for the constrained online learning problems. Note that the upper and lower bounds match up to a logarithmic factor (in a worst-case sense) subject to the non-stationarity measures. The future direction can be to refine the bounds in a more instance-dependent way and to identify useful prior knowledge on the non-stationarity for better algorithm design and analysis.

\bibliographystyle{informs2014}
\bibliography{main.bib} 

\newpage

\appendix\section{Numerical Experiments}
\label{apd:num_exp}
In this section, we examine three algorithms via four numerical examples. The first algorithm is the Sliding Window-UCB (SW-UCB) algorithm presented in our paper. The second algorithm is the naive UCB algorithm without any sliding windows \citep{agrawal2014bandits}. The third algorithm is LagrangeBwK presented in \citep{immorlica2019adversarial}, which is originally proposed for the adversarial BwK problem. Note that the LagrangeBwK requires an approximation of the static best distribution benchmark. For simplicity, we put the exact value of the benchmark into the algorithm. All the regret performances are reported based on the average over 100 simulation trials. 

\subsection{Cumulative Rewards}

We first conduct two experiments and plot the cumulative reward of the three algorithms over time. 

\begin{enumerate}
    \item Example 1: One-dimensional $d=1$. A two-armed instance with \emph{one} resource constraint. $T = 10000$. $B = 5000$. The reward is set to be a constant $\mu_{t,1} = \mu_{t,2} = 0.5$. For the first half time steps $t \leq T /2$, $C_{t,1,1} = 0.5$, $C_{t,1,2} = 1.0$, while for the second half $t > T/2$, $C_{t,1,1} = 1.0$, $C_{t,1,2} = 0.5$. The dynamic optimal benchmark is to play the first arm for the first half and the second arm for the second half. Accordingly, $\text{OPT}(T) = 5000$.
    \item Example 2: Two-dimensional $d=2$. A two-armed instance with \emph{two} resource constraints. $T = 10000$. $B = 5000$. For the first half $t \leq T/2$, we set the reward to be $\mu_{t,1} = \mu_{t,2} = 0.5$. We set $C_{t,1,1} = C_{t,2,2} = 1.0$ and $C_{t,1,2} = C_{t,2,1} = 0$. For the second half $t > T/2$, we force the first arm to be sub-optimal with no reward $\mu_{t,1} = 0$ and maximum consumption $C_{t,1,1} = C_{t,2,1} = 1.0$, while changing the second arm to be optimal with $\mu_{t,2} = 0.5$ and $C_{t,1,2} = C_{t,2,2} = 0.5$. The dynamic optimal benchmark is to play both arms with equal chances for the first half and only the second arm for the second half, yielding $\text{OPT}(T) = 5000$.
\end{enumerate}
For those two examples, the cumulative rewards versus time steps are shown in Figure \ref{fig:timestep}.

\begin{figure}[h]
\begin{subfigure}{.5\textwidth}
  \centering
  \includegraphics[width=.95\linewidth]{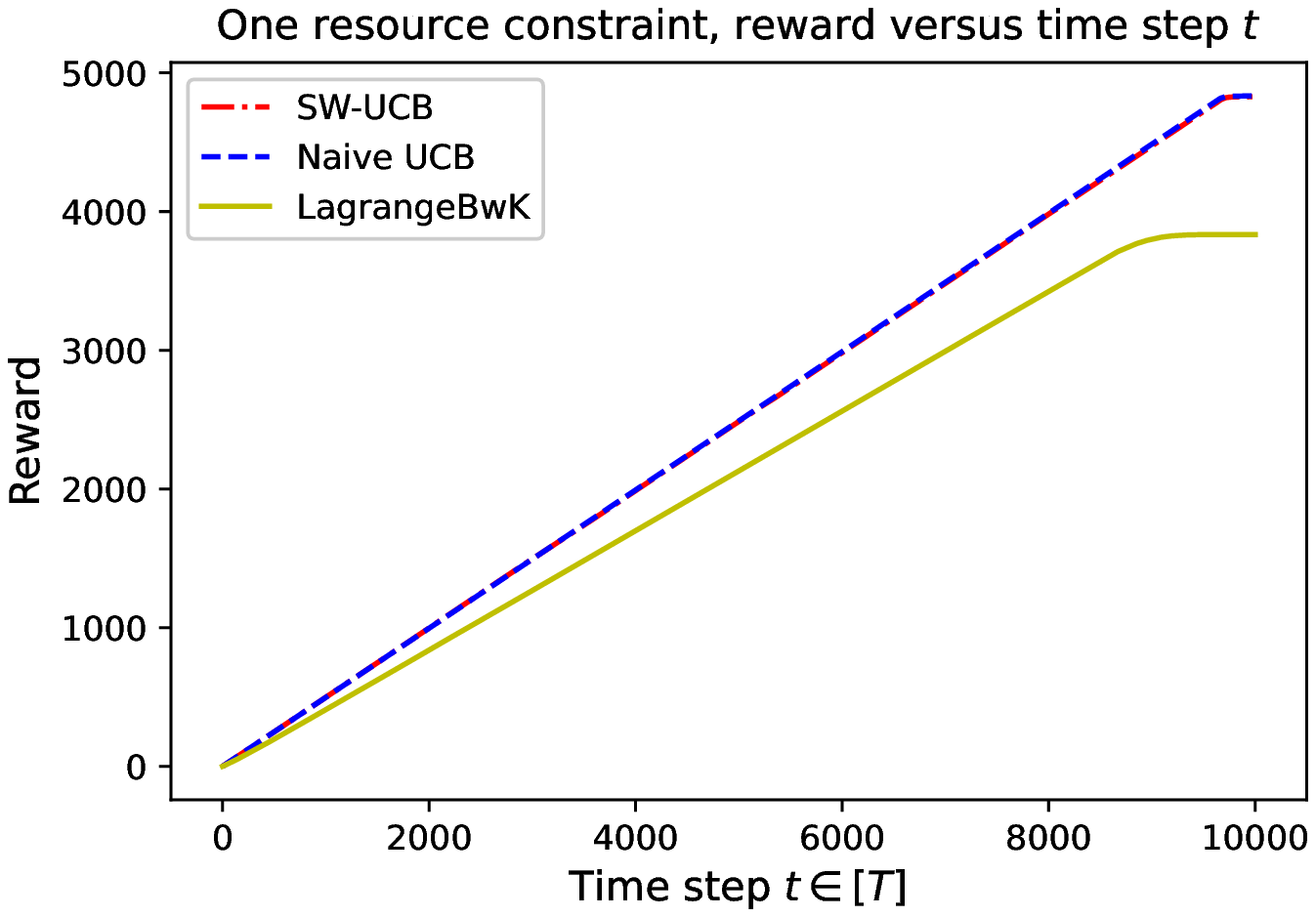}
  \caption{Example 1: One dimensional case}
  \label{fig:sfig1}
\end{subfigure}%
\begin{subfigure}{.5\textwidth}
  \centering
  \includegraphics[width=.95\linewidth]{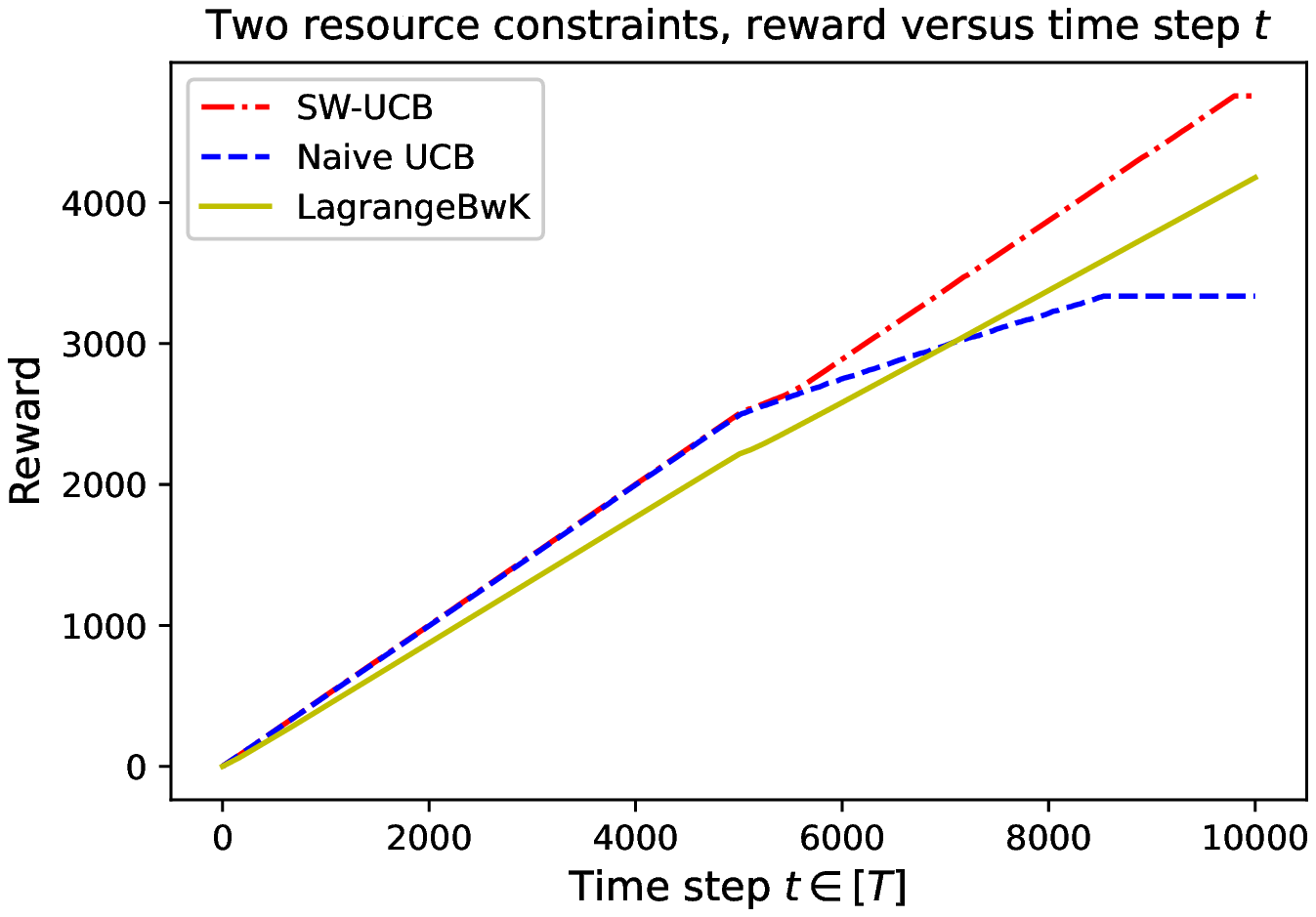}
  \caption{Example 2: Two dimensional case}
  \label{fig:sfig2}
\end{subfigure}
\caption{Cumulative rewards versus time steps.}
\label{fig:timestep}
\end{figure}

The performance of the naive UCB algorithm in one-dimensional case (see Figure \ref{fig:sfig1}) is somewhat counter-intuitive: the naive UCB algorithm originally designed for the stochastic setting performs comparable as the SW-UCB and both are better than LagrangeBwK. The reason is that the naive UCB algorithm observes the poor performance of the second arm and ends up with playing the second arm only several hundred times until the second half. So it would not take too long for the naive UCB to rectify its wrong estimate after entering the second half.

But for the two-dimensional case (see Figure \ref{fig:sfig2}), the naive UCB algorithm behaves poorly: it suffers from the abrupt change of the environment and could not adjust its approximation in time. The key difference between this and the one-dimensional setting is that here the optimal distribution for the first half requires playing both arms for sufficiently amount of time rather than simply focusing on one \emph{single} best arm for one-dimensional cases. As a result, the naive UCB algorithm accumulates linearly many observations for both arms during the first half, which significantly affects its performance during the second half. This corresponds to the slope change for the blue curve during the half way.

The algorithm designed specifically for the adversarial BwK, LagrangeBwK, performs slightly worse than our SW-UCB algorithm in both examples. This may be due to the fact that LagrangeBwK acts too conservatively for the cases that are not so \emph{adversarial}.

\subsection{Both $V$ and $W$ Matter}
In our upper bound analysis, terms that depend on $V$ and that on $W$ both appear. One may wonder: are both $V$ and $W$ necessary in the analysis? Would it be possible to reduce the terms on $V$ to $W$ or vice versa, reduce $W$ to $V$? In this subsection, we designed two examples to show that \emph{both} $V$ and $W$ can make an impact on the performance of BwK algorithms.

The non-stationary BwK problem can be factorized into two sub-problems: identifying the optimal arm distribution with respect to current environment, and finding a resource allocation rule. Larger $V$ makes the first task harder, while larger $W$ creates an obstacle for the second. The following two examples illustrate this intuition.

\begin{enumerate}
    \item Example 3: Fixed $V$, different $W$'s. A two-armed instance with two resource constraints. $T=10000$. $B=2500$. The environment has only one abrupt change point at time $\alpha T$. The local non-stationarity measure $V$ is invariant regardless of the value of $\alpha$, while $W$ depends on $\alpha$. At the first part $t\leq \alpha T$, both arms have a fixed reward $\mu_{t,1} = \mu_{t,2} = 0.5$. As for the consumption, $C_{t,1,1} = C_{t,2,2} = 0.7, \ C_{t,1,2} = C_{t,2,1} = 0.3$. At the second part $t>\alpha T$, $C_{t,1,1} = C_{t,1,2} = C_{t,2,1} = C_{t,2,2} = 1.0$, while the reward $\mu_{t,1} = 0, \ \mu_{t,2} = 0.7$. The dynamic optimal policy is to allocate all the resources to the first part and play both arms with equal chances, which leads to $\text{OPT}(T) = 2500$.
    \item Example 4: Fixed $W$, different $V$'s. A two-armed instance with two resource constraints. $T = 10000$. $B = 3125$. The time horizon is still divided into halves. The first half is stationary, while the second half is periodic but with different frequencies. At the first half $t \leq T / 2$, both arms are of fixed reward $\mu_{t,1} = \mu_{t,2} = 0.5$. The resource consumption $C_{t,1,1} = C_{t,2,2} = 1.0, \ C_{t,1,2} = C_{t,2,1} = 0$. As for the second half, the first arm now generates no reward $\mu_{t,1} = 0$ but consumes $C_{t,1,1} = C_{t,2,1} = 1.0$. The second arm's reward remains unchanged $\mu_{t,2} = 0.5$, while its consumption $C_{t,1,2} = C_{t,2,2}$ changes across the time horizon according to a piece-wise linear and periodic function ranging from $0$ to $1$. Here the global non-stationarity measure $W$ is fixed while $V$ varies with respect to the frequency. The dynamic optimal policy is to play both arms with equal chances at the first half but at the second half only the second arm if $C_{t,1,2} = C_{t,2,2} \leq 0.5$. The dynamic optimal benchmark is $\text{OPT}(T) = 3750$.
\end{enumerate}

For above two examples, the algorithm regrets under different $W$ or $V$ are shown in Figure \ref{fig:measure}.

\begin{figure}[h]
\begin{subfigure}{.5\textwidth}
  \centering
  \includegraphics[width=.95\linewidth]{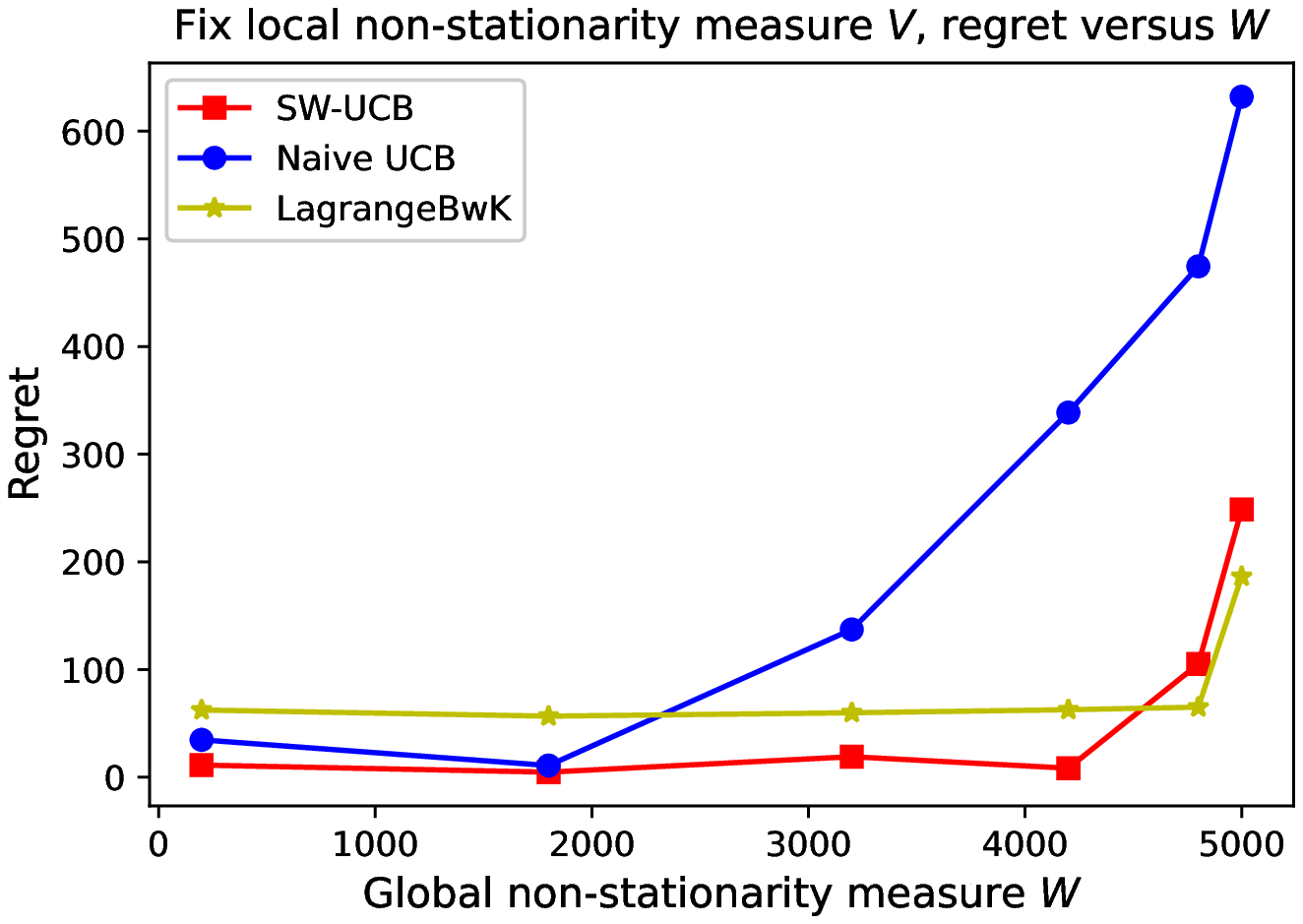}
  \caption{Example 3: Fixed $V$, different $W$'s}
  \label{fig:sfig3}
\end{subfigure}%
\begin{subfigure}{.5\textwidth}
  \centering
  \includegraphics[width=.95\linewidth]{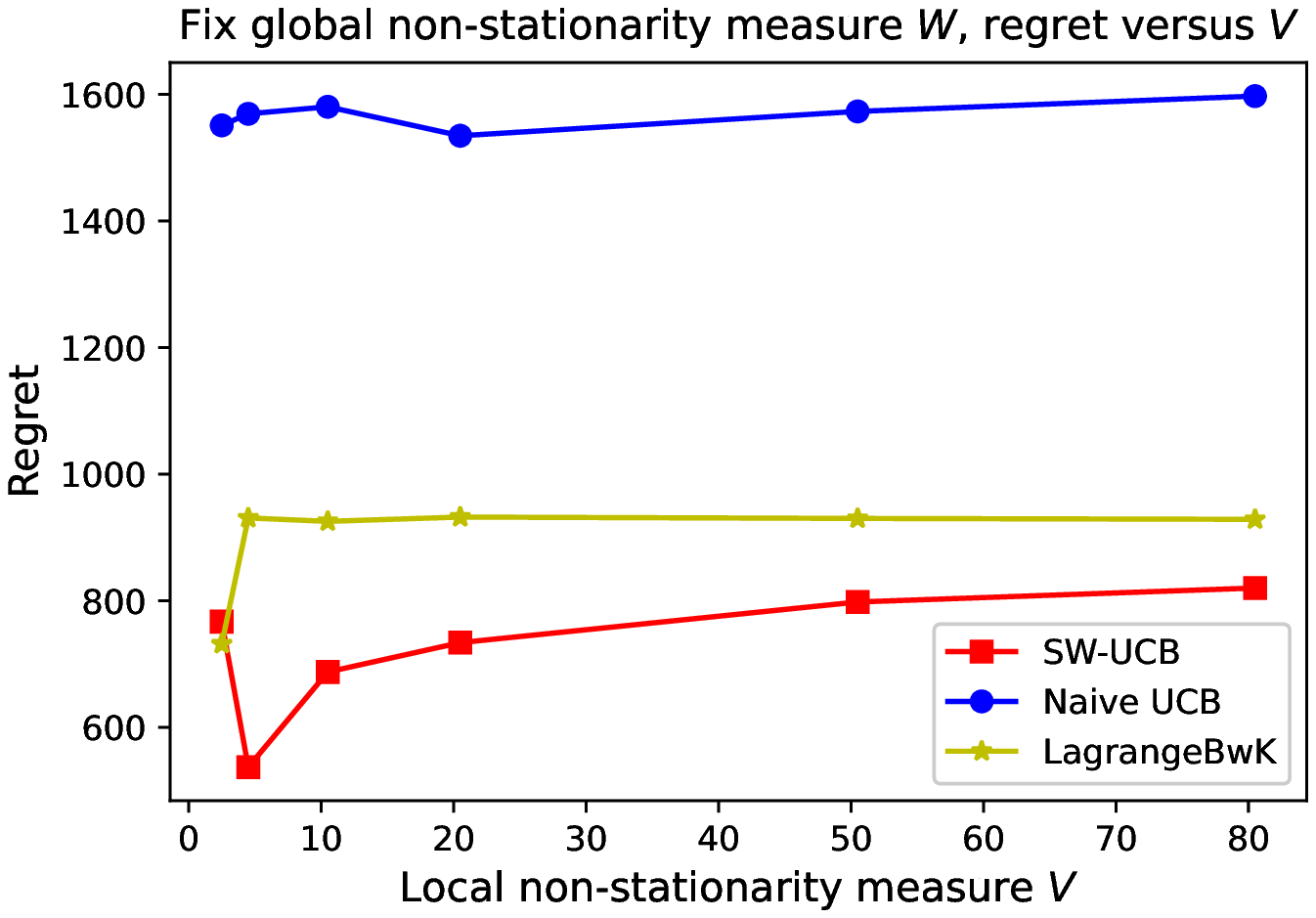}
  \caption{Example 4: Fixed $W$, different $V$'s}
  \label{fig:sfig4}
\end{subfigure}
\caption{Regret versus different $W$'s or $V$'s.}
\label{fig:measure}
\end{figure}

Both SW-UCB and naive UCB divide the resource evenly and assign each part to each time step, which makes their performance depend on the regret induced by this rule (characterized by $W$ in our analysis) to a very large extent (as is shown in Figure \ref{fig:sfig3}). In comparison, LagrangeBwK is not so sensitive to changing $W$ only, partly due to its Hedge way (see \cite{freund1997decision}) to allocate resources. For a relatively small $W$ (which means that the problem is not so \emph{adversarial}), one can expect SW-UCB to outperform LagrangeBwK, while for a very large $W$ the adversarial algorithm could be better. Note that the performance of naive UCB suffers from the abrupt change of the environment (as is shown in Figure \ref{fig:sfig2} and corresponding analysis).

When $W$ is fixed, the regret varies according to the difficulty in identifying the optimal arm distribution, as is shown in Figure \ref{fig:sfig4}. For those environments that do not vary rapidly, SW-UCB achieves a better regret result, while LagrangeBwK is still insensitive to $V$ due to its adversarial nature. When the environment changes rapidly, SW-UCB fails from precisely learning the environment, while still achieving a better result than LagrangeBwK. We note that naive UCB still performs badly in this two-dimensional case.

\section{Proofs of Section \ref{sec_prel} and Section \ref{sec_sl}}
%\subsection{Proof of Lemma \ref{lemma:LPupperbound}}
%\shang{Lemma \ref{lemma:LPupperbound} is probably not true.}
\subsection{Proof of Lemma \ref{lemma:barq_upperbound}}
\label{apd:lemma_barq}
\begin{proof}
We first inspect the null arm (say, the $m$-th arm) where $\mu_{t,m} = 0$ and $C_{t,j,m} = 0$. The global DLP must satisfy that
$$ \mu_{t,m} - \sum_{j=1}^d C_{t,j,m} - (\bm \alpha)_t \leq 0, $$
i.e.
$$ (\bm \alpha)_t \geq 0. $$
The same argument applies to the one-step LP such that $\alpha_t \geq 0$ for all $t=1,...,T$.

Note that the reward is upper bounded by $1$. Hence,
$$ \mathrm{LP}(\{\bm \mu_t\}, \{\bm C_t\}, T) \leq T, $$
$$ \mathrm{LP}(\bm \mu_t, \bm C_t) \leq 1, \quad \forall t = 1,\dots, T. $$

Therefore,
\begin{align*}
    T\| \bm q^*\|_\infty b \leq T \bm b^\top \bm q^* & \leq T\bm b^\top \bm q^* + \sum_{t=1}^T(\bm \alpha^*)_t \\
    & = \mathrm{DLP}(\{\bm \mu_t\}, \{\bm C_t\})\\
    & = \mathrm{LP}(\{\bm \mu_t\}, \{\bm C_t\}, T) \leq T,
\end{align*}
and
\begin{align*}
    \| \bm q_t^*\|_\infty b \leq \bm b^\top \bm q_t^* & \leq \bm b^\top \bm q_t^* + \alpha_t^* \\
    & = \mathrm{DLP}(\bm \mu_t, \bm C_t)\\
    & = \mathrm{LP}(\bm \mu_t, \bm C_t) \leq 1.
\end{align*}

Combining above two inequalities together, we have
$$ \bar{q} b \leq 1. $$
\end{proof}

\subsection{Proof of Proposition \ref{prop:LP}}
\label{apd:prop_LP}
\begin{proof}
The first inequality is straightforward from the fact that the feasible solutions of single-step LP$(\bm \mu_t,\bm C_t)$'s yield a feasible solution for the global LP$(\{\bm \mu_t\}, \{\bm C_t\}, T)$.

For the second inequality, we study the dual problems. By the strong duality of LP, we have
$$ \mathrm{DLP}(\{\bm \mu_t\}, \{\bm C_t\}) = \mathrm{LP}(\{\bm \mu_t\}, \{\bm C_t\}, T), $$
$$ \mathrm{DLP}(\bar{\bm \mu}, \bar{\bm C}) = \mathrm{LP}(\bar{\bm \mu}, \bar{\bm C}).$$
Denote the dual optimal solution w.r.t. $(\bar{\bm \mu}, \bar{\bm C})$ by $(\bar{\bm q}^*, \bar{\alpha}^*)$. Then
$$ \bar{\bm \mu} \leq \bar{\bm C}^\top \bar{\bm q}^* + \bar{\alpha}^* \cdot \mathbf{1}_m $$
implies that
$$ \bm \mu_t \leq \bm C_t^\top \bar{\bm q}^* + \bar{\alpha}^* \cdot \mathbf{1}_m + (\bar{\bm C} - \bm C_t)^\top \bar{\bm q}^* + (\bm \mu_t - \bar{\bm \mu}), \quad \forall t, $$
which induces a feasible solution to the dual program DLP$(\{\bm \mu_t\}, \{\bm C_t\})$, i.e.
$ (\bar{\bm q}^*, \bm \alpha^\prime), $
where
$$ \alpha^\prime_t \coloneqq \bar{\alpha}^* + \| (\bar{\bm C} - \bm C_t)^\top \bar{\bm q}^* + (\bm \mu_t - \bar{\bm \mu}) \|_{\infty}.  $$
Hence,
\begin{align*}
    \mathrm{DLP}(\{\bm \mu_t\}, \{\bm C_t\}) - T\cdot \mathrm{DLP}(\bar{\bm \mu}, \bar{\bm C})
    & \leq \sum_{t=1}^T \| (\bar{\bm C} - \bm C_t)^\top \|_{\infty} \|\bar{\bm q}^*\|_{\infty} + \sum_{t=1}^T \|\bm \mu_t - \bar{\bm \mu}\|_{\infty} \\
    & = \sum_{t=1}^T \|\bar{\bm C} - \bm C_t\|_{1} \|\bar{\bm q}^*\|_{\infty} + \sum_{t=1}^T \|\bm \mu_t - \bar{\bm \mu}\|_{\infty} \\
    &\leq \bar{q} W_2 + W_1.
\end{align*}

For the last inequality, similar duality arguments can be made with respect to $T = 1$. Taking a summation, we yield the final inequality as desired.
\end{proof}

\subsection{Proofs of Lemma \ref{lemma:confidence_bound} and Lemma \ref{lemma:reward_consumption}}
\label{apd:lemma_r_c}
\begin{lemma}[Azuma-Hoeffding's inequality]
\label{lemma:concentration}
Consider a random variable with distribution supported on $[0,1]$. Denote its expectation as $z$. Let $\bar{Z}$ be the average of $N$ independent samples from this distribution. Then, $\forall \delta > 0$, the following inequality holds with probability at least $1 - \delta$,
$$ |\bar{Z} - z| \leq \sqrt{\frac1{2N} \log\left(\frac{2}{\delta}\right)}. $$
More generally, this result holds if $Z_1, \dots, Z_N \in [0,1]$ are random variables, $ \bar{Z} = \frac{1}{N}\sum_{n=1}^N Z_t$, and $ z = \frac{1}{N}\sum_{n=1}^N \mathbb{E}[Z_n|Z_1, \dots, Z_{n-1}]$.
\end{lemma}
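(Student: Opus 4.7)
The plan is to prove the more general martingale statement directly, since the i.i.d. case drops out as a corollary. Define $D_n := Z_n - \mathbb{E}[Z_n \mid Z_1,\dots,Z_{n-1}]$, so $\{D_n\}$ is a martingale difference sequence adapted to the natural filtration, with $D_n \in [-1,1]$ and, more importantly, $D_n$ lies in an interval of length at most $1$ conditional on $\mathcal{F}_{n-1}$ (since $Z_n \in [0,1]$, so $D_n \in [-\mathbb{E}[Z_n\mid\mathcal{F}_{n-1}], 1-\mathbb{E}[Z_n\mid\mathcal{F}_{n-1}]]$). Then $\bar Z - z = \tfrac{1}{N}\sum_{n=1}^N D_n$.

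The core step is an exponential moment argument. First I would apply Chernoff's bound: for any $\lambda > 0$,
\[
\mathbb{P}\!\left(\bar Z - z \ge t\right) \le e^{-\lambda N t}\, \mathbb{E}\!\left[\exp\!\left(\lambda \sum_{n=1}^N D_n\right)\right].
\]
Next I would peel off one factor at a time using the tower property and Hoeffding's lemma applied conditionally: since $D_n$ has conditional mean zero and lies in an interval of length $1$ given $\mathcal{F}_{n-1}$, Hoeffding's lemma gives $\mathbb{E}[e^{\lambda D_n}\mid \mathcal{F}_{n-1}] \le e^{\lambda^2/8}$. Iterating yields $\mathbb{E}[\exp(\lambda \sum_n D_n)] \le e^{N\lambda^2/8}$, so the tail becomes $\exp(-\lambda N t + N\lambda^2/8)$. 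Optimizing in $\lambda$ (choose $\lambda = 4t$) gives $\mathbb{P}(\bar Z - z \ge t) \le \exp(-2Nt^2)$. A symmetric bound applies to $-(\bar Z - z)$, so a union bound yields $\mathbb{P}(|\bar Z - z|\ge t) \le 2\exp(-2Nt^2)$. Setting the right-hand side equal to $\delta$ and solving gives $t = \sqrt{\tfrac{1}{2N}\log(2/\delta)}$, which is the claimed bound.

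The main technical hurdle is the justification of Hoeffding's lemma in the conditional form. I would either cite it as a standard result or give a short inline argument: for a centered random variable $X \in [a,b]$, convexity of $x\mapsto e^{\lambda x}$ implies $e^{\lambda X} \le \tfrac{b-X}{b-a}e^{\lambda a} + \tfrac{X-a}{b-a}e^{\lambda b}$, taking (conditional) expectations reduces the problem to bounding $\log(\frac{b}{b-a}e^{\lambda a} - \frac{a}{b-a}e^{\lambda b})$ by $\lambda^2(b-a)^2/8$, which follows from a Taylor-expansion argument on the associated cumulant function. The i.i.d. claim of the lemma then follows verbatim from the general one by taking $Z_n$ i.i.d., in which case the conditional expectations reduce to the common mean $z$.
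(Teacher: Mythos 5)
Your proof is correct. The paper itself gives no proof of this lemma --- it is stated as the standard Azuma--Hoeffding inequality and used as a black box --- and your argument (martingale-difference decomposition, Chernoff bound, conditional Hoeffding's lemma giving $\mathbb{E}[e^{\lambda D_n}\mid\mathcal{F}_{n-1}]\le e^{\lambda^2/8}$, tower property, optimization at $\lambda=4t$, and a union bound over the two tails) is precisely the standard derivation, with the constants working out to the stated $\sqrt{\tfrac{1}{2N}\log(2/\delta)}$.
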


Next, we present a general bound for the normalized empirical mean of the sliding-window estimator:
\begin{lemma}
\label{lemma:normalized}
For any window size $w$, define the normalized empirical average within window size $w$ of some $Z_{t,i} \in [0,1]$ with mean $z_{t,i}$ for each arm $i$ at time step $t$ as
$$ \hat{Z}_{t,i}^{(w)} \coloneqq \frac{\sum_{s = 1\vee(t-w)}^{t-1} Z_{t} \cdot \mathbbm{1}\{i_s = i\}}{n_{t,i}^{(w)} + 1}, $$
where $n_{t, i}^{(w)} \coloneqq \sum_{s=1\vee(t-w)}^{t-1} \mathbbm{1}\{i_s = i\}$ is the number of plays of arm $i$ before time step $t$ within $w$ steps. Then for small $\delta$ such that $\log\left(\frac{2}\delta\right) > 2$, the following inequality holds with probability at least $1 - \delta$,
$$ |\hat{Z}_{t,i}^{(w)} - z_{t,i}| \leq \sqrt{\frac{2}{n_{t,i}^{(w)} + 1} \log\left(\frac{2}{\delta}\right)} + \sum_{s=1\vee(t-w)}^{t-1} |z_{s,i} - z_{s+1,i}|. $$
\end{lemma}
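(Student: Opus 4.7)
The plan is to reduce Lemma \ref{lemma:normalized} to the Azuma--Hoeffding bound of Lemma \ref{lemma:concentration} applied to the $N \coloneqq n_{t,i}^{(w)}$ realizations $\{Z_s : i_s = i,\ 1\vee(t-w) \le s \le t-1\}$, and then correct for (i) the $N+1$ in the denominator (as opposed to the usual $N$) and (ii) the drift of the per-step means $z_{s,i}$ away from the target $z_{t,i}$.

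First I would write $\hat{Z}_{t,i}^{(w)} = \tfrac{N}{N+1}\,\tilde Z$, where $\tilde Z$ is the standard empirical average over the $N$ samples from arm $i$ in the window. Let $\bar z \coloneqq \tfrac{1}{N}\sum_{s:i_s=i} z_{s,i}$ be the corresponding average mean. Lemma \ref{lemma:concentration} (conditioned on the play sequence, which is determined by past observations so the martingale version applies) gives $|\tilde Z - \bar z| \le \sqrt{\tfrac{1}{2N}\log(2/\delta)}$ with probability at least $1-\delta$. Since $Z_s,\bar z \in [0,1]$, a triangle inequality yields
\[
\bigl|\hat Z_{t,i}^{(w)} - \bar z\bigr| \;\le\; \tfrac{N}{N+1}\bigl|\tilde Z - \bar z\bigr| + \tfrac{1}{N+1}\bar z \;\le\; \sqrt{\tfrac{1}{2(N+1)}\log(2/\delta)} + \tfrac{1}{N+1}.
\]
The key algebraic step is to absorb the $\tfrac{1}{N+1}$ slack into a cleaner $\sqrt{\tfrac{2}{N+1}\log(2/\delta)}$ bound. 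Factoring $\sqrt{1/(N+1)}$ reduces the desired inequality to $\sqrt{\tfrac{1}{2}\log(2/\delta)} + \sqrt{\tfrac{1}{N+1}} \le \sqrt{2\log(2/\delta)}$; since $\sqrt{1/(N+1)} \le 1$ and $\sqrt{2}-\sqrt{1/2} = 1/\sqrt{2}$, this is equivalent to $\log(2/\delta)\ge 2$, which is precisely the hypothesis. The edge case $N=0$ (where $\hat Z_{t,i}^{(w)}=0$ and $\bar z$ is not defined) is handled directly, since the same hypothesis gives $\sqrt{2\log(2/\delta)} > 1 \ge z_{t,i}$.

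For the non-stationarity correction, I would telescope: for any $s \in \{1\vee(t-w),\dots,t-1\}$, $|z_{s,i}-z_{t,i}| \le \sum_{s'=s}^{t-1}|z_{s',i}-z_{s'+1,i}| \le \sum_{s'=1\vee(t-w)}^{t-1}|z_{s',i}-z_{s'+1,i}|$. Averaging over the $N$ played indices in the window gives $|\bar z - z_{t,i}| \le \sum_{s=1\vee(t-w)}^{t-1}|z_{s,i}-z_{s+1,i}|$. Combining with the previous display via the triangle inequality yields the stated bound.

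I expect the main obstacle to be the $N+1$ normalization, since naively applying Hoeffding over $N+1$ samples would be incorrect (we only have $N$ observations of arm $i$). The workaround above, and the precise reason for the technical assumption $\log(2/\delta)>2$, are both pinned down by the algebraic calculation in the second paragraph; everything else reduces to standard concentration and a triangle inequality.
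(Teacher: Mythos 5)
Your proof is correct and follows essentially the same route as the paper's: decompose the error into an Azuma--Hoeffding concentration term, a telescoped mean-drift term, and the $O(1/(N+1))$ correction from the inflated denominator, then absorb the latter into the enlarged constant using $\log(2/\delta)>2$. The only (immaterial) difference is presentational -- you bound each $|z_{s,i}-z_{t,i}|$ by the full telescoping sum directly rather than carrying the indicator-weighted double sum as the paper does -- and your handling of the $N=0$ edge case and the role of the hypothesis $\log(2/\delta)>2$ match the paper's.
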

\begin{proof}
The result follows from applying Lemma \ref{lemma:concentration} to the empirical mean. For the case when $n_{t,i}^{(w)} = 0$, the result automatically holds. When $n_{t,i}^{(w)} \ge 1$,
\begin{align*}
    |\hat{Z}_{t,i}^{(w)} - z_{t,i}| 
    & \leq \frac{n_{t,i}^{(w)}}{n_{t,i}^{(w)} + 1} \left|\hat{Z}_{t,i}^{(w)} - \frac{\sum_{s=1\vee(t-w)}^{t-1} z_{s,i} \cdot \mathbbm{1}\{i_s = i\}}{n_{t,i}^{(w)}} \right| +  \sum_{s=1\vee(t-w)}^{t-1} \frac{|z_{s,i} - z_{t,i}| \cdot \mathbbm{1}\{i_s = i\}}{n_{t,i}^{(w)} + 1} \\
    & \phantom{\leq} + \frac{z_{t,i}}{n_{t,i}^{(w)} + 1}\\
    & \leq \frac{n_{t,i}^{(w)}}{n_{t,i}^{(w)} + 1} \sqrt{\frac{1}{2 n_{t,i}^{(w)}} \log \left(\frac{2}\delta\right)} +  \sum_{s=1\vee(t-w)}^{t-1} \sum_{p=s}^{t-1} \ \frac{|z_{p,i} - z_{p+1,i}| \cdot \mathbbm{1}\{i_s = i\}}{n_{t,i}^{(w)} + 1} \\
    &\phantom{\leq}+ \sqrt{\frac{1}{2 (n_{t,i}^{(w)} + 1)} \log \left(\frac{2}\delta\right)}\\
    & \leq \sqrt{\frac{2}{n_{t,i}^{(w)} + 1} \log\left(\frac{2}\delta\right)} +  \sum_{p=1\vee(t-w)}^{t-1} \frac{\sum_{s=1\vee(t-w)}^{p}\mathbbm{1}\{i_s = i\}}{n_{t,i}^{(w)} + 1}|z_{p,i} - z_{p+1,i}|\\
    & \leq \sqrt{\frac{2}{n_{t,i}^{(w)} + 1} \log\left(\frac{2}\delta\right)} +  \sum_{p=1\vee(t-w)}^{t-1} |z_{p,i} - z_{p+1,i}|,
\end{align*}
where the first inequality comes from definition of $\hat{Z}_{t,i}^{(w)}$ and triangular inequality, the second inequality comes from Lemma \ref{lemma:concentration}, triangular inequality, and the fact that $z_{t,i} \leq 1 \leq \sqrt{\frac{\log(2/ \delta)}{2 (n_{t,i}^{(w)} + 1)}}$, the third inequality comes from the fact that $n_{t,i}^{(w)} \leq n_{t,i}^{(w)} + 1$ and rearranging the sum of $p$ and $s$, and the last inequality comes from the fact that $\sum_{s=1\vee(t-w)}^p \mathbbm{1}\{i_s = i\} \leq \sum_{s=1\vee(t-w)}^t \mathbbm{1}\{i_s = i\} = n_{t,i}^{(w)} \leq n_{t,i}^{(w)} + 1$.
\end{proof}

Using Lemma \ref{lemma:normalized}, we can easily derive the proof of Lemma \ref{lemma:confidence_bound}:
\begin{proof}
Replacing $z_{t,i}$ by $\mu_{t,i}$ and $C_{t,j,i}$, $\hat{Z}_{t,i}^{(w)}$ by $\hat{\mu}_{t,i}^{(w_1)}$ and $\hat{C}_{t,j,i}^{(w_2)}$ accordingly in Lemma \ref{lemma:normalized} yields the final result.
\end{proof}

We now start the main proof of Lemma \ref{lemma:reward_consumption}.
\begin{proof}
We will prove the result for reward first. By Lemma \ref{lemma:normalized}, with probability at least $1 - \frac{1}{6T}$, $\forall t \leq \min\{\tau, T\}$,
\begin{align*}
    & \phantom{\leq} \left|\sum_{s=1}^t (\mu_{s,i_s} - \mathrm{UCB}_{s,i_s}(\bm \mu_s))\right| \\
    & \leq 2\sum_{s=1}^t \sqrt{\frac{2}{n_{s,i_s}^{(w_1)} + 1} \log (12mT^3)} + \sum_{s=1}^t \sum_{p=1\vee(t-w_1)}^{s-1} \|\bm \mu_p - \bm \mu_{p+1}\|_{\infty} \\
    & \leq 2 \sum_{s=1}^t \sqrt{\frac{2\log(12mT^3)}{\sum_{p=1\vee(t-w_1)}^{s-1}\mathbbm{1}\{i_p = i_s\} + 1}} + w_1 V_1 \\
    & \leq 2 \sum_{k=0}^{\lceil t/w_1 \rceil - 1}\sum_{s=kw_1+1}^{(k+1) w_1} \frac{\sqrt{2\log(12mT^3)}}{\sqrt{\sum_{p=kw_1+1}^{s-1}\mathbbm{1}\{i_p = i_s\} + 1}} + w_1 V_1\\
    & = 2 \sum_{k=0}^{\lceil t/w_1 \rceil -1} \sum_{i=1}^m \sum_{n=1}^{N_{k,i}^{(w_1)}} \frac{\sqrt{2\log(12mT^3)}}{\sqrt{n}} + w_1 V_1,
\end{align*}
where $N_{k,i}^{(w_1)} \coloneqq \sum_{p=kw_1+1}^{(k+1)w_1} \mathbbm{1}\{i_p = i\}, \quad \sum_{i=1}^m N_{k,i}^{(w_1)} = w_1$.
Here the first inequality comes from Lemma \ref{lemma:normalized}, the second inequality comes from the fact that $\sum_{s=1}^t \sum_{p = 1\vee(t-w_1)}^{s-1} \|\bm \mu_p - \bm \mu_{p+1}\|_\infty \leq w_1 V_1$, the third comes from cutting time steps into $\lceil t/w_1\rceil$ periods, and the last comes from rearranging the sum with respect to $i$.
Then we have
\begin{align}
\label{eq:reward1}
    & \phantom{\leq} \left|\sum_{s=1}^t (\mu_{s,i_s} - \mathrm{UCB}_{s,i_s}(\bm \mu_s))\right| \nonumber \\
    & \leq 2 \sum_{k=0}^{\lceil t/w_1 \rceil -1} \sum_{i=1}^m 2\sqrt{2\log(12mT^3)}\sqrt{N_{k,i}^{(w_1)}} + w_1 V_1 \nonumber\\
    & \leq 2 \sum_{k=0}^{\lceil t/w_1 \rceil -1} 2\sqrt{2\log(12mT^3) m w_1} + w_1 V_1 \nonumber\\
    & \leq 8 \sqrt{2\log(12mT^3) m} \cdot \frac{T}{\sqrt{w_1}} + w_1 V_1,
\end{align}
where the first inequality comes from the fact that $\sum_{n=1}^N \frac{1}{\sqrt{n}} \leq 2 \sqrt{N}$, the second inequality comes from Cauchy-Schwarz inequality, and the last comes from the fact that $\lceil \frac{t}{w_1} \rceil \leq \frac{2T}{w}$.

Furthermore, applying Lemma \ref{lemma:concentration} to $r_s$ and $\mu_{s,i_s}$, we have that with probability at least $1 - \frac{1}{6mT^2}$, $\forall t \leq \min\{\tau,T\}$,
\begin{equation}
\label{eq:reward2}
    \left|\sum_{s=1}^t (r_s - \mu_{s,i_s})\right| \leq \sqrt{2T \log(12mT^3)}.
\end{equation}

Then we apply Lemma \ref{lemma:concentration} to $\mathrm{UCB}_s(\bm \mu_s)^\top \bm x_s^*$ and $\mathrm{UCB}_{s,i_s}(\bm \mu_s)$ and note that $\mathrm{UCB}_{s,i}(\bm \mu_s) \in [0,1 + \sqrt{2 \log (12mT^3)}]$. It yields that with probability at least $1-\frac{1}{6mT^2}$, $\forall t \leq \min\{\tau, T\}$,
$$ \left|\sum_{s=1}^t (\mathrm{UCB}_s(\bm \mu_s)^\top \bm x_s^* - \mathrm{UCB}_{s,i_s}(\bm \mu_s))\right| \leq (1 + \sqrt{2 \log (12mT^3)}) \sqrt{2T \log (12mT^3)}. $$
Combining all these inequalities (\ref{eq:reward1}), (\ref{eq:reward2}), (\ref{eq:reward3}) together, we have with probability at least $1-\frac{1}{2T}$, $\forall t \leq \min\{\tau, T\}$,
\begin{align}
\label{eq:reward3}
    &\phantom{\leq}\left|\sum_{s=1}^t (r_s - \mathrm{UCB}_s(\bm \mu_s)^\top \bm x_s^* )\right| \nonumber \\
    & \leq (2+ \sqrt{2 \log (12mT^3)}) \sqrt{2T \log (12mT^3)} + 8 \sqrt{2\log(12mT^3) m} \cdot \frac{T}{\sqrt{w_1}} + w_1 V_1 \nonumber\\
    & \leq 2 \sqrt{2 \log (12mT^3)}\cdot \sqrt{2T \log (12mT^3)} +8 \sqrt{2\log(12mT^3) m} \cdot \frac{T}{\sqrt{w_1}} + w_1 V_1 \nonumber\\
    & = 4 \sqrt{T} \log (12mT^3) + 8 \sqrt{2\log(12mT^3) m} \cdot \frac{T}{\sqrt{w_1}} + w_1 V_1,
\end{align}
where the first inequality comes from inequalities (\ref{eq:reward1}), (\ref{eq:reward2}), (\ref{eq:reward3}), and the second inequality comes from the fact that $\log(12mT^3) \geq 2$.

As for the resource consumption, by Lemma \ref{lemma:normalized}, with probability at least $1 - \frac{1}{6T}$, $\forall t \leq \min\{\tau,T\}$
\begin{align*}
    & \phantom{\leq} \left|\sum_{s=1}^t (C_{s,j,i_s} - \mathrm{LCB}_{s,i_s}(\bm C_{s,j}))\right| \\
    & \leq 2 \sum_{s=1}^t \sqrt{\frac{2}{n_{s,i_s}^{(w_2)} + 1} \log(12 md T^3)}+ \sum_{s=1}^t \sum_{p=1\vee(t-w_2)}^{s-1} \|\bm C_{p,j} - \bm C_{p+1,j}\|_{\infty} \\
    & \leq 2 \sum_{s=1}^t \sqrt{\frac{2\log(12 md T^3)}{\sum_{p=1\vee(t-w_2)}^{s-1}\mathbbm{1}\{i_p = i_s\} + 1}} + w_2 V_2 \\
    & \leq 2 \sum_{k=0}^{\lceil t/w_2 \rceil - 1}\sum_{s=kw_2+1}^{(k+1) w_2} \frac{\sqrt{2\log(12 md T^3)}}{\sqrt{\sum_{p=kw_2+1}^{s-1}\mathbbm{1}\{i_p = i_s\} + 1}} + w_2 V_2\\
    & = 2 \sum_{k=0}^{\lceil t/w_2 \rceil -1} \sum_{i=1}^m \sum_{n=1}^{N_{k,i}^{(w_{2})}}\frac{\sqrt{2\log(12 md T^3)}}{\sqrt{n}} + w_2 V_2,
\end{align*}
where $N_{k,i}^{(w_2)} \coloneqq \sum_{p=kw_2+1}^{(k+1)w_2} \mathbbm{1}\{i_p = i\}, \quad \sum_{i=1}^m N_{k,i}^{(w_2)} = w_2$. Here the first inequality comes from Lemma \ref{lemma:normalized}, the second inequality comes from the fact that $\sum_{s=1}^t \sum_{p = 1\vee(t-w_1)}^{s-1} \|\bm C_{p,j} - \bm C_{p+1,j}\|_\infty \leq w_2 V_2$, the third comes from cutting time steps into $\lceil t/w_1\rceil$ periods, and the last comes from rearranging the sum with respect to $i$.
Then we have
\begin{align}
\label{eq:consumption1}
    & \phantom{\leq} \left|\sum_{s=1}^t (C_{s,j,i_s} - \mathrm{LCB}_{s,i_s}(\bm C_{s,j}))\right| \nonumber \\
    & \leq 2 \sum_{k=0}^{\lceil t/w_2 \rceil -1} \sum_{i=1}^m 2\sqrt{2\log(12 md T^3)}\sqrt{N_{k,i}^{(w_2)}} + w_2 V_2 \nonumber \\
    & \leq 2 \sum_{k=0}^{\lceil t/w_2 \rceil -1} 2\sqrt{2\log(12 md T^3) m w_2} + w_2 V_2 \nonumber \\
    & \leq 8 \sqrt{2\log(12 md T^3) m} \cdot \frac{T}{\sqrt{w_2}} + w_2 V_2,
\end{align}
where the first inequality comes from the fact that $\sum_{n=1}^N \frac{1}{\sqrt{n}} \leq 2 \sqrt{N}$, the second inequality comes from Cauchy-Schwarz inequality, and the last comes from the fact that $\lceil \frac{t}{w_1} \rceil \leq \frac{2T}{w}$.

Similarly, we apply Lemma \ref{lemma:concentration} to $\mathrm{LCB}_s(\bm C_{s,j})^\top \bm x_s^*$ and $\mathrm{LCB}_{s,i_s}(\bm C_{s,j})$ and note that $\mathrm{LCB}_{s,i}(\bm C_{s,j}) \in [-\sqrt{2\log(12 md T^3)}, 1]$. It yields that with probability at least $1 - \frac{1}{6mdT^2}$, $\forall t \leq \min\{\tau,T\}$,
\begin{equation}
\label{eq:consumption2}
    \left|\sum_{s=1}^t (\mathrm{LCB}_s(\bm C_{s,j})^\top \bm x_s^* - \mathrm{LCB}_{s,i_s}(\bm C_{s,j}))\right| \leq (1+ \sqrt{2 \log(12 md T^3)}) \sqrt{2 T \log(12 md T^3)}.
\end{equation}
Furthermore, applying Lemma \ref{lemma:concentration} to $c_{s,j}$ and $C_{s,j,i_s}$ induces that with probability at least $1 - \frac{1}{6md T^2}$, $\forall t \leq \min\{\tau,T\}$,
\begin{equation}
\label{eq:consumption3}
    \left|\sum_{s=1}^t (c_{s,j} - C_{s,j,i_s})\right| \leq \sqrt{2T \log(6md T^3)}.
\end{equation}
Combining all these inequalities (\ref{eq:consumption1}), (\ref{eq:consumption2}), (\ref{eq:consumption3}) together, we have with probability at least $1 - \frac{1}{2T}$, $\forall t \leq \min\{\tau,T\}$,
\begin{align*}
    &\phantom{\leq}\left|\sum_{s=1}^t (c_{s,j} - \mathrm{LCB}_s(\bm C_{s,j})^\top \bm x_s^* )\right| \\
    & \leq (2+\sqrt{2 \log(6 md T^3)}) \sqrt{2T \log(6md T^3)} + 8 \sqrt{2\log(12 md T^3) m} \cdot \frac{T}{\sqrt{w_2}} + w_2 V_2 \\
    & \leq 4\sqrt{T}\log(6md T^3) + 8 \sqrt{2\log(12 md T^3) m} \cdot \frac{T}{\sqrt{w_2}} + w_2 V_2.
\end{align*}
\end{proof}

\subsection{Proof of Corollary \ref{coro:terminate}}
\label{apd:coro_terminate}
\begin{proof}
Without loss of generality, we only analyze the case that $\tau \leq T$, i.e. the resource constraint is violated before time step $T$.
At termination time $\tau$, we have
$$ \sum_{t=1}^\tau c_{t,j} > b T $$
for some $j \leq d$.

From the fact that $\bm x_t^*$ is a feasible solution to $\mathrm{LP}(\mathrm{UCB}_t(\bm \mu_t), \mathrm{LCB}_t(\bm C_t))$, we have
$$ \sum_{t=1}^{\tau} \mathrm{LCB}_t(\bm C_t) \bm x_t^* \leq b \tau. $$
Combining that inequality with Lemma \ref{lemma:reward_consumption}, we have with probability at least $1 - \frac{1}{2T}$
$$ \sum_{t=1}^\tau c_{t,j} \leq b \tau + 4 \sqrt{T} \log(12mdT^3) + 14 m^{\frac{1}{3}} V_2^{\frac{1}{3}} T^{\frac{2}{3}} \log^{\frac{1}{3}} (12 mdT^3) + 8\sqrt{2mT} \sqrt{\log(mdT^3)}. $$
Therefore we have
$$ b \tau + 4 \sqrt{T} \log(12mdT^3) + 14 m^{\frac{1}{3}} V_2^{\frac{1}{3}} T^{\frac{2}{3}} \log^{\frac{1}{3}} (12 mdT^3) + 8\sqrt{2mT}\sqrt{\log(mdT^3)} > b T, $$
which yields the final result.
\end{proof}

\subsection{Proof of Theorem \ref{thm:upper}}
\label{apd:thm_upper}
\begin{proof}
From Lemma \ref{lemma:confidence_bound}, we know that with probability at least $1 - \frac{1}{3T}$,
\begin{align*}
  \phantom{=}\sum_{t=1}^{\tau - 1} \mathrm{UCB}_t(\bm \mu_t)^\top \bm x_t 
    & = \sum_{t=1}^{\tau - 1} \mathrm{LP}(\mathrm{UCB}_t(\bm \mu_t), \mathrm{LCB}_t(\bm C_t))\\
    & \geq \sum_{t=1}^{\tau - 1} \mathrm{LP}\left(\bm \mu_t - \mathbf{1}\cdot \sum_{s=1\vee(t-w_1)}^{t-1} \|\bm \mu_s - \bm \mu_{s+1}\|_{\infty}, \ \bm C_t + \sum_{j=1}^d \bm E_j \cdot \sum_{s=1\vee(t-w_{2,j})}^{t-1} \|\bm C_{s,j} - \bm C_{s+1,j}\|_{\infty}\right)\\
    & \geq \sum_{t=1}^{\tau - 1} \mathrm{LP}(\bm \mu_t, \bm C_t) - \sum_{t=1}^{\tau - 1} \sum_{s=1\vee(t-w_1)}^{t-1} \|\bm \mu_s - \bm \mu_{s+1}\|_{\infty} - \bar{q} \sum_{t=1}^{\tau - 1} \sum_{j=1}^d \sum_{s=1\vee(t-w_{2,j})}^{t-1} \|\bm C_{s,j} - \bm C_{s+1,j} \|_{\infty}\\
    & \geq (\tau - 1)\mathrm{LP}(\bar{\bm \mu}, \bar{\bm C}) - (W_1 + \bar{q} W_2) - (w_1 V_1 + \bar{q} d w_2 V_2),
\end{align*}
where $\bm E_j$ is the matrix that is $\mathbf{1}^\top$ at the $j$-th row while other components all zeros.
Here the first inequality comes from Lemma \ref{lemma:confidence_bound}, the second inequality comes from the proof of Proposition \ref{prop:LP}, and the last inequality comes from applying Proposition \ref{prop:LP} to $\sum_{t=1}^{\tau-1}\mathrm{LP}(\bm \mu_t, \bm C_t)$.

For Lemma \ref{lemma:reward_consumption},
if we select $w_1 = \min\left\{\lceil m ^{\frac{1}{3}} V_1^{-\frac{2}{3}} T^{\frac{2}{3}} \log^{\frac{1}{3}} (12mT^3) \rceil, \  T\right\}$, we have $\forall t \leq \min\{\tau,T\}$
$$ \left|\sum_{s=1}^t (r_s - \mathrm{UCB}_s(\bm \mu_s)^\top \bm x_s^* )\right| \leq 4\sqrt{T} \log (12mT^3) + 14 m ^{\frac{1}{3}} V_1^{\frac{1}{3}} T^{\frac{2}{3}} \log^{\frac{1}{3}}(12mT^3) + 8 \sqrt{2 m T} \sqrt{\log(12mT^3)}. $$

Therefore, by Lemma \ref{lemma:reward_consumption} and Corollary \ref{coro:terminate}, with probability at least $1 - \frac{1}{T}$,
\begin{align*}
    &\phantom{\leq}\mathrm{LP}(\{\bm \mu_t\}, \{\bm C_t\}, T) - \sum_{t=1}^{\tau - 1} r_t\\
    & = (\mathrm{LP}(\{\bm \mu_t\}, \{\bm C_t\}, T) - \sum_{t=1}^{\tau - 1} \mathrm{UCB}_t(\bm \mu_t)^\top \bm x_t) + (\sum_{t=1}^{\tau - 1} \mathrm{UCB}_t(\bm \mu_t)^\top \bm x_t - \sum_{t=1}^{\tau - 1}r_t) \\
    & \leq (4\sqrt{T}\log(12 md T^3) + 14 m ^{\frac{1}{3}} V_2^{\frac{1}{3}} T^{\frac{2}{3}}  \log^{\frac{1}{3}}(12mdT^3) + 8 \sqrt{2mT}\sqrt{\log(12mdT^3)} + 1) \cdot \frac{\mathrm{LP}(\bar{\bm \mu}, \bar{\bm C})}{b}\\
    &\phantom{=} + 4\sqrt{T}\log(12 m T^3) + 14 m ^{\frac{1}{3}} V_1^{\frac{1}{3}} T^{\frac{2}{3}}  \log^{\frac{1}{3}}(12mT^3) + 8\sqrt{2mT} \sqrt{\log(12mT^3)} \\
    & \phantom{=}+ 2(W_1 + \bar{q} W_2) + w_1 V_1 + \bar{q}d w_2 V_2\\
    & = O\left(\frac{1}{b} \sqrt{mT} \log(md T^3)
    + m ^{\frac{1}{3}} V_1^{\frac{1}{3}} T^{\frac{2}{3}} \log^{\frac{1}{3}}(mT^3) + \frac{1}{b}\cdot m ^{\frac{1}{3}} d V_2^{\frac{1}{3}} T^{\frac{2}{3}}  \log^{\frac{1}{3}}(mdT^3)+ W_1 + \bar{q}W_2\right),
\end{align*}
where we utilize the fact that $\bar{q}d \leq \frac{1}{b}\cdot d$ by Lemma \ref{lemma:barq_upperbound} at the last equality.

Note that $\mathrm{OPT}$ is of linear $T$ (in fact, $\mathrm{OPT} \leq \mathrm{LP}(\{\bm \mu_t\}, \{\bm C_t\}, T) \leq T$), which transforms the high probability bound into the expectation bound.
\end{proof}

\subsection{Proof of Theorem \ref{thm:lower}}
\label{apd:thm_lower}
\begin{proof}
The first lower bound follows directly from \cite{besbes2014stochastic}. For here, we provide a brief description for completeness. The time horizon is divided into $\lceil \frac{T}{H} \rceil$ periods, where each is of length $H$ except possibly the last one ($H$ to be specified). For each period, the nature selects an arm to be optimal uniformly randomly and independently, which is of mean reward $r^* = \frac{1}{2} + \Delta$, where the other arms are all of $r = \frac{1}{2}$. Then from the information-theoretic arguments of the standard multi-armed bandits problem, if we select $\Delta = \Theta(\sqrt{\frac{m}{H}})$, we must suffer an expected regret of $\Omega(\sqrt{mH})$ at each period for any policy. Here we assume that $H$ is large enough such that $\Delta = \Theta(\sqrt{\frac{m}{H}}) \leq \frac{1}{2}$. Therefore, the total regret is of $\Omega(\sqrt{\frac{m}{H}} T)$, where the local non-stationarity budget $V_1 = \Theta(\Delta \frac{T}{H}) = \Theta(m^{\frac{1}{2}}H^{-\frac{3}{2}} T)$.  Note that the example yields a regret of $\Omega(m^{\frac{1}{3}} V_1^{\frac{1}{3}} T^{\frac{2}{3}})$ by selecting $H = \Theta(m^{\frac{1}{3}} V_1^{-\frac{2}{3}} T^{\frac{2}{3}})$. %As a side note, we remark that the global non-stationarity budget $W_1$ of that example is exactly of order $\Theta(m^{\frac{1}{3}} V_1^{\frac{1}{3}} T^{\frac{2}{3}})$, where the lower bounds $\Omega(W_1)$ and $\Omega(m^{\frac{1}{3}} V_1^{\frac{1}{3}} T^{\frac{2}{3}})$ coincide.

For the second lower bound, we can establish based on some modification of the first example. We now assume that each arm is of deterministic reward $r = 1$ and there is only one type of resource. The only difference among the arms is on the resource consumption. To avoid the complication of stopping time, we split the time horizon $T$ into two halves in a way such that the extra consumption of resource at the first half can be deterministically transformed into the reward loss due to limited resource at the second half. Specifically, for the second half, every arm generates a deterministic reward of $r$ and a deterministic consumption $b$. For the first half, the nature divides it in a similar way as the first lower bound example and the goal here is to generate an inevitably excessive resource consumption compared to dynamic optimal policy. There are $\lceil \frac{T}{2H} \rceil$ periods which are of length $H$. Among these periods, the nature uniformly and independently chooses an arm to be optimal. We assume that the optimal arm is of mean consumption $c^* = b$ while the others are of mean consumption $c = b + \Delta = b + \Theta(\sqrt{\frac{m}{H}})$. We can without loss of generality consider only those cases where the resource is not all consumed at the first half (otherwise, the rewards collected will be at least $\frac{1}{2} \mathrm{OPT}$ smaller than $\mathrm{OPT}$, where the conclusion is automatically fulfilled). By similar information-theoretical arguments, any policy must suffer an expected additional consumption of $\Omega(\sqrt{\frac{m}{H}} T)$ compared to the dynamic optimal policy at the first half, which in turn yields an expected regret of $\Omega(\frac{1}{b} \sqrt{\frac{m}{H}} T)$. By selecting $H = \Theta(m^{\frac{1}{3}} V_2^{-\frac{2}{3}} T^{\frac{2}{3}})$, we construct an example of regret $\Omega(m^{\frac{1}{3}} V_2^{\frac{1}{3}} T^{\frac{2}{3}} \cdot \frac{1}{b})$. %Note that the dual price $q = \Theta(\frac{1}{b})$, and the global non-stationarity budget $W_2 = \Theta(m^{\frac{1}{3}} V_{2,j}^{\frac{1}{3}} T^{\frac{2}{3}})$, which also generates a coincidence between $\Omega(\bar{q} W_2)$ and $\Omega(\frac{1}{b} m^{\frac{1}{3}} V_{2,j}^{\frac{1}{3}} T^{\frac{2}{3}})$.

The third lower bound example is constructed based on the motivating example in Section \ref{motivate_eg}. We can consider a one-armed bandit problem and divide the time horizon into two halves. There is only one resource type and the total available resource is $B=bT$ with $b < \frac12$. At the first half, the arm generates a deterministic reward $r$ and consumes resource $2 b$. At the second half, the nature randomly chooses between the following two cases: the situation either becomes better with reward $r+\Delta_1$ and consumption $2 b-\Delta_2$ or worse with reward $r-\Delta_1$ and consumption $2 b+\Delta_2$. For the first situation, the optimal policy is to reserve the resource as much as possible for the second half whilst for the second situation it is optimal to consume all the resource at the first half. One can choose $r, b$ such that $\frac{r}{b} = \Theta(\bar{q})$. With a similar argument as in Section \ref{motivate_eg}, the algorithm will suffer a regret of 
$$\Theta(T(\Delta_1 + \frac{r}{b} \Delta_2)) = \Theta(W_1 + \bar{q}W_2)$$ for at least one of the two situations. 
\end{proof}

\section{Discussions on the Benchmarks and Tightening the Measures}

\subsection{Benchmarks used in BwK literature}

In the subsection, we provide a thorough discussion on the four benchmarks for the BwK problem. Specifically, our dynamic benchmark is the strongest one in comparison with others, and we are the first one to analyze against this benchmark in a non-stochastic (non-i.i.d.) environment.

$\text{OPT}_{\text{DP}}$: It is defined by an optimal algorithm that utilizes the knowledge of the true underlying distributions and maximizes the expected cumulative reward $E[\sum_{t=1}^T r_t]$ subject to the knapsack constraints. This is called as the dynamic optimal benchmark and it is used in the stochastic BwK literature, for both problem-independent bounds \citep{badanidiyuru2013bandits, agrawal2014bandits}, and problem-dependent bounds \citep{sankararaman2021bandits, li2021symmetry}.

$\text{OPT}_{\text{FD}}$: It is called as the fixed distribution benchmark considered in the adversarial BwK problem \citep{immorlica2019adversarial}. It is also defined based on an algorithm that utilizes the knowledge of the true underlying distributions and maximizes the expected cumulative reward. But importantly, the algorithm is required to play the arms following a fixed (static) distribution throughout the horizon. As mentioned earlier, the dynamic optimal benchmark is more relevant for the practical applications of BwK than this fixed distribution benchmark.

$\text{OPT}_{\text{LP-Dynamic}}:$ It is defined by the optimal value of the following linear program (LP):
\begin{align*}
  \text{OPT}_{\text{LP-Dynamic}} \coloneqq \mathrm{LP}\left(\{ \mu_t\}, \{ C_t\}, T\right) \ \coloneqq \ & \max_{ x_1,\dots, x_T} \ \sum_{t=1}^T  \mu_t^\top  x_t \\
    & {s.t. } \sum_{t=1}^T  C_t  x_t \le  B,\quad  x_t \in \Delta_m,\ t = 1,\dots, T,
\end{align*}
and this is the benchmark used in our paper. The LP's inputs ${\mu}_{t}$ and ${C}_t$ are the vector/matrix of the expected reward and resource consumption at time $t$. The decision variables ${x}_t$ stay within the standard simplex $\Delta_m$ and it can be interpreted as a random arm play distribution for time $t$. The benchmark is also known as deterministic, fluid, or prophet benchmark. It is commonly adopted in the literature for its tractability in analysis than the dynamic benchmark $\text{OPT}_{\text{DP}}$. 

$\text{OPT}_{\text{LP-Static}}:$ It is defined by requiring ${x}_1={x}_2=...={x}_T$ in the above LP. This is apparently a weaker benchmark, and it can be viewed as a deterministic upper bound of the $\text{OPT}_{\text{DP}}$. 

The following inequality holds
$$\text{OPT}_{\text{FD}} \overset{(1)}{\le} \text{OPT}_{\text{DP}}  \overset{(2)}{\le }{\text{OPT}}_{\text{LP-Dynamic}} $$
$$\text{OPT}_{\text{FD}}\overset{(3)}{\le} \text{OPT}_{\text{LP-Static}} \overset{(4)}{\le} \text{OPT}_{\text{LP-Dynamic}}.$$

Here (1) and (4) are evident because of the extra requirement of fixed distribution (for (1)) and extra constraint of $x_1=...=x_T$ (for (4)). For (2) and (3), they can be proved by a convexity argument with Jensen's inequality on the realized sample path and the expectation. 

We make the following two remarks:

First, when the underlying environment is stochastic (stationary), the expected reward and resource consumption, $\mu_1=...=\mu_T$ and $C_1=...=C_T.$ The optimal solution of the LP in defining $\text{OPT}_{\text{LP-Dynamic}}$ automatically satisfies $x_1^* =...=x_T^*.$ So, for a stochastic environment
$$\text{OPT}_{\text{LP-Static}}=\text{OPT}_{\text{LP-Dynamic}}.$$
The existing literature on stochastic BwK (such as \cite{badanidiyuru2013bandits, agrawal2014bandits}) uses this equivalent benchmark to analyze the upper bound of the algorithm regret. 

Second, any of these benchmark definition will not restrict it to distributions that do not exhaust budget until T rounds. The LP benchmarks will always upper bound the benchmarks of $\text{OPT}_{\text{FD}}$ and $\text{OPT}_{\text{DP}}.$ The LP benchmarks allow early exhaustion as well, because the presence of the null arm allows an play that consume zero resource. This is also reflected by the inequality in the LP's constraints, otherwise if early exhaustion is not allowed, it should be equality in the LP's constraints. 

Furthermore, we allow $\mathcal{P}_t$ to be point-mass distributions and allow it to be chosen adversarially. So our non-stationary setting does not conflict with the adversarial setting and it indeed recovers the adversarial BwK as one end of the spectrum. The non-stationarity measures aim to relate the best-achievable algorithm performance with the intensity of adversity of the underlying environment.

\subsection{Tightening the global measures $W_1$ and $W_2$}

\label{sec_better_W}

In this section, we discuss how to improve the global non-stationarity measures $W_1$ and $W_2$. First, we revise the definitions of $W_1$ and $W_2$, and write the non-stationarity measures as functions:
$$ W_1(\bm \mu) \coloneqq \sum_{t=1}^T \|\bm \mu_t - \bm \mu \|_{\infty}, \quad W_2(\bm C) \coloneqq \sum_{t=1}^T \| \bm C_t - \bm C \|_1. $$
Specifically, we note that $W_1(\bar{\bm \mu})=W_1$ and $W_2(\bar{\bm C})=W_2$. 

We note from the proof of Proposition \ref{prop:LP} that the following inequalities hold 
$$ \sum_{t=1}^T \mathrm{LP}(\bm \mu_t, \bm C_t) \leq \mathrm{LP}(\{\bm \mu_t\}, \{\bm C_t\}, T) \leq T \cdot \mathrm{LP}(\tilde{\bm\mu}, \tilde{\bm C}) +W_1(\tilde{\bm{\mu}})+ \bar{q}W_2(\tilde{\bm{C}}) \leq \sum_{t=1}^T \mathrm{LP}(\bm \mu_t, \bm C_t) + 2(W_1(\tilde{\bm{\mu}})+ \bar{q} W_2(\tilde{\bm{C}}))$$
for any $\tilde{\bm\mu}$ and $\tilde{\bm C}$ if the optimal dual solution $\tilde{\bm{q}}^*$ of $\mathrm{LP}(\tilde{\bm\mu}, \tilde{\bm C})$ satisfies $\|\tilde{\bm{q}}^*\|_\infty \le \bar{q}$. 

That is, if $\|\tilde{\bm{q}}^*\|_\infty \le \bar{q}$ is satisfied, the terms of $W_1$ and $W_2$ in Proposition \ref{prop:LP} and Theorem \ref{thm:upper} can be replaced by $W_1(\tilde{\bm{\mu}})$ and $ W_2(\tilde{\bm{C}})$, respectively.

Therefore, a natural idea is to refine the two non-stationarity measures based on a combination of $\bm{\mu}$ and $\bm{C}$ that minimize the two functions, i.e., to define,
\begin{align*}
    W_1^{\min} \ \coloneqq \ \min_{\bm \mu} \ \sum_{t=1}^T \|\bm \mu_t - \bm \mu \|_{\infty},
\end{align*}
where the optimal solution is denoted by $\bm \mu^*$.
\begin{align*}
    W_2^{\min} \ \coloneqq \ \min_{\bm C} \ \sum_{t=1}^T \|\bm C_t - \bm C \|_{1},
\end{align*}
where the optimal solution is denoted by $\bm C^*$.

\begin{claim}
The optimal solutions of the two optimization problems above must lie in the convex hull of $\{\bm \mu_t\}$ and $\{\bm C_t\}$, i.e.
$$ \bm \mu^* \in \mathrm{conv}(\{\bm \mu_1, \dots, \bm \mu_T\}), \quad \bm C^* \in \mathrm{conv}(\{\bm C_1, \dots, \bm C_T\}). $$
\end{claim}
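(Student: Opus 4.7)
The plan is to establish (existentially) that each optimization problem admits a minimizer lying in the corresponding convex hull, via a two-step reduction: first clip any minimizer to the bounding box of the $\bm \mu_t$ (or $\bm C_t$), then push it further into the convex hull using LP duality.

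Step 1. The objective $f(\bm \mu) := \sum_t \|\bm \mu_t - \bm \mu\|_\infty$ is convex, continuous, and coercive, so it attains a minimum at some $\bm \mu^*$. Setting $L_i := \min_t \mu_{t,i}$, $U_i := \max_t \mu_{t,i}$, define $\bm \mu^\dagger$ coordinatewise by $\mu^\dagger_i = \max\{L_i, \min\{U_i, \mu^*_i\}\}$. Since $\mu^\dagger_i$ always lies between $\mu^*_i$ and $\mu_{t,i}$, one has $|\mu_{t,i} - \mu^\dagger_i| \leq |\mu_{t,i} - \mu^*_i|$ for every $t,i$; taking the max over $i$ and summing over $t$ yields $f(\bm \mu^\dagger) \leq f(\bm \mu^*)$. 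So an optimum may be assumed to lie in the bounding box $\prod_i [L_i, U_i]$. The identical entrywise argument handles the $\bm C^*$ case with the induced matrix $L_1$ norm (maximum column sum).

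Step 2. To push the optimum from the bounding box into $K$, I would invoke LP duality. Rewriting $\min_{\bm \mu} f(\bm \mu)$ as $\min \sum_t z_t$ in $(\bm \mu, \bm z)$ subject to $-z_t \mathbf{1} \leq \bm \mu_t - \bm \mu \leq z_t \mathbf{1}$, the dual reads
\[
\max \sum_{t=1}^T \bm u_t^\top \bm \mu_t \qquad \text{s.t.} \quad \|\bm u_t\|_1 \leq 1, \quad \sum_{t=1}^T \bm u_t = 0.
\]
Strong duality and complementary slackness then let me reconstruct a primal optimum as a convex combination $\tilde{\bm \mu} = \sum_t \lambda_t \bm \mu_t$ with $\lambda_t \geq 0$ and $\sum_t \lambda_t = 1$, placing it in $K$. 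A more geometric alternative is a separating-hyperplane argument: if $\bm \mu^* \notin K$, pick $\bm a$ with $\bm a^\top \bm \mu^* > \max_t \bm a^\top \bm \mu_t$; the minimality of $\bm \mu^*$ forces the one-sided directional derivative of $f$ at $\bm \mu^*$ along $-\bm a$ to be non-negative, and convexity of $f$ along the segment from $\bm \mu^*$ to its entry point into $K$ pins $f$ to be constant there, producing the required in-hull optimum.

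The main obstacle is Step 2. The Euclidean analogue (Fermat--Weber geometric median) is easy precisely because Euclidean projection onto $K$ is a contraction with respect to every point of $K$ -- projecting the unconstrained optimum suffices. This contractivity fails for the $L_\infty$ vector norm and the induced matrix $L_1$ norm used here, so either LP duality or a careful subgradient analysis is required; the latter has to track which coordinates achieve the $L_\infty$ maximum in each residual $\bm \mu_t - \bm \mu^*$ (respectively, which column attains the $L_1$ maximum in $\bm C_t - \bm C^*$), which is why the duality route is cleaner.
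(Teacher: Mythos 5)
The paper never actually proves this Claim: it is stated bare, and the Proposition that follows it is explicitly conditional (``when the above two claims hold''), so your attempt has to stand on its own. Step 1 is correct as far as it goes: clipping each coordinate of a minimizer into $[L_i,U_i]$ weakly decreases every residual $|\mu_{t,i}-\mu_i|$, hence every $\ell_\infty$ term, and the same entrywise argument works for the max-column-sum norm. But the bounding box is strictly larger than the convex hull, and Step 2 --- the only step that would actually place the optimum in $\mathrm{conv}(\{\bm \mu_t\})$ --- is a gesture at ``strong duality and complementary slackness'' with no construction. That gap cannot be filled, because the statement is false for the $\ell_\infty$ norm.

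Here is a counterexample: take $m=T=3$ and $\bm \mu_1=(1,0,0)$, $\bm \mu_2=(0,1,0)$, $\bm \mu_3=(0,0,1)$. Summing the three pairwise triangle inequalities $\|\bm \mu_s-\bm \mu\|_\infty+\|\bm \mu_t-\bm \mu\|_\infty\ge\|\bm \mu_s-\bm \mu_t\|_\infty=1$ gives $f(\bm \mu)=\sum_t\|\bm \mu_t-\bm \mu\|_\infty\ge 3/2$ for every $\bm \mu$, and $\bm \mu=(\tfrac12,\tfrac12,\tfrac12)$ attains $3/2$; your own dual certifies this via $\bm u_1=(\tfrac12,-\tfrac14,-\tfrac14)$ and its cyclic shifts, which are feasible and have objective value $3/2$. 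Yet on $\mathrm{conv}(\{\bm \mu_1,\bm \mu_2,\bm \mu_3\})$ (the standard simplex) one has $\|\bm x-\bm \mu_i\|_\infty=1-x_i$, so $f\equiv 2$ there: every point of the hull is strictly suboptimal, and no minimizer is a convex combination of the data. The same instance with $d=1$ refutes the $\bm C^*$ half, since the induced $L_1$ norm of a $1\times m$ matrix is the $\ell_\infty$ norm of its row. This also explains why your separating-hyperplane fallback cannot work: for $\ell_\infty$, moving from the unconstrained optimum toward the hull can strictly increase the objective, which is exactly the non-contractivity you correctly flagged as the obstacle. The honest conclusion is that the Claim requires additional hypotheses (it does hold for $m=1$, where the minimizer is the scalar median and the hull is an interval); as written it is not provable, and the conditional phrasing of the paper's subsequent Proposition suggests the authors left it unverified.
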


\begin{claim}
Any parameter pair $(\tilde{\bm \mu}, \tilde{\bm C})$ that lies in the convex hulls of $\{\bm \mu_t\}$ and $\{\bm C_t\}$ must satisfy the dual price upper bound condition, i.e.
$$ \|\tilde{\bm q}^{*} \|_{\infty} \leq \max_{t} \|\bm q^*_t \|_{\infty} \leq \bar{q},\quad \forall \tilde{\bm \mu} \in \mathrm{conv}(\{\bm \mu_1, \dots, \bm \mu_T\}), \ \tilde{\bm C}\in \mathrm{conv}(\{\bm C_1, \dots, \bm C_T\})$$
where $\tilde{\bm q}^{*}$ is the dual optimal solution of the LP$(\tilde{\bm \mu}, \tilde{\bm C})$.
\end{claim}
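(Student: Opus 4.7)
The plan is to exhibit a dual-feasible point $(\bm q, \alpha)$ for $\mathrm{DLP}(\tilde{\bm \mu}, \tilde{\bm C})$ whose $\ell_\infty$-norm is at most $\max_t \|\bm q_t^*\|_\infty$, and then to read off the claimed bound on $\tilde{\bm q}^*$ via LP duality. First, write $\tilde{\bm \mu} = \sum_{t=1}^T \lambda_t \bm \mu_t$ and $\tilde{\bm C} = \sum_{t=1}^T \gamma_t \bm C_t$ for convex weights $\{\lambda_t\}, \{\gamma_t\}$ (possibly different for $\bm \mu$ and $\bm C$). For each $t$, dual feasibility of the per-step optimum yields $\bm C_t^\top \bm q_t^* + \alpha_t^* \mathbf{1}_m \ge \bm \mu_t$, with both $\bm q_t^* \ge 0$ and $\alpha_t^* \ge 0$ (the latter forced by the null-arm constraint, exactly as in the proof of Lemma \ref{lemma:barq_upperbound}).

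The candidate dual is obtained coordinatewise: set $q_j := \max_{1 \le t \le T}(\bm q_t^*)_j$ for each $j \in [d]$, and choose $\alpha$ large enough to absorb any mismatch between the $\lambda$- and $\gamma$-weights (when the two weight vectors coincide, $\alpha = \sum_t \lambda_t \alpha_t^*$ suffices). To verify feasibility, note that $\bm C_t \ge 0$ entrywise and $\bm q \ge \bm q_t^*$ entrywise, so $\bm C_t^\top \bm q \ge \bm C_t^\top \bm q_t^* \ge \bm \mu_t - \alpha_t^* \mathbf{1}_m$ for every $t$; averaging with weights $\gamma_t$ and absorbing the $\lambda$-vs-$\gamma$ discrepancy into $\alpha$ yields $\tilde{\bm C}^\top \bm q + \alpha \mathbf{1}_m \ge \tilde{\bm \mu}$. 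Since each $\bm q_t^* \ge 0$, one obtains immediately $\|\bm q\|_\infty = \max_j \max_t (\bm q_t^*)_j = \max_t \|\bm q_t^*\|_\infty$, while the second inequality $\max_t \|\bm q_t^*\|_\infty \le \bar{q}$ is built into the definition of $\bar{q}$.

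The main obstacle is converting the existence of a dual-feasible $(\bm q, \alpha)$ with small $\ell_\infty$-norm into the stated bound on the dual optimum $\tilde{\bm q}^*$: the dual LP minimizes $\bm b^\top \bm q + \alpha$ rather than $\|\bm q\|_\infty$, so a priori some optimal $\tilde{\bm q}^*$ could have larger $\ell_\infty$-norm than our constructed $\bm q$. The cleanest resolution is to pick, from the (polyhedral) face of dual optima, a representative of minimum $\ell_\infty$-norm, and to use a complementary-slackness / basis-exchange argument to bound that minimum by our $\|\bm q\|_\infty$. Equivalently, one can verify that all downstream uses of the claim — most notably in the proof of Proposition \ref{prop:LP} — invoke only a dual-feasible solution in the LP-upper-bound step, so the explicit construction above already supplies what the subsequent analyses actually require.
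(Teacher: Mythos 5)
The paper offers no proof of this Claim at all: it is stated bare, and the Proposition that follows is explicitly conditioned on ``when the above two claims hold,'' so there is nothing to compare your argument against except its own correctness --- and it does not close. Your construction of the dual-feasible pair $(\bm q, \alpha)$ with $q_j = \max_t (\bm q_t^*)_j$ is fine, and you correctly isolate the real obstacle: the dual program minimizes $\bm b^\top \bm q + \alpha$, not $\|\bm q\|_\infty$, so exhibiting a feasible point of small supremum norm says nothing about the norm of the dual \emph{optimum}. Neither of your proposed escapes works. The ``minimum-norm representative of the optimal face'' route fails because the dual optimum can be \emph{unique} and strictly larger in norm than $\max_t \|\bm q_t^*\|_\infty$: take one actual arm plus the null arm, $d=1$, $b=0.4$, with $\mu_{1,1}=1$, $C_{1,1,1}=0.8$ at $t=1$ and $\mu_{2,1}=0.1$, $C_{2,1,1}=0.004$ at $t=2$. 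The unique single-step dual optima are $\bm q_1^* = 1.25$ and $\bm q_2^* = 0$ (and the global dual optimum is also $1.25$, so $\bar q = 1.25$), yet for $\tilde{\bm\mu}=\tfrac12(\bm\mu_1+\bm\mu_2)$, $\tilde{\bm C}=\tfrac12(\bm C_1+\bm C_2)$ the constraint $0.402\, x_1 \le 0.4$ is binding and the unique dual optimum is $\tilde q^* = 0.55/0.402 \approx 1.368 > 1.25$. So the first inequality of the Claim is false as stated, even for a point of the joint convex hull with common weights, and no complementary-slackness or basis-exchange argument can recover it.

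Your fallback --- that downstream uses need only dual feasibility --- also does not hold up. The perturbation step in the proof of Proposition \ref{prop:LP} starts from the identity $T\bm b^\top \tilde{\bm q}^* + T\tilde\alpha^* = T\cdot\mathrm{DLP}(\tilde{\bm\mu},\tilde{\bm C}) = T\cdot\mathrm{LP}(\tilde{\bm\mu},\tilde{\bm C})$, i.e., it needs the dual point to be \emph{optimal} (strong duality), not merely feasible; substituting your point replaces $T\cdot\mathrm{LP}(\tilde{\bm\mu},\tilde{\bm C})$ by the strictly larger value $T(\bm b^\top\bm q+\alpha)$ (in the example above, $0.5475$ versus the optimum $\approx 0.5473$), and the final link of the chain, which bounds $T\cdot\mathrm{LP}(\tilde{\bm\mu},\tilde{\bm C})$ by $\sum_t \mathrm{LP}(\bm\mu_t,\bm C_t) + W_1 + \bar q W_2$, again controls only the optimal dual value, not your suboptimal one. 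What the analysis genuinely requires is a dual \emph{optimal} solution of $\mathrm{LP}(\tilde{\bm\mu},\tilde{\bm C})$ with norm at most $\bar q$; the honest fixes are either to impose this as an assumption (as the paper effectively does), or to note that the argument of Lemma \ref{lemma:barq_upperbound} applies verbatim to any $\tilde{\bm\mu}\in[0,1]^m$, $\tilde{\bm C}\ge 0$ containing the null arm and yields $\|\tilde{\bm q}^*\|_\infty \le 1/b$ --- a weaker but correct bound (satisfied in the counterexample, $1.368 \le 2.5$) that suffices wherever $\bar q$ is ultimately used only through $\bar q \le 1/b$.
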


\begin{proposition}
When the above two claims hold, the terms $W_1$ and $W_2$ in the regret bound of Theorem \ref{thm:upper} can be replaced by $W_{1}^{\min}$ and $W_{2}^{\min}$. 
\end{proposition}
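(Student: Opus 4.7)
The plan is to reduce the statement to a bookkeeping application of the two preceding claims, using the strengthened form of Proposition~\ref{prop:LP} displayed just above Claim~1. The key observation is that in the proof of Theorem~\ref{thm:upper} the global measures $W_1$ and $W_2$ enter at exactly one place, namely when Proposition~\ref{prop:LP} is invoked to connect $\sum_{t=1}^{\tau-1} \mathrm{LP}(\bm\mu_t,\bm C_t)$ with $\mathrm{LP}(\{\bm\mu_t\},\{\bm C_t\},T)$ via $(\bar{\bm\mu}, \bar{\bm C})$. Once we replace the pivot point $(\bar{\bm\mu}, \bar{\bm C})$ by the minimizers $(\bm\mu^*, \bm C^*)$ of $W_1(\cdot)$ and $W_2(\cdot)$, the rest of the derivation is identical.

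Concretely, I would proceed in three steps. First, by Claim~1 the minimizers $\bm\mu^*$ and $\bm C^*$ lie in $\mathrm{conv}(\{\bm\mu_t\})$ and $\mathrm{conv}(\{\bm C_t\})$ respectively. By Claim~2, the dual optimal solution $\bm q^{**}$ of the single-step $\mathrm{LP}(\bm\mu^*,\bm C^*)$ then satisfies $\|\bm q^{**}\|_\infty \le \bar q$, which is precisely the hypothesis required for the strengthened version of Proposition~\ref{prop:LP} to apply at $(\tilde{\bm\mu},\tilde{\bm C})=(\bm\mu^*,\bm C^*)$. Substituting in yields
$$\mathrm{LP}(\{\bm\mu_t\},\{\bm C_t\},T) \le T\cdot \mathrm{LP}(\bm\mu^*,\bm C^*) + W_1^{\min} + \bar q\,W_2^{\min} \le \sum_{t=1}^T \mathrm{LP}(\bm\mu_t,\bm C_t) + 2\bigl(W_1^{\min} + \bar q\,W_2^{\min}\bigr),$$
which is the analogue of Proposition~\ref{prop:LP} with the improved measures.

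Second, I would re-run the derivation in Appendix~\ref{apd:thm_upper}. The only step where $W_1$ and $W_2$ appear is the chain of inequalities that bounds $\sum_{t=1}^{\tau-1}\mathrm{UCB}_t(\bm\mu_t)^\top \bm x_t$ from below by $(\tau-1)\mathrm{LP}(\bar{\bm\mu},\bar{\bm C}) - (W_1+\bar q W_2) - (w_1 V_1 + \bar q d w_2 V_2)$; this step directly invokes Proposition~\ref{prop:LP}. Replacing the invocation by the strengthened inequality above (anchored at $(\bm\mu^*,\bm C^*)$ instead of $(\bar{\bm\mu},\bar{\bm C})$) produces the same upper bound with $W_1,W_2$ replaced by $W_1^{\min},W_2^{\min}$. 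The local non-stationarity terms $V_1$ and $V_2$ as well as the concentration contributions from Lemma~\ref{lemma:reward_consumption} and Corollary~\ref{coro:terminate} are unaffected, since those arguments are purely sample-path concentration and never reference $(\bar{\bm\mu},\bar{\bm C})$.

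Third, one concludes by propagating the substitution through the final high-probability-to-expectation conversion, which again does not touch the global measures. The main obstacle is essentially none once Claims~1 and~2 are granted: the proposition is a one-line generalization of Proposition~\ref{prop:LP} combined with a mechanical re-check of the proof of Theorem~\ref{thm:upper}. The real content lies in Claims~1 and~2 themselves (convex-hull characterization of the $\|\cdot\|_\infty$/$\|\cdot\|_1$ Fréchet means, and monotonicity of the dual optimum under convex combinations of the primal coefficients), but these are taken as hypotheses here, so the proof reduces to the assembly described above.
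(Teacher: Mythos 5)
Your proposal is correct and follows essentially the same route as the paper: invoke Claim~1 to place $(\bm\mu^*,\bm C^*)$ in the convex hulls, Claim~2 to verify the dual-price condition $\|\tilde{\bm q}^*\|_\infty\le\bar q$, and then apply the generalized form of Proposition~\ref{prop:LP} with the pivot $(\bar{\bm\mu},\bar{\bm C})$ replaced by $(\bm\mu^*,\bm C^*)$, after which the proof of Theorem~\ref{thm:upper} goes through unchanged. The paper states this in one line; your version merely spells out the bookkeeping (including the harmless fact that the only other appearance of the pivot, the factor $\mathrm{LP}(\bar{\bm\mu},\bar{\bm C})\le 1$ multiplying the stopping-time bound, is equally bounded by $1$ at the new pivot).
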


The proof of the proposition is based on the arguments above, by replacing $W_1$ and $W_2$ in Proposition \ref{prop:LP} with $W_{1}^{\min}$ and $W_{2}^{\min}$.

\begin{figure}[ht!]
\centering
\includegraphics[scale=0.6]{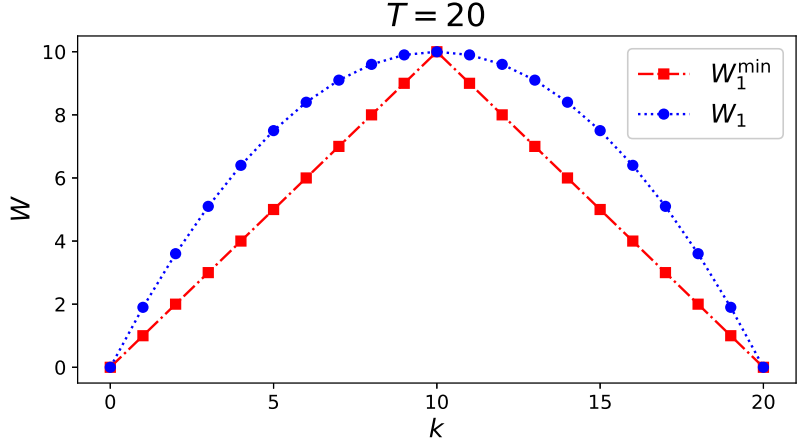}
\caption{Illustration of $W_1$ and $W_1^{\min}$.}
\label{fig:W-k}
\end{figure}

Figure \ref{fig:W-k} provides an illustration of the difference between $W_1$ (or $W_2$) and $W_1^{\min}$ (or $W_2^{\min}$) on a one-armed problem instance. Specifically, consider $\mu_1 = \cdots = \mu_k = 1$ and $\mu_{k+1} = \cdots = \mu_T = 0$ for some $k$. The optimal choice of $\mu^*$ is to set $\mu^* = 1$ for $k > \frac{T}{2}$ and $\mu^* = 0$ for $k < \frac{T}{2}$, resulting $W_1^{\min} = \min\{k, T - k\}$. Figure \ref{fig:W-k} plots the two non-stationarity measure against the change point $k.$ The result is not contradictory to the lower bound result in that when $k=\frac{T}{2}$, two definitions coincide with the same value.

\section{Discussions on the Non-bindingness of the Constraints}

\label{sec_binding}

As discussed in Section \ref{sec_sl}, the problem (and the regret bound) of non-stationary BwK can degenerate into the problem of (and the regret bound) non-stationary MAB when all the resource constraints are non-binding. In this section, we elaborate on the statement for two cases: (i) the benchmark LP and all the single-step LPs have all constraints non-binding; (ii) the benchmark LP have all constraints non-binding but some single-step LPs have some binding constraints. Throughout this section, we focus our discussion for the case when $V_1, V_2>0.$

\subsection{Non-binding for All LPs}

When the benchmark LP and all the single-step LPs have all constraints non-binding, our regret bound will reduce to the regret bound for non-stationary MAB \citep{besbes2014stochastic}. Specifically, when all the LPs have only non-binding constraints, we have
$$\bm q_t^* = \bm q^* = 0.$$
Hence $\bar{q} = 0$ and as a result, the regret bound in Theorem \ref{thm:upper} becomes
\begin{align*}
    \mathrm{Reg}(\mathrm{Alg}, T) & = O(\frac{1}{b}\cdot \sqrt{m T} \log (mdT^3) + m^{\frac{1}{3}} V_1^{\frac{1}{3}} T^{\frac{2}{3}}  \log^{\frac{1}{3}}(mT^3) + \frac{1}{b} \cdot m^{\frac{1}{3}} V_2^{\frac{1}{3}} T^{\frac{2}{3}}  \log^{\frac{1}{3}}(mdT^3) + W_1)\\
    & = \tilde{O}(m^{\frac{1}{3}} V_1^{\frac{1}{3}} T^{\frac{2}{3}}  + \frac{1}{b} \cdot m^{\frac{1}{3}} V_2^{\frac{1}{3}} T^{\frac{2}{3}}  + W_1), \quad \forall V_1, V_2 > 0.
\end{align*}
Compared to the regret bound for non-stationary MAB, there are still two additional terms (the terms of $V_2$ and $W_1$). In what follows, we discuss how to remove these two terms in the analysis. 

First, it is easy to get rid of the term related to $W_1$. In Proposition \ref{prop:LP}, we make use of $\mathrm{LP}(\bar{\bm \mu}, \bar{\bm C})$ as a bridge to relates $\sum_{t=1}^T \mathrm{LP}(\bm \mu_t, \bm C_t)$ and $\mathrm{LP}(\{\bm \mu_t\}, \{\bm C_t\}, T)$, and this causes the term of $W_1.$ When all the constraints are binding for all the LP's, then the following equality naturally holds
$$ \sum_{t=1}^T\mathrm{LP}(\bm \mu_t, \bm C_t) = \mathrm{LP}(\{\bm \mu_t\}, \{\bm C_t\}, T). $$
As a result, the analysis no longer needs $\mathrm{LP}(\bar{\bm \mu}, \bar{\bm C})$ as a bridge any more, which removes the $W_1$ term.

Second, for the term related to $V_2,$ we first point out that this term comes from the bound on the stopping time in Corollary \ref{coro:terminate}. When the single-step LPs have only non-binding constraints, and the non-bindingness remains stable with right-hand-side being $b'<b,$ we can remove this term $V_2.$ By remaining stable with $b'$, we mean the single-step LPs have only non-binding constraints if we replace the right-hand-side of the constraints $b$ with $b'.$ In this case, for a sufficiently large $T$ such that
$$(b-b^\prime)T \ge 4 \sqrt{T} \log(12mdT^3) + 9 m^{\frac{1}{3}} V_2^{\frac{1}{3}} T^{\frac{2}{3}} \log^{\frac{1}{3}} (12mdT^3),$$
we can apply the arguments in Lemma \ref{lemma:reward_consumption} to show that the stopping time $\tau\ge T$ with high probability. And thus we get rid of the $V_2$ term.

\subsection{Only Benchmark LP Non-binding}

When the benchmark LP is non-binding but the single-step LP has binding constraints, we show that the regret bound cannot be reduced to the case of non-stationary MAB. Specifically, consider a one-armed bandit problem instance with an even $T$. There are two types of resources, where each kind is of $\frac{2T}{3}$ budget. For the first half time periods, the arm has reward $1$ and consumes $1$ unit of resource $1$. For the second half, the arm still has reward $1$ but consumes $1$ unit of resource $2$ instead. The global LP is of course nonbinding with $\frac{T}{2} < \frac{2T}{3}$, while the one-step LPs are all binding with $\frac{2}{3} < 1$. The problem instance is in a similar spirit as the motivating example in Section \ref{motivate_eg}. In this case, $V_1=W_1=0$ but $\bar{q}>0$, the terms related to $V_2$ and $W_2$ cannot be removed. 

\section{Matching the Existing Bound of Stochastic (Stationary) BwK}

\label{sec_Alternative}

In this section, we will provide an alternative way to define the upper and lower confidence bounds using a slightly different concentration inequality. As a result, we derive an alternative regret upper bound for the non-stationary BwK problem, which matches the bound \citep{agrawal2014bandits} when the environment becomes stationary.

We first state the concentration inequality used in the previous works as a replacement of Lemma \ref{lemma:concentration}. 

\begin{lemma}\citep{kleinberg2008multi,babaioff2015dynamic,badanidiyuru2013bandits,agrawal2014bandits}.
\label{lemma:concentration_alter}
Consider some distribution with values in $[0,1]$. Denote its expectation by $z$. Let $\bar{Z}$ be the average of $N$ independent samples from this distribution. Then, $\forall \gamma > 0$, the following inequality holds with probability at least $1 - e^{\Omega (\gamma)}$,
$$ |\bar{Z} - z| \leq \mathrm{rad}(\bar{Z}, N) \leq 3 \mathrm{rad}(z, N), $$
where $\mathrm{rad}(a, b) \coloneqq \sqrt{\frac{\gamma a}{b}} + \frac{\gamma}{b}$. More generally, this result holds if $Z_1, \dots, Z_N \in [0,1]$ are random variables, $N \bar{Z} = \sum_{t=1}^N Z_t$, and $N z = \sum_{t=1}^N \mathbb{E}[Z_t|Z_1, \dots, Z_{t-1}]$.
\end{lemma}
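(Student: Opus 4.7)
The plan is to derive this concentration inequality in three stages: first establish a one-sided $z$-centered tail bound via a Bernstein/multiplicative-Chernoff argument, then bootstrap it into the claimed $\bar Z$-centered bound $|\bar Z - z| \leq \mathrm{rad}(\bar Z, N)$ by an algebraic inversion, and finally derive $\mathrm{rad}(\bar Z, N) \leq 3\,\mathrm{rad}(z, N)$ from the first-stage bound.

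For the first stage, I would apply the multiplicative Chernoff bound (or Bennett/Bernstein) to the sum $S = \sum_{t=1}^N Z_t$ with $\mathbb{E}[S] = Nz$ and $Z_t \in [0,1]$. The standard statement gives, for any $\lambda > 0$,
\begin{equation*}
\mathbb{P}\bigl(|S - Nz| \geq \lambda\bigr) \;\leq\; 2\exp\!\left(-\frac{\lambda^2/2}{Nz + \lambda/3}\right).
\end{equation*}
Setting $\lambda = \sqrt{\gamma Nz} + \gamma$ and simplifying shows that the right-hand side is at most $2e^{-\Omega(\gamma)}$, so with probability $1 - e^{-\Omega(\gamma)}$ we have $|\bar Z - z| \leq \sqrt{\gamma z/N} + \gamma/N = \mathrm{rad}(z, N)$. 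For the martingale generalisation stated in the lemma, I would invoke Freedman's inequality in place of Bernstein, which gives the same quantitative form once one observes that the conditional variance of a $[0,1]$-valued summand is bounded by its conditional mean.

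For the inversion $|\bar Z - z| \leq \mathrm{rad}(\bar Z, N)$, condition on the event from stage one. If $\bar Z \geq z$ then $\sqrt{z} \leq \sqrt{\bar Z}$ makes the bound immediate. If $\bar Z < z$, set $u = \sqrt{z}$ and $v = \sqrt{\bar Z}$; the inequality $z - v^2 \leq \sqrt{\gamma/N}\, u + \gamma/N$ is a quadratic in $u$, solved by
\begin{equation*}
u \;\leq\; \tfrac{1}{2}\sqrt{\gamma/N} + \sqrt{v^2 + \tfrac{5\gamma}{4N}} \;\leq\; v + c\sqrt{\gamma/N}
\end{equation*}
for an absolute constant $c$. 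Multiplying by $\sqrt{\gamma/N}$ converts this to $\sqrt{\gamma z/N} \leq \sqrt{\gamma\bar Z/N} + c\gamma/N$, which after adjusting the constant in $\gamma$ (absorbed into the $e^{-\Omega(\gamma)}$ tail) gives exactly the form $\mathrm{rad}(\bar Z, N)$.

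For the final factor-of-3 comparison, I plug the stage-one bound $\bar Z \leq z + \mathrm{rad}(z, N)$ into $\sqrt{\bar Z}$:
\begin{equation*}
\sqrt{\bar Z} \;\leq\; \sqrt{z} + \sqrt{\mathrm{rad}(z, N)} \;\leq\; \sqrt{z} + \sqrt[4]{\gamma z/N} + \sqrt{\gamma/N},
\end{equation*}
and using $\sqrt[4]{\gamma z/N} \leq \tfrac{1}{2}(\sqrt{z} \cdot (\gamma/N)^{1/4}/\sqrt{z})$-style AM-GM splits to show $\sqrt{\gamma\bar Z/N} \leq 2\sqrt{\gamma z/N} + 2\gamma/N$, which yields $\mathrm{rad}(\bar Z, N) \leq 3\,\mathrm{rad}(z, N)$. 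The main technical obstacle is tracking absolute constants cleanly through the quadratic inversion while preserving the exponent $e^{-\Omega(\gamma)}$; this is handled by letting the $\Omega(\gamma)$ rate absorb any loss from rescaling $\gamma$ by an absolute constant. The martingale extension is the other subtle point, but follows verbatim from Freedman's inequality, since for $[0,1]$-valued differences the predictable quadratic variation is dominated by the predictable mean.
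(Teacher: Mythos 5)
The paper does not prove this lemma at all: it is imported verbatim, with attribution, from \cite{kleinberg2008multi,babaioff2015dynamic,badanidiyuru2013bandits,agrawal2014bandits}, and is used as a black box in Appendix \ref{sec_Alternative}. So there is no in-paper argument to compare against; what you have written is a reconstruction of the standard proof from those references, and it is essentially sound. Your three-stage plan (a $z$-centered Bernstein/Chernoff bound with $\mathrm{Var}(Z_t)\leq \mathbb{E}[Z_t]$ for $[0,1]$-valued variables, a quadratic inversion in $\sqrt{z}$ to pass to the empirical radius $\mathrm{rad}(\bar Z,N)$, and an AM--GM step $\left(\gamma z/N\right)^{1/4}\leq \tfrac12\bigl(\sqrt{z}+\sqrt{\gamma/N}\bigr)$ for the factor-of-$3$ comparison) is exactly how the cited works establish the result, and your observation that constant rescalings of $\gamma$ are absorbed into the $e^{-\Omega(\gamma)}$ tail (note the lemma's displayed probability $1-e^{\Omega(\gamma)}$ is a sign typo for $1-e^{-\Omega(\gamma)}$) is the right way to handle the bookkeeping.

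The one place your sketch is thinner than it should be is the martingale generalisation. There, $Nz=\sum_t \mathbb{E}[Z_t\mid Z_1,\dots,Z_{t-1}]$ is itself a random variable, and Freedman's inequality bounds $\mathbb{P}\bigl(S-Nz\geq\lambda,\ V\leq v\bigr)$ only for a \emph{deterministic} variance proxy $v$; saying the predictable quadratic variation is dominated by the predictable mean is the correct observation but does not by itself yield a tail bound at the random level $\lambda=\sqrt{\gamma Nz}+\gamma$. Closing this requires an extra step --- a union bound over a geometric grid of values of $v$ (equivalently of $Nz$), or a self-normalised/stopping-time version of Freedman --- which costs only a constant in the $\Omega(\gamma)$ exponent but is not ``verbatim.'' You also need to apply the inequality to both $\pm(S-Nz)$ for the two-sided claim. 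Since the paper treats the lemma as a citation, none of this affects the paper, but if you intend your argument to stand alone, that peeling step is the piece to write out.
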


Then we can revise confidence bounds in Algorithm \ref{alg:sl} as follows:
$$ \mathrm{UCB}_{t,i}(\bm \mu_t) \coloneqq \hat{\mu}_{t,i}^{(w_1)} + 2\cdot \mathrm{rad}(\hat{\mu}_{t,i}^{(w_1)}, n_{t,i}^{(w_1)} + 1), $$
$$ \mathrm{LCB}_{t,j,i}(\bm C_t) \coloneqq \hat{C}_{t,j,i}^{(w_{2})} - 2\cdot \mathrm{rad}(\hat{C}_{t,j,i}^{(w_{2})}, n_{t,i}^{(w_{2})} + 1), $$
where we will choose the window sizes $w_1,w_{2}$ according to $V_1, V_{2}$.

With the new concentration inequality and confidence bounds, Lemma \ref{lemma:reward_consumption} can be replaced by the following two lemmas. 

\begin{lemma}
With probability at least $1 - \frac{1}{T}$, we have
\begin{align*}
    \left|\sum_{t=1}^T (r_t - \mathrm{UCB}_t(\bm \mu_t)^\top \bm x_t)\right| &\leq O\left( \sqrt{\log(mT^2) \sum_{t=1}^T r_t} + \sqrt{\log(mT^2) m} \cdot \frac{T}{\sqrt{w_1}}+ w_1 V_1 + \log(mT^2)\right).
\end{align*}
If $V_1 > 0$ and we set $w_1 = \Theta(m ^{\frac{1}{3}}V_1^{-\frac{2}{3}} T^{\frac{2}{3}} \log^{\frac{1}{3}}(mT^2) )$, then
$$ \left|\sum_{t=1}^T (r_t - \mathrm{UCB}_t(\bm \mu_t)^\top \bm x_t )\right| = O\left(\sqrt{\log(mT^2) \sum_{t=1}^T r_t} + O(m ^{\frac{1}{3}} V_1^{\frac{1}{3}} T^{\frac{2}{3}} \log^{\frac{1}{3}}(mT^2)\right) \coloneqq \beta_1, $$
with probability at least $1 - \frac{1}{T}$.
\end{lemma}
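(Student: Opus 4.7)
The proof follows the blueprint of Lemma \ref{lemma:reward_consumption}, but replaces the Hoeffding-style radius $\sqrt{(\log)/n}$ by the Bernstein-style radius $\mathrm{rad}(z,N) = \sqrt{\gamma z/N} + \gamma/N$ from Lemma \ref{lemma:concentration_alter}. It is this change that converts the leading $\sqrt{T}$ into $\sqrt{\sum_t r_t}$ and thereby matches the stationary stochastic BwK bound of \cite{agrawal2014bandits}.

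\textbf{Step 1: Sliding-window concentration analog of Lemma \ref{lemma:normalized}.} Apply Lemma \ref{lemma:concentration_alter} with $\gamma = \Theta(\log(mT^2))$ to the conditionally-bounded sequence $(r_{s,i}\mathbbm{1}\{i_s=i\})_{s\in[1\vee(t-w_1),\,t-1]}$ and run the same bias-decomposition as in the proof of Lemma \ref{lemma:normalized}. This gives, with probability $\ge 1-1/(3T)$ uniformly over $t,i$,
\[
 |\hat{\mu}_{t,i}^{(w_1)} - \mu_{t,i}| \;\le\; 2\,\mathrm{rad}\bigl(\hat{\mu}_{t,i}^{(w_1)},\, n_{t,i}^{(w_1)}+1\bigr) \;+\; \sum_{s=1\vee(t-w_1)}^{t-1}\|\bm\mu_s-\bm\mu_{s+1}\|_\infty.
\]
Hence by the definition of $\mathrm{UCB}_{t,i}(\bm\mu_t)$ and the self-bounding inequality $\mathrm{rad}(\hat z,N)\le 3\mathrm{rad}(z,N)$ (second statement of Lemma \ref{lemma:concentration_alter}),
\[
 0 \;\le\; \mathrm{UCB}_{t,i}(\bm\mu_t) - \mu_{t,i} \;\le\; 12\,\mathrm{rad}\bigl(\mu_{t,i},\, n_{t,i}^{(w_1)}+1\bigr) \;+\; \sum_{s=1\vee(t-w_1)}^{t-1}\|\bm\mu_s-\bm\mu_{s+1}\|_\infty,
\]
up to an absolute constant. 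The crucial gain over Lemma \ref{lemma:reward_consumption} is the factor $\sqrt{\mu_{t,i}}$ inside the radius.

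\textbf{Step 2: Summing the radii with a double Cauchy--Schwarz.} Partition $[T]$ into $\lceil T/w_1\rceil$ blocks of length $w_1$, and within block $k$ let $N_{k,i}^{(w_1)}$ be the number of pulls of arm $i$. Inside a block,
\[
 \sum_{t\in\text{block }k}\sqrt{\frac{\mu_{t,i_t}}{n_{t,i_t}^{(w_1)}+1}} \;\le\; \sqrt{\sum_{t\in k}\mu_{t,i_t}}\cdot\sqrt{\sum_{i=1}^m\sum_{n=1}^{N_{k,i}^{(w_1)}}\tfrac{1}{n}} \;\le\; \sqrt{\sum_{t\in k}\mu_{t,i_t}}\cdot\sqrt{m\log(w_1+1)},
\]
and a second Cauchy--Schwarz across blocks yields $\sum_t\sqrt{\mu_{t,i_t}/(n+1)} = O\bigl(\sqrt{(\sum_t\mu_{t,i_t})(mT/w_1)\log w_1}\bigr)$, while the linear-in-$1/(n+1)$ part of $\mathrm{rad}$ contributes only $O(m\log(mT^2)\cdot T/w_1)$, which is absorbed. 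The cumulative bias obeys $\sum_{t=1}^T\sum_{s\in[t-w_1,t-1]}\|\bm\mu_s-\bm\mu_{s+1}\|_\infty \le w_1V_1$. Two further applications of Lemma \ref{lemma:concentration_alter} convert $\sum_t\mu_{t,i_t}$ into $\sum_t r_t$ and $\sum_t \mathrm{UCB}_{t,i_t}(\bm\mu_t)$ into $\sum_t \mathrm{UCB}_t(\bm\mu_t)^\top\bm x_t^*$, each at scale $\sqrt{\log(mT^2)\sum_t r_t}$, absorbed into the leading term. A union bound over the $O(1)$ failure events keeps the overall probability $\ge 1-1/T$.

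\textbf{Step 3: Optimizing $w_1$.} Plugging $w_1 = \Theta\bigl(m^{1/3}V_1^{-2/3}T^{2/3}\log^{1/3}(mT^2)\bigr)$ balances the $\sqrt{m\log(mT^2)}\cdot T/\sqrt{w_1}$ term against $w_1V_1$, yielding both to order $m^{1/3}V_1^{1/3}T^{2/3}\log^{1/3}(mT^2)$, which gives the claimed $\beta_1$.

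\textbf{Main obstacle.} The delicate point is Step 2: keeping the non-stationarity bias strictly additive while the ``variance'' contribution remains multiplicative in $\sqrt{\sum_t r_t}$. The self-bounding step $\mathrm{rad}(\hat z,N)\le 3\mathrm{rad}(z,N)$ must be applied \emph{before} Cauchy--Schwarz so that the Cauchy--Schwarz factor $\sum_i\log(N_{k,i}^{(w_1)}+1) \le m\log(w_1/m+1)$ (by Jensen's inequality) appears as a pure $m$-logarithmic factor rather than leaking into the variance term. Handling the two Cauchy--Schwarz steps (intra-block and inter-block) without double-counting windows is where the proof requires the most care; once this is in place, the remaining manipulations are routine adaptations of the corresponding arguments in \cite{agrawal2014bandits} and of Lemma \ref{lemma:reward_consumption}.
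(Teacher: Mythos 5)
Your overall route is the intended one: the paper states this lemma without proof, presenting it as the direct analogue of Lemma \ref{lemma:reward_consumption} obtained by swapping the Hoeffding radius of Lemma \ref{lemma:concentration} for the self-bounding radius $\mathrm{rad}(z,N)$ of Lemma \ref{lemma:concentration_alter}, exactly as in \cite{agrawal2014bandits}; your three-step decomposition (sliding-window concentration with additive drift bias, block-wise summation of the radii, two further martingale concentrations to pass from $\mu_{s,i_s}$ to $r_s$ and from $\mathrm{UCB}_{s,i_s}$ to $\mathrm{UCB}_s(\bm\mu_s)^\top\bm x_s^*$) is the right reconstruction and there is no competing argument in the paper to compare against.

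One execution detail in your Step 2 is worth fixing. As written, you apply Cauchy--Schwarz in the form $\sum_{t\in k}\sqrt{\mu_{t,i_t}/(n_{t,i_t}^{(w_1)}+1)}\le\bigl(\sum_{t\in k}\mu_{t,i_t}\bigr)^{1/2}\bigl(\sum_{i}\sum_{n\le N_{k,i}^{(w_1)}}\tfrac1n\bigr)^{1/2}$, which injects the harmonic-sum factor $\sqrt{m\log w_1}$ into the \emph{variance} term and ultimately yields $\sqrt{\gamma\,(\sum_t\mu_{t,i_t})\,m(T/w_1)\log w_1}$ --- a $\sqrt{\log w_1}$ worse than the stated $\sqrt{\log(mT^2)\,m}\cdot T/\sqrt{w_1}$, and this excess cannot be traded into the $\sqrt{\gamma\sum_t r_t}$ term by AM--GM without producing a term linear in $\sum_t r_t$. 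The clean order of operations, which is also the one in \cite{agrawal2014bandits}, is to first sum the radii over the pull index \emph{within each arm and block}, using $\sum_{n=1}^{N}\sqrt{\mu_i/n}=O(\sqrt{\mu_i N})$ and $\sum_{n=1}^N \gamma/n=O(\gamma\log N)$, and only then apply Cauchy--Schwarz across the $m$ arms and across the $\lceil T/w_1\rceil$ blocks; this gives $O\bigl(\sqrt{\gamma m (T/w_1)\sum_t\mu_{t,i_t}}\bigr)\le O\bigl(\sqrt{\gamma m}\,T/\sqrt{w_1}\bigr)$ with no stray logarithm (the logarithm survives only in the additive $\gamma m(T/w_1)\log w_1$ remainder, which is dominated after the choice of $w_1$). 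With that reordering, together with your correct observations that the self-bounding step $\mathrm{rad}(\hat z,N)\le3\,\mathrm{rad}(z,N)$ must precede the summation and that the drift bias telescopes to $w_1V_1$, the argument delivers the lemma as stated.
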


\begin{lemma}
With probability at least $1 - \frac{1}{T}$, we have $\forall j$,
$$ \left|\sum_{t=1}^T (c_{t,j} - \mathrm{LCB}_t(\bm C_{t,j})^\top \bm x_t)\right| \leq O( \sqrt{\log(mdT^2) B_j} + \sqrt{\log(mdT^2) m} \cdot \frac{T}{\sqrt{w_{2}}} + w_{2} V_2 + \log(mdT^2). $$
If $V_{2} > 0$ and we set $w_{2} = \Theta(m ^{\frac{1}{3}} V_{2}^{-\frac{2}{3}} T^{\frac{2}{3}} \log^{\frac{1}{3}}(mdT^2) )$, then
$$ \left|\sum_{t=1}^T (c_{t,j} - \mathrm{LCB}_t(\bm C_{t,j})^\top \bm x_t )\right| = O(\sqrt{\log(mdT^2) B}) + O(m ^{\frac{1}{3}} V_{2}^{\frac{1}{3}} T^{\frac{2}{3}} \log^{\frac{1}{3}}(mdT^2) ) \coloneqq \beta_{2}, $$
with probability at least $1 - \frac{1}{T}$.
\end{lemma}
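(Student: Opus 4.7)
The plan is to mirror the proof of Lemma \ref{lemma:reward_consumption} for the consumption coordinate, but swap the Hoeffding-type deviation (Lemma \ref{lemma:concentration}) for the variance-aware Bernstein-type deviation in Lemma \ref{lemma:concentration_alter}. The payoff is that the leading non-window term will scale as $\sqrt{\log(mdT^2)\,B}$ rather than $\sqrt{\log(mdT^2)\,T}$, which is what matches the stationary BwK rate of \citet{agrawal2014bandits} once we specialize to $V_2=0$. The structure will be: (i) a sliding-window concentration inequality giving $|\hat C^{(w_2)}_{t,j,i}-C_{t,j,i}|$, (ii) a blocking/Cauchy--Schwarz bound on the sum of $\mathrm{rad}(\cdot,\cdot)$ terms along the realized trajectory, (iii) two auxiliary applications of Lemma \ref{lemma:concentration_alter} to pass from $\mathrm{LCB}_{t,i_t}$ to $\mathrm{LCB}_t^\top \bm x_t$ and from $C_{t,j,i_t}$ to the realized $c_{t,j}$.

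First, I would prove the sliding-window version of Lemma \ref{lemma:concentration_alter} analogously to Lemma \ref{lemma:normalized}: for each $(t,j,i)$, with high probability
\[
|\hat C^{(w_2)}_{t,j,i} - C_{t,j,i}| \leq \mathrm{rad}\bigl(\hat C^{(w_2)}_{t,j,i},\, n^{(w_2)}_{t,i}+1\bigr) + \sum_{s=1\vee(t-w_2)}^{t-1}\|\bm C_{s,j}-\bm C_{s+1,j}\|_\infty,
\]
where the last sum is the drift introduced by the fact that samples inside the window have different means. The factor-$3$ slack in Lemma \ref{lemma:concentration_alter} then implies that $\mathrm{LCB}_{t,j,i}(\bm C_t) \le C_{t,j,i}+ (\text{drift})$ and, on the other direction, $C_{t,j,i} - \mathrm{LCB}_{t,j,i}(\bm C_t) \le 4\,\mathrm{rad}(\hat C^{(w_2)}_{t,j,i},\, n^{(w_2)}_{t,i}+1) + (\text{drift})$.

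Second, I would sum these deviations over $t=1,\dots,T$ along the played arms. The drift contributions telescope to at most $w_2 V_2$ exactly as in the reward proof. The $\mathrm{rad}$ contributions split into a $\gamma/N$ piece and a $\sqrt{\gamma \hat C/N}$ piece. The $\gamma/N$ piece, after partitioning the horizon into $\lceil T/w_2\rceil$ disjoint windows and summing $\sum_{n\ge 1} 1/n = O(\log w_2)$ per arm per window, totals $O(m\log(mdT^2)\,T/w_2)$. For the $\sqrt{\gamma \hat C/N}$ piece I would apply Cauchy--Schwarz twice:
\[
\sum_t \sqrt{\hat C_{t,j,i_t}^{(w_2)}/n^{(w_2)}_{t,i_t}}
\;\le\; \sqrt{\sum_t \hat C_{t,j,i_t}^{(w_2)}}\cdot \sqrt{\sum_t 1/n^{(w_2)}_{t,i_t}},
\]
where the second factor is again bounded by $O(\sqrt{mT\log(w_2)/w_2})$ by a block-by-block Cauchy--Schwarz on $\sum_i \sqrt{N^{(w_2)}_{k,i}}\le \sqrt{mw_2}$, and the first factor is controlled by $\sum_t c_{t,j}\le B_j + O(1)$ (since the hard constraint forces termination at $B$) together with a final application of Lemma \ref{lemma:concentration_alter} to pass from $\hat C$ to $C$ and from $C$ back to realized $c$. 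The combination yields the lead term $O(\sqrt{\log(mdT^2)\,B_j})$ plus the $O(\sqrt{\log(mdT^2)\,m}\cdot T/\sqrt{w_2})$ window-concentration term, the drift term $w_2V_2$, and the additive $\log(mdT^2)$.

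The main obstacle is the self-referential Cauchy--Schwarz step: we want the leading concentration piece to scale with $\sqrt{B}$ rather than $\sqrt{T}$, which requires bounding $\sqrt{\sum_t \hat C^{(w_2)}_{t,j,i_t}}$ by $\sqrt{B}$ up to lower-order terms, but $\hat C$ itself is what we are trying to compare against $c$. The standard remedy is a two-step bootstrap: use the crude $\sum_t\hat C_{t,j,i_t}\le T$ bound to close a first pass showing $\sum_t c_{t,j}\le B$ up to $O(\sqrt{T})$ slack, then feed this back to replace $\sqrt{T}$ with $\sqrt{B}$ in the second pass. Once this is handled, the union bound over $j\in[d]$ and the several concentration events gives the first displayed inequality with probability at least $1-1/T$, and substituting $w_2=\Theta(m^{1/3}V_2^{-2/3}T^{2/3}\log^{1/3}(mdT^2))$ balances the $\sqrt{m}\,T/\sqrt{w_2}$ and $w_2V_2$ terms to produce $\beta_2$.
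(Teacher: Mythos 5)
Your proposal is correct and follows essentially the route the paper intends: the paper does not write out a proof of this lemma, merely asserting that it follows by rerunning the proof of Lemma \ref{lemma:reward_consumption} with the variance-aware deviation bound of Lemma \ref{lemma:concentration_alter} in place of Lemma \ref{lemma:concentration}, which is exactly your plan. Your treatment of the $\sqrt{B}$ leading term is also sound, and is in fact slightly easier than you suggest, since the realized consumption satisfies $\sum_{t=1}^{\tau} c_{t,j} \le B + 1$ deterministically by the hard stopping rule, so the factor-$3$ relation $\mathrm{rad}(\hat{C}, N) \le 3\,\mathrm{rad}(C, N)$ together with one martingale application of Lemma \ref{lemma:concentration_alter} closes the self-reference without a two-pass bootstrap.
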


Following \citep{agrawal2014bandits}, one can fulfill the stopping time analysis by shrinking the resource budget in $\mathrm{LP}(\mathrm{UCB}_t(\bm \mu_t), \mathrm{LCB}_t(\bm C_t))$ by $\epsilon$. By choosing an appropriate $\epsilon \geq \frac{\beta_{2}}{B}$, we can show that the stopping criteria will not be met before $T$ with a high probability, since
\begin{align*}
\sum_{t=1}^T c_{t,j} & \leq \sum_{t=1}^T\mathrm{LCB}_t(\bm C_{t,j}^\top \bm x_t) + \beta_{2}\\
& \leq (1-\epsilon) B + \beta_{2}\\
& \leq B.
\end{align*}

In fact, if we take the stationary case into consideration as well, $\epsilon$ should be slightly larger. Here we use the concentration results in \citep{agrawal2014bandits} for the stationary cases directly:
\begin{lemma}[Lemma B.4 in \cite{agrawal2014bandits}]
If $V_1 = 0$ and we set $w_1 = T$, then
$$ \left|\sum_{t=1}^T (r_t - \mathrm{UCB}_t(\bm \mu_t)^\top \bm x_t )\right| = O(\sqrt{\log(mT^2) m \sum_{t=1}^T r_t}) + O(m \log(mT^2) ) \coloneqq \alpha_1, $$
with probability at least $1 - \frac{1}{T}$.
\end{lemma}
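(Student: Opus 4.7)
The plan is to reproduce the proof of Lemma B.4 in \cite{agrawal2014bandits}, whose structure is essentially the standard UCB-for-BwK analysis specialized to stationarity. Since $V_1=0$ gives $\mu_{t,i}\equiv \mu_i$ and $w_1=T$ makes the sliding-window estimator a plain empirical mean of $n_{t,i}^{(T)}$ i.i.d.\ samples with mean $\mu_i$, the whole argument reduces to concentration plus a careful decomposition. First I would establish uniform UCB validity: setting $\gamma=\Theta(\log(mT^2))$ in Lemma \ref{lemma:concentration_alter} and union bounding over $t\in[T]$ and $i\in[m]$, with probability at least $1-1/(3T)$,
$$0 \le \mathrm{UCB}_{t,i}(\bm\mu_t) - \mu_i = O\bigl(\mathrm{rad}(\mu_i,\, n_{t,i}^{(T)}+1)\bigr) \quad \text{for all } t,i.$$

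Next I would decompose the quantity of interest into three sums of martingale differences:
$$\sum_{t=1}^T \bigl(r_t - \mathrm{UCB}_t(\bm\mu_t)^\top \bm x_t\bigr) = \underbrace{\sum_{t=1}^T (r_t - \mu_{i_t})}_{A} + \underbrace{\sum_{t=1}^T (\mu_{i_t} - \mathrm{UCB}_{t,i_t}(\bm\mu_t))}_{B} + \underbrace{\sum_{t=1}^T \bigl(\mathrm{UCB}_{t,i_t}(\bm\mu_t) - \mathrm{UCB}_t(\bm\mu_t)^\top \bm x_t\bigr)}_{C}.$$
The martingale version of Lemma \ref{lemma:concentration_alter} applied to $A$ and $C$ (with conditional-mean variance controlled by $\sum_t r_t$ up to the $\mathrm{rad}$-inflation of $\mathrm{UCB}_{t,i}$) yields $|A|, |C| = O\bigl(\sqrt{\gamma \sum_t r_t}\bigr) + O(\gamma)$ after absorbing the $\mathrm{rad}$ tail of $C$ into $B$.

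The main work, and the step I expect to be the obstacle, is bounding $B$. Re-indexing by arm and using the uniform UCB validity above, $|B| \le O\bigl(\sum_i \sum_{n=1}^{N_i} \mathrm{rad}(\mu_i,\, n+1)\bigr)$ where $N_i$ is the total number of plays of arm $i$. Splitting $\mathrm{rad}(\mu_i,n+1) = \sqrt{\gamma \mu_i/(n+1)} + \gamma/(n+1)$ and using $\sum_{n=1}^{N_i} 1/\sqrt{n} = O(\sqrt{N_i})$ and $\sum_{n=1}^{N_i} 1/n = O(\log N_i)$, each arm contributes $O\bigl(\sqrt{\gamma \mu_i N_i} + \gamma \log N_i\bigr)$. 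Summing over arms with Cauchy--Schwarz on the $\sqrt{\cdot}$ piece,
$$\sum_{i=1}^m \sqrt{\gamma \mu_i N_i} \le \sqrt{\gamma m \cdot \textstyle\sum_{i=1}^m \mu_i N_i}\,,$$
and one more martingale concentration converts $\sum_i \mu_i N_i = \sum_t \mu_{i_t}$ into $\sum_t r_t$ up to an $O(\sqrt{\gamma \sum_t r_t})$ slack. Combining the three pieces and substituting $\gamma=\Theta(\log(mT^2))$ gives the stated bound with probability at least $1-1/T$.

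The delicate point is keeping the bound data-dependent in $\sum_t r_t$ rather than collapsing to $\sqrt{T}$: this sharper dependence is what inherits the $O(\sqrt{\mathrm{OPT}})$ shape of stochastic BwK and keeps $\epsilon$ in the earlier shrinkage argument small enough to be absorbed. The technical care lies in treating the two additive pieces of $\mathrm{rad}$ separately and applying Cauchy--Schwarz only to the $\sqrt{\mu_i/n}$ part; a naive $\sum_i \sqrt{N_i}\le\sqrt{mT}$ would be too weak and would destroy the match with \cite{agrawal2014bandits}.
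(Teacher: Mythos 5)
The paper does not actually prove this statement: it is imported verbatim from the prior work, as the bracketed attribution ``Lemma B.4 in \cite{agrawal2014bandits}'' and the surrounding sentence (``we use the concentration results in \citep{agrawal2014bandits} for the stationary cases directly'') make explicit. So there is no in-paper proof to compare against; what you have written is a reconstruction of the cited argument. Your reconstruction follows the standard Agrawal--Devanur/Badanidiyuru-et-al.\ route and is sound in outline: uniform validity of the $\mathrm{rad}$-based UCB via Lemma \ref{lemma:concentration_alter} plus a union bound, the three-term martingale decomposition $A+B+C$, the per-arm re-indexing of $B$, the split of $\mathrm{rad}$ into its $\sqrt{\gamma\mu_i/n}$ and $\gamma/n$ pieces with Cauchy--Schwarz applied only to the former, and the self-bounding step converting $\sum_i \mu_i N_i$ into $\sum_t r_t$. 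You are also right that keeping the bound data-dependent in $\sum_t r_t$ (rather than $\sqrt{mT}$) is the whole point, since it is what lets the subsequent shrinkage argument work with $\epsilon=(\alpha_2+\beta_2)/B$.

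Two concrete points to tighten. First, the harmonic piece: $\sum_{n=1}^{N_i}\gamma/n = O(\gamma\log N_i)$ per arm, so summing over arms yields $O(m\gamma\log T)$, not the displayed $O(m\gamma)=O(m\log(mT^2))$; the same extra $\log T$ appears in the corresponding statements of \cite{badanidiyuru2013bandits}. Within this paper the discrepancy is harmless (the term is absorbed into $\tilde O(\cdot)$ and into the requirement $\log(mdT^2)m=O(B)$ up to logs), but as written your argument proves a slightly weaker additive term than the lemma states, and you should either carry the extra logarithm or say where it is absorbed. Second, the bound on $C$ deserves one more sentence: the summands $\mathrm{UCB}_{t,i}(\bm\mu_t)$ lie in $[0,\,1+O(\sqrt{\gamma})]$ rather than $[0,1]$, so Lemma \ref{lemma:concentration_alter} must be applied after rescaling (or, as the paper does in the analogous Hoeffding-based computation leading to inequality (\ref{eq:reward3}), by tracking the enlarged range explicitly), and the resulting conditional means must then be related back to $\sum_t r_t$ through the already-established inflation $\mathrm{UCB}_{t,i}-\mu_i=O(\mathrm{rad}(\mu_i,n_{t,i}+1))$. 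Neither issue changes the structure of the proof, but both need to be handled to make the constants and logarithmic factors come out as claimed.
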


\begin{lemma}[Lemma B.5 in \cite{agrawal2014bandits}]
If $V_{2} = 0$ and we set $w_{2} = T$, then
$$ \left|\sum_{t=1}^T (c_{t,j} - \mathrm{LCB}_t(\bm C_{t,j})^\top \bm x_t )\right| = O(\sqrt{\log(mdT^2) m B}) + O(m\log(mdT^2)) \coloneqq \alpha_{2}, \quad \forall j$$
with probability at least $1 - \frac{1}{T}$.
\end{lemma}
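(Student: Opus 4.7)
The plan is to follow the standard Freedman-based analysis from the BwK literature and decompose the per-step deviation $c_{t,j} - \mathrm{LCB}_t(\bm C_{t,j})^\top \bm x_t$ into three pieces that can be controlled separately:
$$c_{t,j} - \mathrm{LCB}_t(\bm C_{t,j})^\top \bm x_t = \bigl(c_{t,j} - C_{t,j,i_t}\bigr) + \bigl(C_{t,j,i_t} - \mathrm{LCB}_{t,i_t}(\bm C_{t,j})\bigr) + \bigl(\mathrm{LCB}_{t,i_t}(\bm C_{t,j}) - \mathrm{LCB}_t(\bm C_{t,j})^\top \bm x_t\bigr).$$
The first and third pieces are martingale differences with respect to the natural filtration (with conditional means $C_{t,j,i_t}$ and $\mathrm{LCB}_t(\bm C_{t,j})^\top \bm x_t$, respectively), whereas the middle piece is controlled directly by the LCB construction. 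This decomposition reduces the problem to two standard concentration tasks and one deterministic bound.

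For the two martingale terms I would apply Lemma \ref{lemma:concentration_alter} in its martingale form with $\gamma = \Theta(\log(mdT^2))$. Observe that $\sum_{t=1}^T C_{t,j,i_t} \le B$ up to the stopping time (since the algorithm halts at budget exhaustion), so the conditional-variance proxy for the noise term is at most $B$, yielding a deviation of $O(\sqrt{\gamma B} + \gamma)$. The sampling term is treated identically: $\mathrm{LCB}_{t,i_t}(\bm C_{t,j})$ lies in $[-O(\sqrt{\gamma}),\,1+O(\sqrt{\gamma})]$ and its conditional mean under $\bm x_t$ is $\mathrm{LCB}_t(\bm C_{t,j})^\top \bm x_t$, so its quadratic variation is again bounded by the cumulative $C_{t,j,i_t}$ up to lower-order corrections, producing the same $O(\sqrt{\gamma B}) + O(\gamma)$ bound after a union bound over $t$.

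The main work is in the bias term. Using the definition $\mathrm{LCB}_{t,i_t}(\bm C_{t,j}) = \hat C_{t,j,i_t}^{(T)} - 2\,\mathrm{rad}(\hat C_{t,j,i_t}^{(T)}, n_{t,i_t}^{(T)}+1)$ together with the two-sided inequality $|\hat C - C| \le \mathrm{rad}(\hat C, n+1) \le 3\,\mathrm{rad}(C, n+1)$ from Lemma \ref{lemma:concentration_alter}, and taking a union bound over the at most $mT$ empirical averages involved, one obtains
$$C_{t,j,i_t} - \mathrm{LCB}_{t,i_t}(\bm C_{t,j}) \;\le\; O\!\left(\sqrt{\tfrac{\gamma\, C_{t,j,i_t}}{n_{t,i_t}^{(T)}+1}} + \tfrac{\gamma}{n_{t,i_t}^{(T)}+1}\right).$$
Summing over $t$, Cauchy--Schwarz gives $\sum_{t} \sqrt{C_{t,j,i_t}/(n_{t,i_t}^{(T)}+1)} \le \sqrt{\sum_t C_{t,j,i_t}}\cdot\sqrt{\sum_t 1/(n_{t,i_t}^{(T)}+1)} \le \sqrt{B}\cdot\sqrt{m\log T}$, where the last step uses that with $w_2 = T$ the counter $n_{t,i}^{(T)}$ is monotone in $t$ so $\sum_{t:\,i_t=i} 1/(n_{t,i}^{(T)}+1)$ is a harmonic sum $\le \log T$ per arm. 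The remaining linear term contributes $\sum_t \gamma/(n_{t,i_t}^{(T)}+1) = O(m\gamma \log T)$. Assembling the three pieces then yields the claimed bound $O(\sqrt{\log(mdT^2)\, mB}) + O(m\log(mdT^2))$.

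The hardest technical point is justifying the Freedman-style concentration in the presence of the data-dependent stopping time at which a resource is exhausted; I would handle this by extending the process to the full horizon $T$ via indicator variables and applying a stopped-martingale version of Lemma \ref{lemma:concentration_alter}, so that the variance proxy can always be upper-bounded by $B$ deterministically. The $\sqrt{m}$ factor in the final bound is produced only by the Cauchy--Schwarz split above, which is where the choice $w_2 = T$ (giving a cleanly telescoping harmonic sum) becomes essential, in contrast to the sliding-window version which loses an extra $T/\sqrt{w_2}$ factor.
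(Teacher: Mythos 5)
The paper does not actually prove this statement: it is imported verbatim as Lemma B.5 of \cite{agrawal2014bandits}, with the surrounding text saying the stationary-case concentration results are ``used directly.'' Your reconstruction therefore cannot be compared to an in-paper proof, but it does follow the intended Agrawal--Devanur argument: the three-way decomposition into two martingale terms and one bias term, Freedman-type control via $\mathrm{rad}(\cdot,\cdot)$, and a summation of confidence radii over the trajectory. Two points need repair. First, your Cauchy--Schwarz is taken over the time index, giving $\sum_t \sqrt{C_{j,i_t}/(n_{t,i_t}+1)} \le \sqrt{B}\,\sqrt{m\log T}$ and hence a leading term $O(\sqrt{\log(mdT^2)\,\log T\; mB})$, which misses the stated $O(\sqrt{\log(mdT^2)\, mB})$ by a $\sqrt{\log T}$ factor. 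The standard fix is to group by arm first and use stationarity ($C_{t,j,i}\equiv C_{j,i}$ since $V_2=0$): for each arm $i$, $\sum_{k=1}^{N_i}\sqrt{C_{j,i}/k}\le 2\sqrt{C_{j,i}N_i}$, and then Cauchy--Schwarz over the $m$ arms gives $\sum_i\sqrt{C_{j,i}N_i}\le\sqrt{m\sum_i C_{j,i}N_i}=\sqrt{m\sum_t C_{j,i_t}}$, with no logarithm. Second, your variance proxy $\sum_t C_{j,i_t}\le B$ is not immediate: the budget controls the realized consumption $\sum_t c_{t,j}$ (and, in this appendix's shrunken-LP variant, $\sum_t \mathrm{LCB}_t(\bm C_{t,j})^\top\bm x_t\le(1-\epsilon)B$), not the sum of conditional means, so you must either invoke the multiplicative direction $\mathrm{rad}(\bar Z,N)\le 3\,\mathrm{rad}(z,N)$ of Lemma \ref{lemma:concentration_alter} to pass between empirical and true means, or carry $\sum_t C_{j,i_t}$ symbolically and solve the resulting self-bounding inequality at the end. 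Both repairs are routine, and with them your sketch does deliver the claimed bound; the remaining discrepancies (e.g.\ your $O(m\gamma\log T)$ versus the stated $O(m\gamma)$ in the lower-order term) are at the level of logarithmic bookkeeping that the paper itself does not track carefully.
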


One can choose
$$ \epsilon = \frac{\alpha_{2} + \beta_{2}}{B}, $$
so that the requirement is met. The shrunken LP will decrease the LP values up to $(1-\epsilon)$.

Then the final result goes as follows:
\begin{theorem}
\label{thm:refined_upper}
For the non-stationary bandits with knapsacks (NBwK) problem, if $B$ is not too small, i.e.
$$ m ^{\frac{1}{3}} V_2^{\frac{1}{3}} T^{\frac{2}{3}} \log^{\frac{1}{3}}(mdT^2)  = O(B), \quad \log(mdT^2) m = O(B), $$
and the dual prices are upper bounded by $\bar{q}$, then with probability at least $1-\frac{1}{T}$, the regret of the refined sliding-window confidence bound algorithm with $w_1$, $w_{2}$, and $\epsilon$ selected as suggested (denoted by $\pi_3$)
is upper bounded as
\begin{align*}
    \mathrm{Reg}(\pi_3, T) &= O(\ (\sqrt{\frac{m}{B}} \mathrm{OPT}(T) + \sqrt{m \mathrm{OPT}(T)} + m \sqrt{\log(mdT^2)} \ )\cdot \sqrt{\log (mdT^2)} \\
    &\phantom{O=} + m^{\frac{1}{3}} V_{1}^{\frac{1}{3}} T^{\frac{2}{3}} \sqrt[3]{\log (mT^2)} + \bar{q}  d m^{\frac{1}{3}} V_{2}^{\frac{1}{3}} T^{\frac{2}{3}}\sqrt[3]{\log (mdT^2)} \cdot   \\
    &\phantom{O=}+ W_1 + \bar{q} W_2).
\end{align*}
\end{theorem}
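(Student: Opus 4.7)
The plan is to follow the template of the proof of Theorem \ref{thm:upper} but with two modifications: (i) replace the Hoeffding-style confidence bounds by the variance-aware Kleinberg-style bounds in Lemma \ref{lemma:concentration_alter}, which yields the tighter per-step deviations $\alpha_1,\beta_1$ (stationary) and $\alpha_2,\beta_2$ (non-stationary) stated just above the theorem; and (ii) introduce the shrinkage factor $(1-\epsilon)$ inside the per-step LP, with $\epsilon = (\alpha_2+\beta_2)/B$, in order to certify that the algorithm's stopping time $\tau$ equals $T$ with high probability. Together with Proposition \ref{prop:LP}, which is the tool that converts between $\sum_{t=1}^T \mathrm{LP}(\bm\mu_t,\bm C_t)$ and $\mathrm{LP}(\{\bm\mu_t\},\{\bm C_t\},T)\ge\mathrm{OPT}(T)$, these will assemble into the claimed bound.

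First I would establish non-termination. Summing the consumption concentration bounds (with both the stationary $\alpha_2$ and the non-stationary $\beta_2$ applying, depending on whether $V_2=0$), for each resource $j$,
\begin{align*}
\sum_{t=1}^T c_{t,j} \;\le\; \sum_{t=1}^T \mathrm{LCB}_t(\bm C_{t,j})^\top \bm x_t + (\alpha_2+\beta_2) \;\le\; (1-\epsilon)B + (\alpha_2+\beta_2) \;\le\; B,
\end{align*}
where the middle inequality uses feasibility of $\bm x_t$ in the shrunken single-step LP and the last uses the choice of $\epsilon$; the assumption that $B$ is not too small guarantees $\epsilon\le 1$. Next I would lower-bound the collected reward: applying the reward concentration lemmas,
$\sum_{t=1}^T r_t \ge \sum_{t=1}^T \mathrm{UCB}_t(\bm\mu_t)^\top\bm x_t - (\alpha_1+\beta_1)$.
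Because $\bm x_t$ optimizes the shrunken LP built from $\mathrm{UCB}_t(\bm\mu_t)$ and $\mathrm{LCB}_t(\bm C_t)$, the dual-shift argument in the proof of Theorem \ref{thm:upper} (which uses Lemma \ref{lemma:confidence_bound} to transfer from the confidence bounds back to the true $\bm\mu_t,\bm C_t$) gives
$\sum_{t=1}^T \mathrm{UCB}_t(\bm\mu_t)^\top \bm x_t \ge (1-\epsilon)\sum_{t=1}^T \mathrm{LP}(\bm\mu_t,\bm C_t) - (w_1 V_1 + \bar q\, d\, w_2 V_2)$, with the two sliding-window non-stationarity penalties coming from the $\sum_{s=t-w}^{t-1}\|\bm\mu_s-\bm\mu_{s+1}\|_\infty$ terms in Lemma \ref{lemma:confidence_bound}.

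Chaining these through Proposition \ref{prop:LP}, i.e.\ $\sum_{t=1}^T\mathrm{LP}(\bm\mu_t,\bm C_t)\ge \mathrm{LP}(\{\bm\mu_t\},\{\bm C_t\},T) - 2(W_1+\bar q W_2)\ge \mathrm{OPT}(T)-2(W_1+\bar q W_2)$, yields
\begin{align*}
\mathrm{Reg}(\pi_3,T) \;\le\; \epsilon\cdot\mathrm{OPT}(T) + (\alpha_1+\beta_1) + w_1 V_1 + \bar q\, d\, w_2 V_2 + 2(W_1+\bar q W_2).
\end{align*}
Substituting $\epsilon=(\alpha_2+\beta_2)/B$ contributes the $\sqrt{m/B}\,\mathrm{OPT}(T)\sqrt{\log(mdT^2)}$ and $m\log/B\cdot\mathrm{OPT}(T)$ terms, while the stated choices of $w_1,w_2$ balance $w_i V_i$ against the $T/\sqrt{w_i}$ terms inside $\beta_i$, producing the $m^{1/3}V_1^{1/3}T^{2/3}$ and $\bar q d\, m^{1/3}V_2^{1/3}T^{2/3}$ terms. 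The main obstacle I anticipate is bookkeeping around the $\epsilon\cdot\mathrm{OPT}(T)$ contribution: in particular, showing that $\bar q\beta_2\,\mathrm{OPT}(T)/B$ reduces (absorbing $\mathrm{OPT}(T)\le T$ using the assumption that $B$ dominates $m^{1/3}V_2^{1/3}T^{2/3}\log^{1/3}$ and $m\log$) to the claimed $\bar q\, d\, m^{1/3}V_2^{1/3}T^{2/3}$ rate, and ensuring that in the stationary regime the sliding window collapses to $w_i=T$ so that the bounds continue to match \cite{agrawal2014bandits}. Handling the stationary/non-stationary case split uniformly via the $\max\{\alpha_i,\beta_i\}$ definitions is the cleanest route through this bookkeeping.
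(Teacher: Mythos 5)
Your proposal follows essentially the same route as the paper: the paper's Appendix \ref{sec_Alternative} likewise swaps in the $\mathrm{rad}$-based confidence bounds of Lemma \ref{lemma:concentration_alter} to get the $\alpha_i,\beta_i$ deviations, shrinks the per-step LP by $\epsilon=(\alpha_2+\beta_2)/B$ to certify $\tau=T$ via exactly your chain $\sum_t c_{t,j}\le(1-\epsilon)B+(\alpha_2+\beta_2)\le B$, and then reassembles the regret through Proposition \ref{prop:LP} and the dual-shift step of Theorem \ref{thm:upper}, with the shrinkage costing $\epsilon\cdot\mathrm{OPT}(T)$. The bookkeeping you flag around absorbing $\beta_2\,\mathrm{OPT}(T)/B$ is indeed the part the paper leaves implicit (it relies on the stated largeness assumptions on $B$), but your plan is the intended argument.
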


The result meets the upper bounds in \citep{agrawal2014bandits} and \citep{badanidiyuru2013bandits} up to logarithmic factors when the environment becomes stationary, i.e., $V_1=V_2=W_1=W_2=0$. In addition, the regret bound is expressed in terms of OPT$(T).$ Also note that this upper bound in Theorem \ref{thm:refined_upper} does not rely on the linear growth assumption (Assumption \ref{assumption:linear}), but it requires that $B = \Omega(V_2^{\frac{1}{3}}T^{\frac{2}{3}})$ at least for $V_2>0$.

\section{Proofs of Section \ref{sec_exten}}

\label{sec_proof_OCO}

Algorithm \ref{alg:virtualqueue} describes the Virtual Queue algorithm by \cite{neely2017online}. We first examine its performance with respect to deterministically adversarial constraints $g_{t,i}$'s. We emphasize that in this case, the nature is allowed to choose $g_{t,i}$'s after observing the player's decisions $\{x_1,\dots,x_{t-1}\}$ as long as the global non-stationarity budget is not violated.

\begin{algorithm}[ht!]
\caption{Virtual Queue Algorithm for OCOwC \cite{neely2017online}}
\label{alg:virtualqueue}
\begin{algorithmic}[1]
\Require Initial decision $\bm x_0$. Time horizon $T$. Parameters $\beta \gets 1/\sqrt{T}, \alpha \gets 1/T$.
\Ensure Decision sequence $\{\bm x_t\}$.
\State Initialize decision $\bm x_0 \gets \bm x_0$. Initialize virtual queue $Q_i(0) \gets 0, \ Q_i(1) \gets 0$.
\While{$1 \leq t \leq T$}
    \State Update virtual queue length if $t\geq 2$:        $$ Q_i(t) \gets \max\left\{0,Q_i(t-1) + g_{t-2,i}(\bm x_{t-2}) + \nabla g_{t-2,i}(\bm x_{t-2})^\top (\bm x_{t-1} - \bm x_{t-2})\right\}. $$
    \State Choose $\bm x_t$ as the solution of
        $$ \argmin_{\bm x\in \mathcal{X}} \left[\beta \nabla f_{t-1}(\bm x_{t-1})+ \sum_{i=1}^d Q_i(t)\nabla g_{t-1,j}(\bm x_{t-1})\right]^\top \bm x + \alpha \|\bm x - \bm x_{t-1}\|_2^2. $$
    \State Observe $\nabla f_t(\bm x_t), \nabla g_{t,i}(\bm x_t), i \in [d]$.
\EndWhile
\end{algorithmic}
\end{algorithm}

We first note that the benchmark taken into consideration in \cite{neely2017online} is $\mathrm{OPT}^\prime (T)$, which is based on a more restricted feasible solution set 
$$\mathcal{A}^\prime \coloneqq \{\bm x \in \mathcal{X}: \forall t \in [T], \ i \in [d],\  g_{t,i}(\bm x) \leq 0\}$$
compared to the feasible solution set considered in our paper:
$$\mathcal{A} \coloneqq \{\bm x \in \mathcal{X}: \forall i \in [d], \ \sum_{t=1}^T g_{t,i}(\bm x) \leq 0\}.$$
Thus our analysis complements the results therein with a more natural benchmark given the global nature of the constraints.

\subsection{Proof of Proposition \ref{prop:exten}}
\label{apd:prop_exten}

\begin{proof}
It follows directly from the fact that $\mathcal{A}^\prime \subset \mathcal{A}$ that 
$$\mathrm{OPT}^\prime (T) \geq \mathrm{OPT}(T).$$

We first prove the upper bound.
Recall that we denote the primal optimal solution of the standard optimization problem by $\bm x^*$ and that of the restricted optimization problem by $\bm x^{*'}$. Then $\bm x^*$ is a primal feasible solution to the perturbed restricted optimization problem
\begin{align*}
    \min_{\bm x\in \mathcal{X}} \ & \sum_{t=1}^T f_t(\bm x)\\
    \text{s.t. }& g_{t,i}(\bm x) \leq (g_{t,i}(\bm x^*))^+, \quad \forall t \in [T],\  i \in [d].
\end{align*}
Denote the optimal value of the perturbed problem by $\mathrm{OPT}^{\prime \prime}(T)$.

Since we have assumed the Slater's condition in Assumption \ref{assumption:slater}, the strong duality holds, and we have
\begin{align*}
    \mathrm{OPT}^\prime(T) - \mathrm{OPT}(T) & \leq \mathrm{OPT}^\prime(T) - \mathrm{OPT}^{\prime \prime}(T)\\
    &\leq \bar{q} \sum_{t=1}^T \sum_{i=1}^d (g_{t,i}(\bm x^*))^+\\
    &= \bar{q} \sum_{i=1}^d \sum_{t=1}^T (g_{t,i}(\bm x^*))^+\\
    &\leq \bar{q} \sum_{i=1}^d (T (\bar{g}_i(\bm x^*))^+ + \sum_{t=1}^T \|g_{t,i} - \bar{g}_i\|_{\infty})\\
    &= \bar{q} \sum_{t=1}^T\sum_{i=1}^d \|g_{t,i}- \bar{g}_i\|_{\infty} \\
    &= \bar{q} W.
\end{align*}

As for the lower bound, we construct an example based on the idea of the third example given in Theorem \ref{thm:lower}. The decision set $\mathcal{X}$ is $[0,1]$. We set the target functions and constraint functions as
$$ f_t(x) = -r x, $$
$$ g_t(x) = (b + \Delta \mathbbm{1}\{t \leq \frac{T}{2}\}) x - \frac{1}{2}b. $$
Then the example is now an OCOwC instance with global non-stationarity budgets $W = \Theta(\Delta T)$ and $\bar{q} = \frac{r}{b}$. The restricted optimal value $\mathrm{OPT}^\prime(T)$ is now
$$ \mathrm{OPT}^\prime(T) = -(\frac{T}{4} \cdot \frac{r b}{b + \Delta} + \frac{T}{4} \cdot r),$$
while the standard optimal value $\mathrm{OPT}(T)$ is now
$$ \mathrm{OPT}(T) = -\frac{T}{2} \cdot r. $$
If we assume that $\Delta = o(b)$ (which can always be satisfied if we let $b$ to be sufficiently small), then
$$ \mathrm{OPT}^\prime (T) - \mathrm{OPT}(T) = \frac{T}{4} \cdot r \cdot \frac{\Delta}{b + \Delta} = \Omega(\bar{q} W). $$
\end{proof}

\subsection{Proof of Theorem \ref{thm:exten}}

The Virtual Queue algorithm (Algorithm \ref{alg:virtualqueue}) proposed by \cite{neely2017online} incurs at most $O(\sqrt{T})$ expected regret against the restricted static benchmark $\mathrm{OPT}^\prime(T)$ under Assumption \ref{assumption:slater}. Furthermore, the expected overall constraints violation is bounded by $O(\sqrt{T})$ for each $i \in [d]$ (see detailed analysis in Theorem 1, 3, 4 in \cite{neely2017online}). While their analysis is against the benchmark $\mathrm{OPT}^\prime(T)$, we here present a result against the stronger benchmark $\mathrm{OPT}(T)$ (Theorem \ref{thm:exten}).

\begin{proof}
As is shown in Theorem 1 in \cite{neely2017online}, the Algorithm \ref{alg:virtualqueue} achieves
$$ \sum_{t=1}^T f_t(\bm X_t) \leq \mathrm{OPT}^\prime (T) + O(\sqrt{T}). $$
By Proposition \ref{prop:exten},
$$ \mathrm{OPT}^\prime - \mathrm{OPT}(T) \leq \bar{q} W. $$
Combining above two inequalities together, we have
$$ \mathrm{Reg}_1(\pi_3, T) \leq O(\sqrt{T}) + \bar{q}W. $$
As for constraint violation, we shall directly apply Theorem 3 in \cite{neely2017online} such that
$$ \mathrm{Reg}_2(\pi_3, T) = \sum_{i=1}^d (\sum_{t=1}^T g_{t,i}(\bm X_t))^+ \leq d O(\sqrt{T}). $$
\end{proof}

\subsection{Extension to the Stochastic Setting}

The results in Theorem \ref{thm:exten} can be further extended to a stochastic setting where the adversary is oblivious of our decisions. That is, the distributions that govern the random functions $g_{t,i}$ can be chosen adversarially in advance but cannot be adaptively changed according to the decisions $\bm{x}_t$'s/ 

We modify the performance measures accordingly as follows
$$ \mathrm{Reg}_1(\pi, T) \coloneqq \mathbb{E}[\sum_{t=1}^T f_t(\bm X_t) - \sum_{t=1}^T f_t(\bm x^{*'})], $$
$$ \mathrm{Reg}_2(\pi, T) \coloneqq \sum_{i=1}^d \mathbb{E}[(\sum_{t=1}^T g_{t,i}(\bm X_t))^+], $$
where $\bm x^{*'}$ is the minimizer of $\sum_{t=1}^T f_t(\bm x)$ on stochastic feasible set
$$ \left\{\bm x \in \mathcal{X}: \E[\sum_{t=1}^T g_{t,i}(\bm x)] \leq 0\right\}. $$
We define the certainty equivalent convex programs by
\begin{align*}
    \mathrm{OPT}(T) \coloneqq \ \min_{\bm{x}\in\mathcal{X}} \ & \sum_{t=1}^T f_t(\bm{x}) \\
    \text{s.t. }&\sum_{t=1}^T  \E[g_{t,i}(\bm{x})] \le 0, \text{ for } i \in[d],
\end{align*}
and
\begin{align*}
    \mathrm{OPT}^\prime(T) \coloneqq \ \min_{\bm{x}\in\mathcal{X}} \ & \sum_{t=1}^T f_t(\bm{x}) \\
    \text{s.t. }& \E[g_{t,i}(\bm{x})] \le 0, \text{ for } t \in [T], \ i \in[d].
\end{align*}
Note that Slater's condition in deterministic case can be relaxed to stochastic Slater's condition according to \cite{yu2017online}, i.e.
$$ \exists \bm x, \text{s.t. } \E[g_{t,i}(\bm x)] < 0, \quad \forall t, i. $$

There are some algorithms that are of the same type as the Virtual Queue Algorithm  in \cite{neely2017online} specifically designed for the case with i.i.d. $\bm g_t$'s (see \cite{yu2017online} and \cite{wei2020online}). To obtain similar $O(\sqrt{T})$ regret bound for the stochastic setting, aforementioned papers utilized some similar lemmas (Lemma 6 in \cite{neely2017online}, Lemma 6 in \cite{yu2017online}, and Lemma 8 in \cite{wei2020online}) which guarantee that
$$ \mathbb{E}\left[\sum_{i=1}^d Q_{i}(t) g_{t-1,i}(\bm x^{*'})\right] \leq 0. $$
For deterministic cases, the above conclusion is reduced to
$$ \sum_{i=1}^d Q_{i}(t) g_{t-1,i}(\bm x^{*'}) \leq 0, $$
which automatically holds by the fact that $Q_i(t) \geq 0$ and $g_{t,i}(\bm x^{*'}) \leq 0$. But for stochastic cases, the proof becomes trickier, since we relax the condition $g_{t,i}(\bm x^{*'}) \leq 0$ to $\E[g_{t,i}(\bm x^{*'})] \leq 0$.

In the aforementioned papers, the lemma is proved via factorization of the expectations. Since $Q_{i}(t)$ is determined by the previous $1\leq s \leq t-2$ steps' $f_s$'s and $g_{s,i}$'s (note that $\bm X_{t-1}$ is determined by previous $t-2$ steps' functions) while $\bm g_{t-1}$ is independent of previous $t-2$ steps, one can factorize the expectations so that the lemma holds. Specifically, we define two random processes $\{\xi^{t}\}_{t=1}^\infty$ and $\{\gamma^{t}\}_{t=1}^\infty$ such that $f_t(\bm x) = f(\bm x, \xi^t)$ and $\bm g_{t}(\bm x) = \bm g(\bm x, \gamma^t)$. We define a filtration $\{\mathcal{F}_t: t\geq 0\}$ with $\mathcal{F}_t \coloneqq \{\xi^\tau, \gamma^\tau\}_{\tau=1}^{t-1}$. Then taking conditional expectations of $\sum_{i=1}^d Q_{i}(t) g_{t-1,i}(\bm x^{*'})$ yields that
\begin{align*}
    \mathbb{E}[\sum_{i=1}^d Q_{i}(t) g_{t-1,i}(\bm x^{*'})|\mathcal{F}_{t-1}] & = \sum_{i=1}^d Q_{i}(t)\mathbb{E}[g_{t-1,i}(\bm x^{*'})|\mathcal{F}_{t-1}]\\
    & = \sum_{i=1}^d Q_{i}(t) \mathbb{E}[g_{t-1,i}(\bm x^{*'})] \leq 0
\end{align*}
where the last equality holds since $\bm g_t$'s are assumed to be i.i.d. in aforementioned papers.

From the above discussions, we can see that the result still holds without the i.i.d. assumption as long as 
$$ \E[\bm g_t | \mathcal{F}_{t}] = \E[\bm g_t]. $$
Such a requirement is automatically fulfilled when all $\bm g_t$'s are assumed to be distributed independent of previous $\{f_\tau, \bm g_\tau\}_{\tau=1}^{t-1}$'s. Then one can easily derive that the Virtual Queue Algorithm \ref{alg:virtualqueue} (denoted by $\pi_2$) induces a result of
$$ \mathrm{Reg}_1(\pi_2, T) \leq O(\sqrt{T}) + \bar{q}W, $$
$$ \mathrm{Reg}_2(\pi_2, T) \leq O(d\sqrt{T}), $$
where
$$ W \coloneqq \sum_{t=1}^T \sum_{i=1}^d \|\E[g_{t,i}] - \E[\bar{g}_j]\|_{\infty}, $$
and $\bar{q}$ is the upper bound for the optimal dual solutions of the certainty equivalent convex programs.

\begin{comment}
\section{Some old stuff from the main paper}

The question is, how large is the gap between $\mathrm{OPT}^\prime(T)$ and $\mathrm{OPT}(T)$? In this section, we will answer it by considering the dual problems. To make the dual arguments concrete, we will start with some assumptions.

\begin{definition}[Slater's Condition]
Consider a general convex optimization problem
\begin{align*}
    \min_{\bm x \in \mathcal{X}} \ & f_0(\bm x)\\
    \text{s.t. } & f_i(\bm x) \leq 0, \quad i = 1,\dots,m,\\
    & \bm A \bm x = \bm b.
\end{align*}
If there exists a strictly feasible solution $\bm x^\prime \in \mathcal{X}$, i.e. $f_i(\bm x^\prime) < 0,\ \forall i = 1,\dots,m, \ \bm A\bm x^\prime = \bm b$, then we say Slater's condition holds. Note that Slater's condition is a sufficient condition for strong duality.
\end{definition}

We are now ready to give some lower bounds examples and upper bound analysis with the global non-stationarity budgets as

To present an online algorithm for the OCOwC problem, we need another assumption which is quite common in OCO literature called \emph{boundedness} assumption.
\end{comment}

\end{document}